\documentclass[10pt]{article}
\usepackage[margin=1in]{geometry}
\usepackage[T1]{fontenc}
\usepackage{lmodern}
\PassOptionsToPackage{expansion=false}{microtype}
\usepackage{amsmath,amssymb,amsthm,mathtools,microtype,booktabs}
\usepackage[round]{natbib}
\usepackage{tikz}
\usetikzlibrary{arrows.meta,positioning}
\usepackage[hidelinks]{hyperref}
\newtheorem{theorem}{Theorem}[section]
\newtheorem{lemma}[theorem]{Lemma}
\newtheorem{proposition}[theorem]{Proposition}
\newtheorem{corollary}[theorem]{Corollary}
\newcommand{\E}{\mathbb E}
\newcommand{\Prb}{\mathbb P}
\newcommand{\ind}{\mathbf 1}
\newcommand{\OPT}{\mathrm{OPT}}
\newcommand{\ALG}{\mathrm{ALG}}
\newcommand{\LP}{\mathrm{LP}}
\newcommand{\LPB}{\mathrm{LPBud}}
\newcommand{\ph}{\phi}
\newcommand{\rh}{f^\star}
\newcommand{\F}{\mathcal F}
\newcommand{\R}{\mathcal R}
\newcommand{\Var}{\operatorname{Var}}

\begin{document}
\title{Budgeted Bayesian Bandits for Future Utilization}
\author{Sudipto Guha}
\date{}
\maketitle

\begin{abstract}
The multi-armed bandit problem models the trade-off between exploration and exploitation. In this paper, we study the variant where
exploration has a hard budget and precedes exploitation. 
Prior guarantees include a $1/3$ approximation by Guha and Munagala using Lagrangian duality, and a $4/27$ approximation by
Ma against a primal linear programming relaxation for a more general model. We show that the analyses of both algorithms are tight,
even when their relaxations are asymptotically exact. We provide a different efficient algorithm that achieves a $0.43$ approximation. We then
 provide an efficient algorithm whose guarantee approaches $1-e^{-1} \approx 0.63$ for bounded exchangeable reward observations in a non-vanishing regime,
 which is asymptotically tight against natural linear programming relaxations.
 The technique also gives certificates for stopping sequential model evaluation.
 \end{abstract}

\section{Introduction}

Sequential experimental design was
developed in the context of industrial quality control
with roots in Wald's sequential analysis
\citep{wald1945}. \citet{box1951} introduced response surface
methodology, in which each round of experiments chooses factor levels
based on the results so far. Parallel to that optimization-focused approach, sequential data monitoring and
adaptive sampling ideas subsequently became central to the clinical trial literature
\citep{anscombe1963, armitage1969}.
More recently, software-generated alternatives have brought sequential design
back to the fore: there are often many candidate models and only a
limited budget for evaluating them \citep{madani2004,madani2004colt}.
Related problems of measurement and learning arise
in fields such as network monitoring \citep{jung2004,yu2013software}
and consumer studies \citep{caro2007,bertsimas2007}.

Consider $n$ independent, but not necessarily identical, rested
Bayesian arms with known priors and a horizon $T$. A policy $\pi$
plays one arm at each step, observes its outcome, and updates the
posteriors. At the end of the horizon, the policy selects an arm with the
largest posterior utility. For posterior-mean as the utility, let $\theta_i$
denote the latent mean reward of arm $i$, and let $\F^\pi_T$ denote
the information available after exploration. Define the value of the optimum policy $\OPT_T$ as follows
\[
 V_T(\pi)
   = \E\!\left[\max_i \E[\theta_i \mid \F^\pi_T]\right],
 \qquad
 \OPT_T = \sup_\pi V_T(\pi),
\]
where the supremum is over policies satisfying the exploration budget.
Rewards obtained during exploration do not contribute to this objective and standard 
cumulative regret is a very different optimization.
Simple regret \citep{bubeck2009pure,audibert2010best} compares the selected arm with the best arm
under the true means \citep[Section~7.2]{bubeck2012regret}; averaged
over the prior it is $\E[\max_i\theta_i]-V_T(\ALG)$, which splits into
$\E[\max_i\theta_i]-\OPT_T$, the least regret achievable with the
budget, and $\OPT_T-V_T(\ALG)$, the loss from the chosen policy.

Consider $n=2T$ independent,
identically distributed arms, each with $\theta_i=1$ with probability
$1/n$ and $\theta_i=0$ otherwise, where one play reveals $\theta_i$.
This is a needle-in-a-haystack example with
non-vanishing expected reward where the perfect-information value tends to
$1-e^{-1}$, whereas $\OPT_T=1-(1-1/n)^{T+1}\to1-e^{-1/2}$.
The extra term in the exponent accounts for selecting an uninspected
arm after $T$ unsuccessful inspections. The budget thus allows
substantial value, but even the optimal policy has limiting simple regret
$e^{-1/2}-e^{-1}>0$.
If instead $1\ll T=o(n)$ with the same success probability $1/n$,
$\OPT_T\to0$. The optimal policy and a policy inspecting only one
arm then have the same limiting simple regret $1-e^{-1}$, although the
latter obtains only an $O(1/T)$ fraction of $\OPT_T$.
Measuring policies against $\OPT_T$ distinguishes policy quality
from this unavoidable informational loss.

The Gittins index \citep{whittle1983,gittins2011} is optimal for
discounted cumulative reward, but not for a fixed exploration budget
followed by a single terminal choice.
The problem can be formulated as a Markov decision process and solved
by dynamic programming \citep{bellman,blackwell1965} over the joint
product state space of the arms. This computation is exponential in
general, and no exact algorithm is known that runs in time polynomial
in the sum of the single-arm state spaces. \citet{madani2004} show
that the problem is NP-hard when plays of different arms have different
costs, and leave the unit-cost case, studied here, open.
We focus on efficient algorithms guaranteeing
$V_T(\ALG)\geq f \cdot \OPT_T$, where $\ALG$ denotes the algorithm
under consideration (by convention, for maximization problems, the factor $f \leq 1$, larger is better).
Our upper bounds on $\OPT_T$ come from linear programming relaxations
with optimal value $\LP\ge\OPT_T$; the ratio $\LP/\OPT_T$ is the
integrality gap.

The budgeted problem was posed by \citet{madani2004colt}, and
\citet{GM07,GM08} gave the first constant-factor approximations
through LP relaxations.
In their consolidated treatment, \citet[Section~7]{GM13}
(abbreviated GM13 henceforth) obtain a $1/3$ approximation by balancing a Lagrangian bound and executing
the resulting policies. \citet[Section~1.3]{Ma18} gives an LP-based
$4/27$ guarantee for a more general Markovian model that does not
require exchangeability. We show that these guarantees are
tight for the corresponding algorithms, even when their relaxations
are asymptotically exact
(Propositions~\ref{prop:gm13-tight} and~\ref{prop:ma-tight}).
We then provide a better algorithm achieving a factor $(1-e^{-2})/2\approx0.43$
(Section~\ref{sec:231}).

Our main technical tool is selection-preserving truncation for
Bayesian arms with exchangeable observations in $[0,1]$
(Section~\ref{sec:truncation}). The local policies of an optimal LP
solution can be truncated to depth
$k=O(\log(n/(\epsilon\LP))/(\epsilon^2\LP))$,
with no explicit dependence on $T$, while preserving each arm's
selection probability, not increasing its expected play count,
and losing at most $4\epsilon\LP$ reward in aggregate
(Corollary~\ref{lem:depth-k}). The short-policy result of
Section~\ref{sec:short} then incurs little additional loss beyond
the selection-only factor $1-e^{-1}$. Together these give
a $(1-e^{-1}-o(1))$ approximation against $\LP$ when
$T\cdot\LP\gg\log(n/\LP)$, and establish the tight asymptotic
integrality gap $1/(1-e^{-1})$ (Corollary~\ref{cor:asymptotic}).
We call this the \emph{non-vanishing regime}: for $n\ge2$,
$u=\log(n/\LP)/(T \cdot \LP)\to0$, so the LP value does not vanish on the scale
$\log(n/\LP)/T$. The regime allows $\LP\to0$ and $n>T$. The condition implies
that the truncation depth $k$ is $o(T)$, so the scheduling loss vanishes with the parameters set in
Corollary~\ref{cor:asymptotic}.
Outside it, two guarantees
 remain: Theorem~\ref{thm:231} gives $0.43$ on every instance, and for one-play
revelations Theorem~\ref{thm:short} with $k=1$ gives
$(1-e^{-1}-O(\sqrt{\log T/T}))\LP$ even when $\OPT\to0$, as in the needle family  with $T=o(n)$.
Over all exchangeable instances, the worst-case integrality gap of
$\LP$ is therefore at least $1/(1-e^{-1})$ and at most $2/(1-e^{-2})$;
we do not know whether instances outside the regime can exceed
$1/(1-e^{-1})$.

Section~\ref{sec:extensions} uses the dual of the truncated LP as a
computable, instance-specific upper bound on the value of further
evaluation. This gives a stopping rule with a simple-regret guarantee,
and variants for switching, setup and movement budgets.
Section~\ref{sec:terminal-scores} treats terminal utilities, satisficing,
and joint selection and prediction objectives.

\subsection{Related work}
\label{sec:related}

\paragraph{Budgeted learning and model selection.}
Budgeted learning of naive-Bayes classifiers
purchases attribute values of labelled examples under a hard
budget \citep{lizotte2003}. Proposals for this problem include Biased
Robin and single-feature lookahead
\citep{lizotte2003,kapoor2005hard}, reinforcement learning
\citep{kapoor2005rl}, and purchase rules derived from bandit
algorithms \citep{deng2007,deng2013}.
Without a hard budget, the related problem is active
feature-value acquisition \citep{saar2009}.
A bounded active classifier also pays for attributes used
at prediction time \citep{greiner2002}; its budgeted learning
therefore ends with a subset choice
\citep{kapoor2005bounded}. 
Modern model selection likewise allocates limited evaluation
to many candidate configurations \citep{li2018hyperband},
or evaluates pre-trained classifiers under a labelling budget
\citep{karimi2021online}. The latter considers dependence outside our independent-arm
model. 

\paragraph{Optimization formulations.}
Section~\ref{sec:markov-scope} separates selectable
martingale payoffs from the late payoffs in general Markov
models \citep{gupta2011,Ma18}; the latter require
time-indexed relaxations. \citet{goel2009ratio} proposed
a ratio index for budgeted learning.\footnote{Its arXiv
version was withdrawn in 2016 after the authors reported a
bug.}
When one inspection resolves an alternative and inspections
are priced, Pandora's box optimizes expected selected value
minus inspection costs \citep{weitzman1979}. LP relaxations of
adaptive policies are also used for stochastic knapsack
\citep{dean2008} and stochastic probing \citep{GGM10,gupta2013probing}.

\paragraph{Pure exploration, selection, and stopping.}
Bayesian ranking
and selection uses the expected largest posterior mean after
a measurement budget; the knowledge gradient chooses each
measurement by its one-step value \citep{frazier2008}.
Sampling and stopping are already coupled in this literature.
\citet{frazier2008stopping} jointly design them with a
myopic measurement-cost objective.
\citet{eckman2022stopping} give computationally efficient
checks of posterior selection probability and expected
opportunity cost. \citet{hay2012computations} analyze the
value of further sampling, including perfect-information
bounds and finite sampling bounds for one Bernoulli arm
with measurement costs.
In Bayesian optimization, \citet{ishibashi2023stopping}
use changes in an expected simple-regret criterion,
\citet{wilson2024stopping} certifies posterior probability
of near-optimality, and \citet{xie2026stopping} study
cost-adjusted simple regret and expected-budget guarantees.

\paragraph{Resource constraints and switching.}
Bandits with knapsacks add resource budgets to online
reward maximization \citep{badanidiyuru2013knapsacks}.
Their LP bounds and primal-dual resource prices provide an
optimization perspective on learning under limited supply;
\citet{agrawal2014convex} extend the model to concave rewards
and convex constraints. These objectives value outcomes
during learning, whereas here observations precede a
terminal choice. In pure exploration, \citet{li2024resources}
study best-arm identification with resource consumption
on each pull, distinguishing deterministic and stochastic
consumption. Resource use can also depend on the order
of evaluation: \citet{hong2005switching} explicitly trade
sampling against switching in indifference-zone selection,
and \citet{mwai2025switching} constrain the expected
switching rate in fixed-confidence exploration.
For cumulative regret, \citet{simchi2023switching} study
hard switching budgets and characterize phase transitions
in the minimax rate; \citet{koren2017metric} study
adversarial losses with metric movement penalties.

\section{Policies and relaxations}
\label{sec:model}
\label{app:model-details}

There are $n$ independent rested arms. Arm $i$ has root $\rho_i$,
states $u\in\mathcal S_i$, and transitions $u\to v$ with
probabilities $p_{uv}$. Each transition consumes one play. Local
depth is included in the state and is at most $T$. The arm can be
selected at any state for a known nonnegative payoff $r_u$.
Until Section~\ref{sec:markov-scope}, we assume
\begin{equation}\label{eq:super}
 r_u\ \ge\ \sum_{v:u\to v}p_{uv}r_v .
\end{equation}
Equality is the martingale case. Only the terminal selection
contributes to the objective. A global policy uses at most $T$
plays and selects at most one arm.
For the Bayesian model used in Section~\ref{sec:truncation}, each
arm has a fixed latent reward distribution on $[0,1]$, drawn
independently from its prior. Conditional on this distribution,
observations are iid and immediately available. Its mean is
$\theta_i$, and $r_u=\E[\theta_i\mid u]$. Thus observations are
exchangeable, and \eqref{eq:super} holds with equality.

\paragraph{Local policies.}
A single-arm policy $P$ terminates by selecting or quitting, and
uses at most $T$ plays. Following GM13, let
$\widetilde R(P)$ be its expected selected payoff, $I(P)$ its
selection probability, and $W(P)$ its expected play count.
For a fixed policy $P_i$ write
\[
 R_i=\widetilde R(P_i),\quad q_i=I(P_i),\quad w_i=W(P_i).
\]
For each $P_i$, consider its complete local execution from its root
to its own terminal decision, without the shared play budget.
Let $\tau_i$ be its random duration,
$I_i$ its selection indicator, and
$Y_i=r_{u_{\tau_i}}I_i$ its selected payoff, with $Y_i=0$ if it quits.
Hence $R_i=\E Y_i$, $q_i=\E I_i$, and $w_i=\E\tau_i$.
Use independent transitions and local randomization across arms, so
$(\tau_i,I_i,Y_i)$ are independent across $i$.
To analyze a global execution, use these same outcomes and local
decisions whenever it runs $P_i$. The variables above describe the
complete local execution, even if the global execution never starts
$P_i$ or interrupts it before its terminal decision.
The null policy quits immediately without playing or selecting;
its duration, selection indicator, and selected payoff are all zero.

\paragraph{Two constraints.}
The relaxations are standard:
\begin{align}
 &\LP=\max_{\{P_i\}}\left\{\sum_iR_i:
           \sum_iq_i\le1,\ \sum_iw_i\le T\right\},
           \label{eq:two}\\
 &\LPB=\max_{\{P_i\}}\left\{\sum_iR_i:
           \sum_i(q_i+w_i/T)\le2\right\}.
           \label{eq:combined}
\end{align}
The LPBud formulation was introduced in \cite{GM13}. Marginalizing a feasible global policy
over all other arms gives feasible local flows for \eqref{eq:two}.
Adding the two constraints gives
\begin{equation}\label{eq:relaxations}
 \OPT\le\LP\le\LPB.
\end{equation}
For completeness, the local flow variables are reach $z_u$,
play $x_u$, and selection $s_u$, satisfying
\begin{equation}\label{eq:flows}
 z_{\rho_i}=1,\qquad
 z_v=\sum_{u:u\to v}p_{uv}x_u,
 \qquad x_u+s_u\le z_u,\qquad z_u,x_u,s_u\ge0,
\end{equation}
where the equation for $z_v$ applies only to nonroot states
$v\ne\rho_i$, and $x_u=0$ at depth $T$. Then
$R_i=\sum_ur_us_u$, $q_i=\sum_us_u$, and $w_i=\sum_ux_u$.
The unused mass quits. Dividing play and selection flows by
reach probabilities implements the local policy.

\paragraph{Prices and computation.}
With selection price $\eta$ and play price $\lambda$, the
single-arm surplus is
\begin{equation}\label{eq:bellman}
 G_i(u;\eta,\lambda)=\max\bigg\{0,\ r_u-\eta,-\lambda+\sum_v p_{uv}G_i(v;\eta,\lambda)\bigg\}.
\end{equation}
The play option is omitted at depth $T$. Finite LP duality gives
\begin{align}
 \LP&=\min_{\eta,\lambda\ge0}
       \left\{\eta+\lambda T+\sum_iG_i(\rho_i;\eta,\lambda)\right\},
       \label{eq:two-dual}\\
 \LPB&=\min_{\nu\ge0}
       \left\{2\nu+\sum_iG_i(\rho_i;\nu,\nu/T)\right\}.
       \label{eq:one-dual}
\end{align}
Computation is polynomial in the
supplied finite state spaces and rational input encoding.
For succinct or continuous priors, state-space construction or
an equivalent local optimization oracle is an additional
computational assumption.

\subsection{A \texorpdfstring{$(1-e^{-2})/2$}{(1-exp(-2))/2} approximation}
\label{sec:231}

Fix local policies feasible for \eqref{eq:combined}. Discard
zero-reward policies and let
\begin{equation}\label{eq:loads}
 b_i=q_i+w_i/T, \qquad d_i=R_i/b_i,\qquad
 \beta=\sum_i b_i\le2, \qquad \R=\sum_iR_i.
\end{equation}
Order the arms by nonincreasing $d_i$. Their policies run contiguously,
moving to the next when an arm quits.
At a local selection, select that arm
and stop. If the budget interrupts a policy, select the currently
active arm. If the list is exhausted without a selection, choose an
arm with the largest currently selectable payoff $r_{u_i}$.
Make the same choice if the budget is exhausted just as a policy quits.
Selection at a root does not use a play.
\begin{theorem}\label{thm:231}
Let local policies be feasible for \eqref{eq:combined}, with $b_i$, $\beta$
and $\R$ as in \eqref{eq:loads}, and execute them as above. Under
\eqref{eq:super}, $\ALG\ge\ph(\beta)\R$ where $\ph(x)=\frac{1-e^{-x}}x$ with $\ph(0)=1$.
For an optimal solution of \eqref{eq:combined}, this gives $\ALG\ge(1-e^{-2})\LPB/2$.
\end{theorem}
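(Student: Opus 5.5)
We would prove that arm $i$, in position $i$ of the $d$-order, contributes at least $R_i\,e^{-B_{i-1}}$ in expectation, where $B_{i-1}=\sum_{j<i}b_j$. Summing then gives $\ALG\ge\sum_i R_i e^{-B_{i-1}}$. Because the $d_i$ are nonincreasing, a standard rearrangement (Chebyshev-type) argument bounds this sum from below. Write $R_i=d_ib_i$, and view $\sum_i d_ib_ie^{-B_{i-1}}$ as an integral of a nonincreasing density against the weight $e^{-B_{i-1}}\ge e^{-x}$ on $[B_{i-1},B_i)$. This gives $\sum_i R_ie^{-B_{i-1}}\ge \sum_i d_i\int_{B_{i-1}}^{B_i}e^{-x}dx$. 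Since $d(x)$ is nonincreasing and $e^{-x}$ is decreasing, the average of $d$ against the probability density $e^{-x}/(1-e^{-\beta})$ on $[0,\beta]$ is at least its uniform average $\R/\beta$. Hence $\ALG\ge \R(1-e^{-\beta})/\beta=\ph(\beta)\R$. For an optimal solution of \eqref{eq:combined}, $\R=\LPB$, and $\ph$ is decreasing, so $\beta\le2$ yields $(1-e^{-2})/2$.

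The core step is the per-arm bound. We use the coupled local executions $(\tau_j,I_j,Y_j)$, which are independent across arms. Arm $i$ is started if and only if every $j<i$ quits locally ($I_j=0$) and $S_{i-1}=\sum_{j<i}\tau_j<T$. Conditioned on this event, which depends only on the arms $j<i$ and is therefore independent of arm $i$'s outcomes, arm $i$ runs with residual budget $T-S_{i-1}$. We claim its expected contribution is at least $\E[Y_i\min(1,(T-S_{i-1})/\tau_i)]$-type quantities. More carefully, we would show that an interrupted policy selecting the current state loses nothing in expectation relative to a suitable fraction. By the supermartingale property \eqref{eq:super}, the payoff $r_{u_t}$ of the current state at interruption time $t$ is at least the expected eventual selected payoff $Y_i$ conditional on the state at $t$, because quitting only loses value. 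So selecting at interruption yields at least $\E[Y_i\mid \F_t]$. Hence, conditional on arm $i$ being started with any positive residual budget, its contribution is at least $R_i$ in expectation, by optional stopping for the supermartingale $r_{u_t}$ along the local path. The fallback rules (exhausted list, or budget ending as a policy quits) only add nonnegative value and can be ignored.

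So the contribution of arm $i$ is at least $R_i\,\Pr[\forall j<i:\ I_j=0,\ S_{i-1}<T]$. We must lower bound this probability by $e^{-B_{i-1}}$, and we expect this to be the main obstacle. Our plan is to use $\Pr[I_j=0,\ \text{and budget not exhausted}]$ via a product bound. By independence, $\Pr[\forall j<i: I_j=0]=\prod_j(1-q_j)$. For the budget, the natural route is $\Pr[S_{i-1}<T\mid \cdot]\ge 1-\E S_{i-1}/T$ by Markov, but this gives $1-B$-type linear bounds, not exponential ones. To obtain an exponential bound we would instead charge per arm: reinterpret the "budget used" fractionally, showing that arm $j$ consumes a $w_j/T$ fraction of survival. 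One workable approach is to redefine the contribution credit: if arm $i$ is started with residual budget $\ge$ something, it still earns $R_i$, so we need only $\Pr[\text{start}]$. We would bound $\Pr[\text{start}]\ge \prod_{j<i}(1-q_j)\cdot(1-\sum_{j<i}w_j/T)^+$, noting that $I_j$ and $\tau_j$ are correlated. If that fails, we would fall back to a potential argument: define $\Phi$ = probability of reaching position $i$ with no selection and budget remaining, and show $\Phi_{i}\ge\Phi_{i-1}(1-b_{i-1})$-ish, giving $\prod(1-b_j)\ge$ ... However, $\prod(1-b_j)$ is weaker than $e^{-B}$, so the interruption credit must be counted. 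An interrupted arm is itself selected and earns value, and it is exactly this that recovers the loss. Accounting for that interrupted-arm credit, distributed along $[B_{i-1},B_i)$ as a continuous-time process in the "load" variable, is where the $e^{-x}$ density and the integral form $\int d(x)e^{-x}dx$ naturally arise, and this is the delicate part we expect to require the most care.
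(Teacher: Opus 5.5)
Your reduction to admission probabilities is right and matches the paper. Let $\alpha_i$ be the probability that policy $i$ starts. This event is independent of $(\tau_i,I_i,Y_i)$, and under \eqref{eq:super} optional stopping shows that an interrupted arm's current payoff dominates its expected unfinished $Y_i$. Together these give $\ALG\ge\sum_i\alpha_iR_i$. Your closing rearrangement is also a valid continuous form of the final summation step.

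The gap is the core step. The per-arm bound $\alpha_i\ge e^{-B_{i-1}}$ is false, and so is the summed claim $\ALG\ge\sum_iR_ie^{-B_{i-1}}$. Take two arms whose policies select at the root for a known payoff $c>0$. Then $q_1=q_2=1$, $w_1=w_2=0$, $\beta=2$, and both densities equal $c$. Execution always stops at the first arm, so $\alpha_2=0$ and $\ALG=c<c(1+e^{-1})$. More simply, a first policy that selects at its root with probability $q_1$ gives $\alpha_2=1-q_1<e^{-q_1}$ for every $q_1>0$.

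No play and no interruption occur in these examples, so no accounting of interruption credit can rescue the bound. Interruption is already fully credited in $\ALG\ge\sum_i\alpha_iR_i$, and it never raises a later policy's admission probability. Even the weaker per-arm bound $\alpha_i\ge\prod_{j<i}(1-b_j)^+$ fails, although, contrary to your remark, it would have sufficed by telescoping. Two earlier policies of deterministic duration $T/2$ with selection probabilities below $1/2$ block a third policy with certainty, yet $\prod_{j<3}(1-b_j)>0$.

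The missing idea is to bound only the cumulative admitted load $H_k=\sum_{i\le k}\alpha_ib_i$, never individual $\alpha_i$. Set $Z_i=I_i+\tau_i/T$, so $\E Z_i=b_i$. If policy $k+1$ is blocked, then either some started policy selected or the complete durations of the started policies reach $T$. Either way, pathwise $\sum_{i\le k}\ind_{A_i}Z_i\ge1$. Since $A_i$ depends only on earlier policies, this sum has mean $H_k$. Markov's inequality, applied to the started policies only rather than to $\sum_{j<i}\tau_j$ as in your attempt, gives $1-\alpha_{k+1}\le H_k$.

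Hence $H_{k+1}\ge H_k+b_{k+1}(1-H_k)^+$. With $X_k=\sum_{i\le k}b_i$, this iterates to $H_k\ge1-\prod_{i\le k}(1-b_i)^+\ge1-e^{-X_k}\ge\ph(\beta)X_k$. Here $\prod(1-b_i)\le e^{-X_k}$ points the useful way, whereas in a per-arm survival bound it points the wrong way. Abel summation with nonincreasing $d_i$ and $d_{n+1}=0$ then gives $\sum_i\alpha_iR_i=\sum_k(d_k-d_{k+1})H_k\ge\ph(\beta)\sum_k(d_k-d_{k+1})X_k=\ph(\beta)\R$. The density order is what converts a guarantee on cumulative admitted load into one on admitted reward.
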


\begin{proof}
Let $A_i$ be the event that policy $i$ starts, and define
$\alpha_i=\Prb(A_i)$. Whether $P_i$ starts depends only on the
outcomes and decisions of earlier policies, so $A_i$ is independent
of $(\tau_i,I_i,Y_i)$ and
$\E[\ind_{A_i}Y_i]=\alpha_iR_i$.

Let $\mathcal H$ be the final execution history, including the
observed outcomes and policy randomization. Write $V_{\rm exec}$
for the selected payoff, so $\ALG=\E[V_{\rm exec}]$.
Every admitted policy except possibly the last has completed.
A policy that quits has $Y_i=0$; a policy that selects has
$Y_i=r_u$, which the execution obtains before stopping.
If the budget interrupts the last policy at state $u$, then
\eqref{eq:super} and bounded optional stopping give
\[
 \E[Y_i\mid\mathcal H]
 \le\E[r_{u_{\tau_i}}\mid\mathcal H]\le r_u.
\]
The first inequality uses $Y_i=0\le r_{u_{\tau_i}}$ when $P_i$
quits. Selecting the current arm obtains $r_u$, which therefore
dominates its expected unfinished payoff. If execution ends after
all started policies have quit, every corresponding $Y_i$ is zero,
and the arm selected by the terminal rule has payoff
$\max_i r_{u_i}\ge0$ by the model assumption.
In each case,
\[
 V_{\rm exec}\ge
 \E\!\left[\sum_i\ind_{A_i}Y_i\mid\mathcal H\right].
\]
Taking expectations gives the required lower bound:
\begin{equation}\label{eq:admitted}
 \ALG\ge\sum_i\E[\ind_{A_i}Y_i]=\sum_i\alpha_iR_i.
\end{equation}
To bound admission probabilities, set $Z_i=I_i+\tau_i/T$, so that
$\E Z_i=b_i$, and define
\[
 H_k=\sum_{i\le k}\alpha_ib_i,\qquad X_k=\sum_{i\le k}b_i,
\]
with $H_0=X_0=0$. Policy $k+1$ is blocked only if execution
has already stopped at a started policy $j\le k$, either by a local
selection ($I_j=1$) or by exhausting the budget. In the latter case,
the complete local durations of the started policies sum to at least
$T$. Thus either reason for blocking implies
$\sum_{i\le k}\ind_{A_i}Z_i\ge1$, and independence of $A_i$ and $Z_i$
shows that this sum has expectation $H_k$.
Denote $(x)^+=\max\{0,x\}$. Markov's inequality gives
\[
    1-\alpha_{k+1}\le H_k,\qquad H_k\ge H_{k-1}+b_k(1-H_{k-1})^+.
 \]
It follows that
\begin{equation}\label{eq:survival}
 H_k\ge1-\prod_{i\le k}(1-b_i)^{+}
    \ge1-e^{-X_k}\ge\ph(\beta)X_k.
\end{equation}
Since $q_i\le1$ and $w_i/T\le1$, a single load $b_i$ can lie in $(1,2]$.
Then $(1-b_i)^+=0$, and the recurrence gives $H_k\ge1$; the positive part
records this case.
Set $d_{n+1}=0$. The density order is used only in the next
step:
\[
 \sum_i \alpha_i R_i =\sum_k(d_k-d_{k+1})H_k
  \ge\ph(\beta)\sum_k(d_k-d_{k+1})X_k
\]
which is $\ph(\beta)\R$.
The theorem follows.
\end{proof}

At a selection or an interruption, choosing instead an arm with the
largest currently selectable payoff (the largest posterior mean in the
Bayesian model) can only increase the payoff and uses no plays, so the
guarantee still holds.

\subsection{Tightness of both relaxations}
\label{sec:both-gap}

The following examples distinguish a loss in the relaxation from
a loss in executing its local policies.

\subsubsection{General martingale signals}\label{app:both-gap-proof}

\begin{proposition}\label{prop:both-gap}
There are instances with independent Markov arms, whose nonnegative payoffs
are martingales selectable at every state, such that $\LPB=\LP=2$, while both
$\OPT$ and $\ALG_{\rm cur}$ (the execution of Theorem~\ref{thm:231}) tend to
$1-e^{-2}$. Thus this execution is asymptotically optimal on these instances,
while both LP gaps approach $2/(1-e^{-2})$.
\end{proposition}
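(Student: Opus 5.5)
The plan is to build an explicit two-stage family of arms. On it, a dual certificate for \eqref{eq:one-dual} and a feasible primal solution for \eqref{eq:two} give $\LP=\LPB=2$ exactly, or in the limit. A direct analysis of global policies then shows $\OPT\to1-e^{-2}$. Since $\ALG_{\rm cur}\le\OPT$ always, it remains to show $\ALG_{\rm cur}\to1-e^{-2}$, which follows from a direct computation of the execution of Theorem~\ref{thm:231}.

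\emph{Construction.} There are $n$ arms, each with the following chain.
\begin{itemize}
\item One play from the root moves to a ``hit'' state $h$ with probability $p$, and otherwise to a dead state of payoff $0$.
\item At $h$ the payoff is $r_h=1$. From $h$, a ``confirmation'' stage of $L$ plays ends at payoff $2$ with probability $1/2$ and at $0$ otherwise. The intermediate payoffs are all $1$ until the last play, so the payoffs are martingales and selectable everywhere.
\item The root payoff is $p$.
\end{itemize}
The local policy ``play; if hit, confirm; select only on reaching payoff $2$'' has $R_i=p$, $q_i=p/2$ and $w_i=1+pL$. I will pick $n$, $p$, $L$ as functions of $T$ so that $\sum_iq_i=1$ and $\sum_iw_i=T$: take $np=2$, and choose $L$ with $n(1+pL)=T$, that is $n+2L=T$. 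A convenient choice is $n=T/2$ and $L=T/4$, but I leave the split between $n$ and $L$ as a free parameter to tune. This primal solution gives $\LP\ge np=2$.

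For the upper bound I will exhibit $\nu$ in \eqref{eq:one-dual} with $G_i(\rho_i;\nu,\nu/T)$ summing to $o(1)$, so that $2\nu+o(1)\to2$. Take $\nu=1$ and check the recursion \eqref{eq:bellman} at $h$ and at the root. At $h$, confirming gives $-L/T+\tfrac12(2-1)$, which is at most $0$ once $L\ge T/2$. Hence the split should instead put $L\approx T/2$ and $n$ correspondingly adjusted. The bound then follows from \eqref{eq:relaxations}.

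\emph{Value of $\OPT$.} This is the main obstacle. The intended picture is that a global policy either searches for hits, obtaining payoff $1$ at rate $p\approx2/n$ per play, or spends a long confirmation of length comparable to $T/2$ to convert a hit of value $1$ into a gamble of mean $1$. By the martingale property a confirmation never raises expected payoff; it only burns budget that could have found a hit elsewhere. So I would argue via an exchange argument that the optimal policy never confirms, unless it has at least two hits, in which case the second hit is irrelevant. Without confirmation, the value is $\Prb(\text{some hit in }T\text{ plays})=1-(1-p)^{T}$, where $T$ counts the plays used on roots. I then tune $p$ so that $pT\to2$, which is compatible with $np=2$ when $n\approx T$.

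This tuning conflicts with the $w$ budget above. I expect to resolve the conflict by letting the local policy confirm only with some probability, or by making $L$ random with the right mean. If that fails, I will fall back on a bound of the form $\OPT\le\E[\max\text{posterior}]$ computed through the Lagrangian \eqref{eq:two-dual} with the partial play budget. The point to verify carefully is that the hit-search rate and the confirmation cost balance so that $\LP$ stays at $2$ while $\OPT$ only sees $1-e^{-2}$.

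\emph{Value of $\ALG_{\rm cur}$.} All arms have equal density $d_i$, so the ordering is arbitrary. The execution plays roots until the first hit and then begins a confirmation, which the budget typically interrupts. At that point it selects the current arm at payoff $1$, by the interruption rule of Theorem~\ref{thm:231}. If the confirmation ends early at $0$, the execution resumes searching. Its value is therefore at least the probability of a hit within the budget, which tends to $1-e^{-2}$ under the chosen scaling. Combined with $\ALG_{\rm cur}\le\OPT$, this gives both limits, and the gaps $\LP/\OPT$ and $\LPB/\OPT$ tend to $2/(1-e^{-2})$.
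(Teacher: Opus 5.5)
\textbf{There is a genuine gap: on your instance $\OPT$ does not tend to $1-e^{-2}$.} Your exchange argument says that confirmation never helps and that, with two hits, the second is irrelevant. That is false. The martingale property holds for each arm's payoff, but the objective is the maximum over arms, which is a submartingale. With two hits in hand, confirming one and keeping the other as a fallback is worth $\tfrac12\cdot2+\tfrac12\cdot1=\tfrac32$.

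Your own feasibility condition makes this affordable. The constraint $\sum_iw_i\le T$ with $np=2$ forces $n+2L\le T$, so a policy can play all $n$ roots and still run two confirmations. Let $K\sim\mathrm{Bin}(n,2/n)$ be the number of hits. Inspecting every root and then confirming one hit when $K\ge2$ yields $\Prb(K\ge1)+\tfrac12\Prb(K\ge2)\to(1-e^{-2})+\tfrac12(1-3e^{-2})\approx1.16$. The execution of Theorem~\ref{thm:231} attains at least this value, since its first two confirmations finish within $n+2L\le T$ plays. Hence $\LP/\OPT$ stays below about $1.72$, far from $2/(1-e^{-2})\approx2.31$.

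None of your proposed repairs fixes this:
\begin{itemize}
\item Randomizing the local policy changes only the LP solution, not $\OPT$.
\item Replacing the gamble by payoff $m$ with probability $1/m$ raises the two-hit value to $2-1/m$.
\item Every value of \eqref{eq:two-dual} is at least $\LP=2$, so that bound cannot certify $\OPT\approx1-e^{-2}$.
\end{itemize}
The issues you did worry about are harmless. Search is limited to the $n$ roots, so $1-(1-2/n)^n\to1-e^{-2}$ without any condition on $pT$. Also $\LPB=\LP=2$ follows from $\sum_iR_i\le\sum_ir_{\rho_i}=np$ by optional stopping. Your dual at $\nu=1$ also works, because the root surplus is $\max\{0,-1/T+pn/(2T)\}=0$.

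The missing idea is to draw the second unit of LP value from an event that a real policy essentially cannot complete. In the paper, each first play either reveals a known value $h\approx1$ with probability $a\approx1/n$, or, with probability $p\approx1/n$, enters a continuation. The continuation's latent value is $T$ with probability $1/T$ and is revealed only at local depth $T$; every continuation state is selectable at posterior mean one. The LP pays for continuations fractionally: each arm spends work $p(T-1)$ and selection mass $p/T$ on them for reward $p$. Together with the hits this gives $\sum_iq_i=1$, $\sum_iw_i=T$, and reward $n(ah+p)=2$.

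A real policy, however, can complete a revelation only by spending all $T$ plays on the first arm it plays, an event of probability $p\to0$. Otherwise a continuation is worth exactly one, just like a hit. Search therefore sees a value-one event with probability $1-z=2/n-1/n^2$ per root. This gives $\OPT\le h(1-z^n)+p(1+2/n)\to1-e^{-2}$, which matches the execution's lower bound.
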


\begin{proof}
Fix $n\ge3$, set $T=n^2$, and define
\[
 p=\frac1{n+1}, \qquad a=\frac1n-\frac pT, \qquad
 z=1-a-p=(1-1/n)^2, \qquad h=1+\frac1{Ta}.
\]
There are $n$ identical independent arms, each with prior mean
$\mu=2/n$. The first play reveals a known value $h$ with
probability $a$, a continuation of posterior mean one with
probability $p$, and a known zero with probability $z$.

On the continuation branch, the latent value is $T$ with
probability $1/T$ and zero otherwise. The next $T-2$ plays
reveal no information. The following play, at local depth $T$,
reveals the latent value. Every intermediate state can be
selected for its posterior mean one. Since $ah+p=2/n$, the
payoffs satisfy the martingale identity at the root as well as
throughout the continuation.\footnote{Dividing all rewards by $T$
rescales to $[0,1]$;
all approximation ratios remain unchanged.}

\paragraph{The LP values.}
For each arm, play once; select the known $h$, quit at zero,
and complete a continuation, selecting only its positive
terminal. The local quantities are
\[
 q=a+p/T=1/n,\qquad R=ah+p=2/n, \qquad
 w=1+p(T-1)=T/n.
\]
Across all arms, selection mass is one, expected work is $T$,
and expected selected reward is two.
These policies are feasible for both LPs and attain value two.
Conversely, nonnegativity and optional stopping imply
$R_i\le r_{\rho_i}$ for every single-arm policy. Hence
\[
 \sum_i R_i\le\sum_i r_{\rho_i}=n(2/n)=2.
\]
Thus $\LPB=\LP=2$.

\paragraph{Analysis.}
All these policies have density
$R/(q+w/T)=1$, so their order is immaterial. The execution
passes through zero branches until it meets an $h$ branch or
a continuation. It selects $h$ immediately in the former case.
If a continuation belongs to the first arm, it can finish and
has expected reward one. If it belongs to a later arm, the
deadline interrupts it before revelation, and selecting its
current state gives reward one. Therefore
\[
 \ALG_{\mathrm{cur}}
   \ge(ah+p)\sum_{j=0}^{n-1}z^j
   =\frac{2n}{2n-1}
        \left[1-(1-1/n)^{2n}\right]
     \longrightarrow1-e^{-2}.
\]
Selecting the largest currently selectable payoff at termination
can only improve this value and requires no additional plays.

\paragraph{Feasible policies.}
Without a completed revelation, only first plays provide
information. Observing all $n$ roots fits within the budget and
maximizes the expected largest selectable payoff among such policies:
further root observations cannot decrease this expectation.
The maximum after observing every root is at most $h$, and is
zero when all roots are zero. Its expectation is therefore at
most $h(1-z^n)$.

A full revelation requires all $T$ plays on one arm, so it can
occur only when the first arm enters continuation, an event of
probability $p$. Conditional on this event, completing the arm
has expected reward at most $1+\mu$: if the revealed value is zero,
an unplayed arm can be selected for its prior payoff $\mu$.
Even crediting this entire contribution
in addition to the root-observation bound gives
\[
 \ALG_{\mathrm{cur}}\le\OPT
       \le h(1-z^n)+p(1+\mu).
\]
Since $h\to1$, $z^n\to e^{-2}$, and $p(1+\mu)\to0$,
the upper bound also tends to $1-e^{-2}$. This proves both
the LP gap and the asymptotic optimality of our execution.
\end{proof}

These arms are martingale signals rather than conditionally iid reward
observations. For exchangeable observations, the next example separates
the two relaxations.

\subsubsection{Exchangeable one-play observations}\label{app:lpbud-gap-proof}

\begin{proposition}\label{prop:lpbud-gap}
There are exchangeable one-play instances with $T=n^2$ for which
\[
 \LPB=2-1/n,\quad \LP=1,\quad
 \OPT=1-(1-1/n)^{2n}.
\]
Thus the LPBud gap still approaches $2/(1-e^{-2})$.
\end{proposition}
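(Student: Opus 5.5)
We take $n$ identical independent arms. Each arm has latent mean $\theta_i\in\{0,1\}$, with
\[
 \Prb(\theta_i=1)=\mu:=1-(1-1/n)^2=\frac2n-\frac1{n^2},
\]
and observations that are iid Bernoulli$(\theta_i)$ given $\theta_i$. The observations are exchangeable with values in $[0,1]$, and one play reveals $\theta_i$ exactly: these are the one-play instances. Set $T=n^2\ge n$. The plan is to compute the three quantities separately. Each computation pairs an explicit policy with a matching upper bound, and every upper bound comes from one of two elementary facts.

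\textbf{Upper bounds.} The first fact is that, by nonnegativity and optional stopping under the martingale case of \eqref{eq:super}, $R_i\le r_{\rho_i}=\mu$ for every single-arm policy, exactly as in the proof of Proposition~\ref{prop:both-gap}. The second is that all payoffs lie in $[0,1]$, so $R_i\le q_i$. The second fact with $\sum_iq_i\le1$ gives $\LP\le1$. The first fact gives $\LPB\le\sum_i r_{\rho_i}=n\mu=2-1/n$. For $\OPT$ we use the perfect-information bound $\OPT\le\E[\max_i\theta_i]=1-(1-\mu)^n=1-(1-1/n)^{2n}$, which holds because each posterior mean is a conditional expectation of $\theta_i$.

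\textbf{Matching lower bounds.} For $\OPT$, play every arm once, which uses $n\le T$ plays. Then select an arm with observed value $1$ if one exists, and any arm otherwise. This attains $\Prb(\exists i:\theta_i=1)$, the perfect-information value. For $\LPB$, let each local policy play once, select if it observes $1$, and quit otherwise. Then
\[
 R_i=q_i=\mu,\qquad w_i=1,\qquad q_i+w_i/T=\mu+\frac1{n^2}=\frac2n .
\]
The total load is exactly $2$, and the value is $n\mu=2-1/n$. For $\LP$, randomize this same policy: play with probability $x=1/(n\mu)$ and quit otherwise. This gives $q_i=R_i=x\mu=1/n$ and $w_i=x\le1$. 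Hence $\sum_iq_i=1$, $\sum_iw_i\le n\le T$, and the value is $1$. Equivalently, take the flows \eqref{eq:flows} scaled by $x$.

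There is no real obstacle. The only step needing care is the $\LPB$ upper bound: one must observe that the root-payoff bound $R_i\le\mu$ is exactly tight at total load $2$. This tightness is why the number of arms and the success probability are tuned so that $\mu+1/T=2/n$. The claimed gap then follows from $\LPB/\OPT=(2-1/n)/(1-(1-1/n)^{2n})\to2/(1-e^{-2})$.
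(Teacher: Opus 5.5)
Your proposal is correct and follows the paper's proof essentially step for step. You use the same Bernoulli instance with $\Prb(\theta_i=1)=2/n-1/n^2$, the prior-mean (optional stopping) bound for $\LPB$, the bound $R_i\le q_i$ for $\LP$, and observe all $n\le T$ arms for $\OPT$. The only differences are cosmetic: you randomize the root play with probability $1/(n\mu)$ where the paper always plays and randomizes the selection, and you state explicitly the perfect-information upper bound on $\OPT$ that the paper leaves implicit.
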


\begin{proof}
For $n\ge3$, each arm has latent mean $\theta_i\in\{0,1\}$ with
$p=\Prb(\theta_i=1)=2/n-1/n^2$. Each observation is
Bernoulli$(\theta_i)$, so one play reveals the mean.
Playing every arm once and selecting on a positive observation
gives $q_i=R_i=p$, $w_i=1$, and total combined load two.
The sum of prior means bounds the sum of local selected rewards,
so $\LPB=np=2-1/n$.

For the separate constraints, every selected payoff is at most
one, so $\LP\le1$. Play each arm once and select a positive
observation with probability $1/(np)$; this attains one with
selection mass one and work $n\le T$.
Finally, all arms can be observed, and the optimum equals the
probability that at least one is positive:
$1-(1-p)^n=1-(1-1/n)^{2n}$.
The execution of Theorem~\ref{thm:231} on the policies above selects the first positive arm and obtains
$\OPT$.
\end{proof}

\subsection{GM13's algorithm is tight at \texorpdfstring{$1/3$}{1/3}}
\label{app:gm13-proof}

GM13 balances $\sum_iG_i(\rho_i;\nu,\nu/T)=\nu$, obtaining
the dual bound $3\nu$, and executes in an arbitrary arm order
until the first local selection \citep[Section~7]{GM13}.

\begin{proposition}\label{prop:gm13-tight}
There are exchangeable one-play instances on which GM13's
balanced-price algorithm satisfies $\ALG/\OPT\to1/3$ under
every arm ordering, including density ordering, while
$\LP/\OPT\to1$ and $\LPB/\OPT\to1$.
Every local decision at the balanced price is strict.
\end{proposition}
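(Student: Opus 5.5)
The plan is to build a family of identical exchangeable one-play arms, so that the ordering clause (including density ordering) holds automatically. The mechanism: at the balanced price, each local policy selects a \emph{medium} outcome whose value barely exceeds the price $\nu$. GM13's execution then stops at the first medium outcome and collects about $\nu$. An optimal global policy instead keeps inspecting arms, looking for a rare \emph{high} outcome, and falls back on a medium outcome only if none appears. This should earn about $3\nu$, matching the dual bound $3\nu\ge\LPB\ge\LP\ge\OPT$ from \eqref{eq:relaxations} and \eqref{eq:one-dual}.

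\textbf{Construction.} Take $n$ identical arms with $n\le T$, say $T=n^2$. Each latent $\theta_i$ has a three-point prior: $H$ with probability $a$, $M$ with probability $b$, and $0$ otherwise. Observations are Bernoulli$(\theta_i)$ after rescaling, so one play reveals $\theta_i$ and selection of a revealed value is exact. I would choose the parameters in the regime
\[
 na\to0,\qquad naH\to2M,\qquad nb\to\infty .
\]
With these choices a high value is rare but carries most of the value, and a medium value appears almost surely among the $n$ arms.

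\textbf{Steps.}
\begin{enumerate}
\item \emph{Balanced price and strict local decisions.} I will solve $\sum_iG_i(\rho_i;\nu,\nu/T)=\nu$ explicitly, aiming for $\nu=M-\delta$ with $\delta\to0$ at a rate to be fixed.
 \begin{itemize}
 \item The play option has surplus $n\bigl(aH+b(M-\nu)-(a+b)\nu\cdot0-\nu/T\bigr)$ after collecting terms. I will check that it strictly beats both root selection (surplus $aH+bM-\nu$) and quitting.
 \item I will check that, after the play, the local policy strictly selects $H$ and $M$ (since $M>\nu$) and strictly quits at $0$.
 \end{itemize}
 Tuning $\delta$ and $b$ will make the balance equation hold with every inequality strict, as the statement requires.
\item \emph{Value of the algorithm.} Arms are identical, so every ordering, including density ordering, gives the same execution. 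That execution plays arms until the first nonzero outcome. The first nonzero outcome is medium with conditional probability $b/(a+b)\to1$. Hence
 \[
 \ALG\le M+(a/(a+b))H\to M\approx\nu .
 \]
 I will arrange $a/(a+b)\cdot H=o(M)$, which the regime above permits because $nb\gg1$.
\item \emph{Value of the optimum.} I will lower bound $\OPT$ by the policy that inspects all $n\le T$ arms, selects $H$ if present, and otherwise selects $M$. Since $na\to0$, this gives
 \[
 \OPT\ge(1-(1-a)^n)H+(1-a)^n(1-(1-b)^n)M\to 2M+M=3M .
 \]
 Combining this with $\OPT\le\LP\le\LPB\le3\nu\to3M$ yields $\ALG/\OPT\to1/3$, $\LP/\OPT\to1$ and $\LPB/\OPT\to1$.
\end{enumerate}

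\textbf{Main obstacle.} The delicate part is step 1: making the balanced $\nu$ land just below $M$ while every local decision stays strict. The balance sum involves $n\,aH\approx2M$, $n\,b\,\delta$ and $n\nu/T=\nu/n$, so I must choose $b$ and $\delta$ with $nb\delta$ of order $M$ minus the other terms. This has to stay consistent with $nb\to\infty$, which forces $\delta\to0$. If the equation cannot be met exactly in that form, my fallback is to add one dummy arm: a known value slightly below $\nu$, which is strictly quit at the price and absorbs the slack in the balance without affecting $\ALG$ or $\OPT$ asymptotically. I would also verify that $3\nu-\OPT\to0$ exactly, since otherwise $\LPB/\OPT\to1$ fails even if the $1/3$ ratio holds.
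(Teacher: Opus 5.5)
\textbf{Step 1 cannot be carried out, and no choice of parameters fixes it.} For any $\nu<M$, the play option alone gives each arm surplus at least $a(H-\nu)-\nu/T$. In your regime this gives
\[
 \sum_iG_i(\rho_i;\nu,\nu/T)\ \ge\ na(H-\nu)-\frac{n\nu}{T}\ \approx\ 2M-\frac Mn ,
\]
which is roughly twice your target $\nu\approx M$. So the balance $\sum_iG_i=\nu$ has no solution below $M$. Your condition ``$nb\delta$ of order $M$ minus the other terms'' actually requires $nb\delta\approx-M<0$. The unique balanced price lies instead near $naH\approx2M$. At that price the local policies quit at $M$ and select only $H$, so the stop-at-the-first-medium mechanism of Step~2 never arises. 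The dummy-arm fallback does not help: a strictly quit arm contributes zero surplus, and since every $G_i\ge0$, adding arms can only raise the total, never absorb an excess.

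\textbf{The obstruction is structural whenever $n/T\to0$.} For a one-play arm, playing once and selecting iff $\theta_i>\nu$ shows $\E(\theta_i-\nu)^+\le G_i(\rho_i;\nu,\nu/T)+\nu/T$. Hence at the balanced price
\[
 \OPT\le\E\max_i\theta_i\le\nu+\sum_i\E(\theta_i-\nu)^+\le(2+n/T)\,\nu .
\]
Meanwhile the execution earns at least $\nu$; this is where GM13's factor $1/3$ comes from. With $T=n^2$ the ratio therefore cannot fall below about $1/2$.

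The missing idea is that the aggregate play price $n\nu/T$ must itself be of order $\nu$, which forces $T\approx n$. The paper takes $T=n$, $\nu=1/n$, and two-point priors $\theta_i\in\{1,\ell_n\}$ with $\Prb(\theta_i=1)=2/n^2$. The per-arm play cost $1/n^2$ then makes immediate root selection strictly better than a revealing play, by $2/n^3$. So every ordering stops at its first root with value $m_n\approx1/n$. Inspecting all $n\le T$ arms gives $3/n+O(n^{-2})$, which matches $3\nu$ through $\OPT\le\LP\le\LPB\le3\nu$.

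With $T=n$, your play-then-select-medium mechanism can likely be salvaged. The balance becomes $na(H-\nu)+nb(M-\nu)=2\nu$, and you would take $nb(M-\nu)=o(\nu)$. That is not the proposal as written, though. Separately, a single Bernoulli$(\theta_i)$ observation does not reveal a three-point $\theta_i$; take observations equal to $\theta_i$, as the paper does.
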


\begin{proof}
Let $T=n\ge4$. The $n$ arms are independent and identically
distributed, with
\[
 p_n=\frac{2}{n^2},\qquad
 \ell_n=\frac{n-1}{n^2-2},\qquad
 \Prb(\theta_i=1)=p_n,\quad
 \Prb(\theta_i=\ell_n)=1-p_n.
\]
Conditional on $\theta_i$, every reward observation equals
$\theta_i$. These are bounded exchangeable observations, and
one play reveals the mean completely. The prior mean is
$m_n=p_n+(1-p_n)\ell_n=(n+1)/n^2$.
At $\nu=1/n$ we have $0<\ell_n<\nu<m_n<1$.
After a revealing play, the unique optimal local decision is to
select a high arm and quit a low arm; further plays only incur
cost. Thus the root alternatives have surpluses
\[
 \text{quit:}\quad0,\qquad
 \text{select:}\quad m_n-\nu=\frac1{n^2},\qquad
 \text{play:}\quad p_n(1-\nu)-\frac\nu n
                      =\frac{1-2/n}{n^2}.
\]
Immediate selection is strictly optimal, with advantage $2/n^3$
over playing. Since every arm has surplus $1/n^2$, the balance
$\sum_iG_i(\rho_i;\nu,\nu/T)=\nu$ holds exactly. Its solution
is unique because total surplus is nonincreasing in $\nu$ and
the right-hand side is strictly increasing. Every ordering
therefore stops at its first root and obtains $\ALG=m_n$.
Choosing the largest available posterior at this stopping time
has the same value.

\paragraph{Finite price search.}
The binary search in \citet[Lemma~37]{GM13} uses the lower
balancing endpoint, hence a price $0<\widehat\nu\le1/n$.
Immediate selection is strictly optimal throughout this interval.
For $\widehat\nu\le\ell_n$, a revealing play selects either
outcome but pays the extra cost $\widehat\nu/n$.
For $\ell_n<\widehat\nu\le1/n$, the advantage of immediate
selection over playing is
\[
 m_n-p_n-\widehat\nu(1-p_n-1/n)\ge2/n^3>0.
\]
The root selection surplus is positive in both cases.
Consequently the loss does not depend on exact price computation
or on local tie-breaking.

\paragraph{Optimum and relaxation values.}
Observing all $n$ arms is feasible and reveals all latent means,
so it attains the perfect-information upper bound:
\[
 \OPT=1-(1-\ell_n)(1-p_n)^n
      =\frac3n+O(n^{-2}).
\]
The expansion follows from
$\ell_n=1/n+O(n^{-2})$ and
$1-(1-2/n^2)^n=2/n+O(n^{-2})$.
At the balanced price, \eqref{eq:one-dual} gives
\[
 \OPT\le\LP\le\LPB\le3\nu=3/n.
\]
This proves $\LP/\OPT\to1$, $\LPB/\OPT\to1$, and
$\ALG/\OPT\to1/3$.

The balancing equation does not minimize the dual objective.
Indeed, for this instance the immediate-selection and
revealing-play surplus branches meet at
\[
 \nu_{\min}=\frac{n-1}{(n-2)(n+1)}>\frac1n,
 \qquad
 \LPB=\frac{3n+1}{n(n+1)}.
\]
Both branches are positive there. The convex dual objective
has slope $2-n<0$ immediately below this point and
$1-2/n>0$ immediately above it, proving the displayed optimum.
At the balanced price, however, its unique local maximizers
all select immediately: their aggregate combined load is $n$.
Thus even an asymptotically accurate dual bound can be paired
with local policies whose execution loses a factor of three.
No ordering of these policies recovers the missing exploration;
the execution of Section~\ref{sec:231} instead starts from local
policies that satisfy the load constraint.

\paragraph{Distinct prior means.}
Symmetry is not essential. Keep $p_n$ fixed and replace each
low mean by
\[
 \ell_i=\frac{m_n+\eta_i-p_n}{1-p_n},\qquad
 \eta_i=\frac{2i-n-1}{4n^4},\quad i=1,\ldots,n.
\]
The prior means $m_n+\eta_i$ are distinct,
$\sum_i\eta_i=0$, and $|\eta_i|<1/(4n^3)$.
All low means remain in $(0,1/n)$. At $\nu=1/n$ the root
selection surplus is $1/n^2+\eta_i$, whereas the revealing-play
surplus is unchanged. The selection advantage is still positive,
and the total surplus remains $1/n$. The same finite-search
argument applies. Every ordering obtains
$m_n+\eta_i=1/n+O(n^{-2})$ for its first arm, while observing
all arms obtains
$1-(1-\max_i\ell_i)(1-p_n)^n=3/n+O(n^{-2})$.
The dual sandwich and all limiting ratios therefore persist,
including for the now strict density ordering. Selecting the
largest current posterior also leaves the limiting ratio unchanged.
\end{proof}

\subsection{Model boundaries and Ma's rounding}
\label{sec:markov-scope}

The supermartingale condition \eqref{eq:super} lets selection
of an interrupted arm cover its expected remaining selected payoff;
exchangeability is not needed.

\subsubsection{Bounding the selected payoff of an interrupted policy}

Let $\Gamma_i(u)=G_i(u;0,0)$ be the value of the best single-arm policy
from $u$ when plays are free. For a local policy $P$ with terminal
state $u_\tau$, write $Y_P=r_{u_\tau}\ind\{P\text{ selects}\}$.
The following equivalence identifies
the exact property used to bound an interrupted policy's remaining payoff in
Theorem~\ref{thm:231}.

\begin{proposition}\label{prop:late}
For nonnegative payoffs the following are equivalent:
(i) \eqref{eq:super} holds at every state;
(ii) $r_u\ge\E[Y_P\mid u]$ for every state $u$ and every single-arm
policy $P$ started at $u$; (iii) $\Gamma_i(u)=r_u$ at every state.
\end{proposition}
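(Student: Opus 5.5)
I will prove the cycle (i)$\Rightarrow$(ii)$\Rightarrow$(iii)$\Rightarrow$(i), working on one arm's finite state space. Local depth is part of the state and capped at $T$, so every single-arm policy terminates within a bounded number of plays. Backward induction on remaining depth will therefore replace any appeal to optional stopping.

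\emph{(i)$\Rightarrow$(ii).} Assume \eqref{eq:super} holds and fix a policy $P$ started at $u$, possibly randomized. I will show $r_u\ge\E[r_{u_\tau}\mid u]$ by induction on the maximal remaining depth from $u$.
\begin{itemize}
\item At depth $T$, or if $P$ stops at $u$, the terminal state is $u$ itself and the bound is immediate.
\item Otherwise, conditional on $P$ playing at $u$, the induction hypothesis gives $\E[r_{u_\tau}\mid v]\le r_v$ at each successor $v$. Then \eqref{eq:super} yields $\sum_v p_{uv}r_v\le r_u$.
\item Mixing over the randomized stop/play decision preserves the bound.
\end{itemize}
Finally, nonnegativity gives $Y_P=r_{u_\tau}\ind\{\text{select}\}\le r_{u_\tau}$, so $\E[Y_P\mid u]\le r_u$. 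This is the bounded optional stopping step already used in Theorem~\ref{thm:231}.

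\emph{(ii)$\Rightarrow$(iii).} With $\eta=\lambda=0$, the recursion \eqref{eq:bellman} defines $\Gamma_i(u)$ as the optimal value over single-arm policies from $u$. Each such policy earns exactly $\E[Y_P\mid u]$ when the quit option is counted as zero payoff. By (ii) this value is at most $r_u$, so $\Gamma_i(u)\le r_u$. Conversely, the immediate-select policy attains $r_u$, which gives $\Gamma_i(u)\ge r_u$. Hence $\Gamma_i(u)=r_u$.

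\emph{(iii)$\Rightarrow$(i).} Take any non-terminal state $u$, that is, one below depth $T$ so that the play option exists. Applying \eqref{eq:bellman} at zero prices gives
\[
 r_u=\Gamma_i(u)\ \ge\ \sum_v p_{uv}\Gamma_i(v)=\sum_v p_{uv}r_v,
\]
where the equality on the right uses (iii) at each successor $v$. At depth-$T$ states there are no transitions, so \eqref{eq:super} holds vacuously.

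The main obstacle is bookkeeping rather than any deep step. Two points need care: the induction in (i)$\Rightarrow$(ii) must handle randomized, history-dependent policies, and the identification of $\Gamma_i$ with the supremum over policies must be justified. For the first, I will argue that the arm's state records local depth and the transitions are Markov. Conditioning on the history up to reaching $v$ therefore leaves a valid single-arm policy from $v$, and the induction applies to it. For the second, the standard finite-horizon dynamic-programming argument shows that \eqref{eq:bellman} at zero prices equals the maximum of $\E[Y_P]$ over all single-arm policies, which is what (ii)$\Rightarrow$(iii) needs.
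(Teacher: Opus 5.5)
Your proof is correct and follows the paper's argument: the same cycle (i)$\Rightarrow$(ii)$\Rightarrow$(iii)$\Rightarrow$(i), with (iii)$\Rightarrow$(i) read off the zero-price recursion \eqref{eq:bellman} exactly as in the paper. Your backward induction on remaining depth is the finite-horizon proof of the optional stopping step the paper cites, and your (ii)$\Rightarrow$(iii) makes explicit the bound $\Gamma_i(u)\le r_u$, obtained from (ii) applied to a policy attaining $\Gamma_i(u)$; the paper leaves this direction implicit and only notes that selecting at $u$ attains $r_u$.
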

\begin{proof}
(i)$\Rightarrow$(ii) is optional stopping for the nonnegative
supermartingale sequence $r_{u_t}$. (ii)$\Rightarrow$(iii) holds because selecting at
$u$ attains $r_u$, and (iii)$\Rightarrow$(i) because
\eqref{eq:bellman} with $\eta=\lambda=0$ gives
$r_u=\Gamma_i(u)\ge\sum_vp_{uv}\Gamma_i(v)=\sum_vp_{uv}r_v$.
\end{proof}

The broader Markov models of \citet{gupta2011} and \citet{Ma18} allow late
payoffs. For example, take $n=\lfloor T/2\rfloor$ arms whose
first play enters, with probability $1/T$, a chain requiring
$T-1$ more plays to earn one. All earlier payoffs are zero.
Then $\LPB=\LP=n/T$ but $\OPT=1/T$. The time-indexed relaxations
of \citet{gupta2011} and \citet{Ma18} are designed for such late payoffs.

\subsubsection{Ma\textquotesingle s time-indexed rounding}\label{app:ma-tight}

\begin{proposition}\label{prop:ma-tight}
There are three-arm unit-play Markov instances and optimal
solutions of Ma's time-indexed LP for which his $C=3$ rounding,
under every priority tie-breaking rule, has
\[
 \ALG_{\rm Ma}/\OPT\to4/27,\qquad
 \mathrm{LP}_{\rm Ma}/\OPT\to1.
\]
\end{proposition}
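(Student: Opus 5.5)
The plan is to build an explicit three-arm instance on which every step of Ma's analysis is tight at once. The target factor factors as $4/27=\tfrac13\cdot(1-\tfrac13)^2$. The $\tfrac13$ is the $1/C$ scaling by which Ma's $C=3$ rounding admits each arm's LP-prescribed local policy. The factor $(2/3)^2$ should be the probability that neither of the other two arms interferes. So I want three symmetric arms $1,2,3$ with the following property: arm $i$ earns its LP contribution only if it is admitted and neither competing arm is admitted ahead of it in a way that blocks it. In addition, an optimal integral policy must collect essentially all three LP contributions.

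The construction uses late payoffs, which the time-indexed LP is designed to handle (Section~\ref{sec:markov-scope}). Each arm's first play enters, with small probability $\epsilon$, a deterministic chain of prescribed length ending in a payoff state; otherwise it moves to a zero state.

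\begin{itemize}
\item The chain lengths and the horizon $T$ are chosen so that an optimal time-indexed LP solution schedules the three arms in disjoint time windows.
\item With these windows, the LP value is $3\epsilon$ times the payoff, up to lower-order terms.
\item An adaptive policy can explore the arms one after another. Because $\epsilon\to0$, at most one chain is entered with probability $1-O(\epsilon)$, so $\mathrm{OPT}=\mathrm{LP}_{\rm Ma}(1-O(\epsilon))$ and $\mathrm{LP}_{\rm Ma}/\OPT\to1$.
\item I will check the LP optimality of this solution through a dual certificate in the style of \eqref{eq:two-dual}. The certificate consists of a selection price and time-indexed play prices, under which each arm's prescribed flow is a maximizer.
\end{itemize}

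Next comes the rounding itself. I will follow Ma's procedure literally: sample each arm's policy with probability $1/3$, then process plays by the time indices and priorities from the LP. The aim is to show that arm $i$ is blocked whenever some other arm is also activated. For this I will make the root-play flows of all three arms sit at the same early time index, with the chains occupying the whole remaining budget, so that each activated competitor consumes the time the LP reserved for arm $i$. Then the expected reward is
\[
 \sum_i \tfrac13\,\Prb(\text{no other arm activated})\,R_i=\tfrac13\cdot\tfrac49\cdot 3R,
\]
up to $O(\epsilon)$. To cover every priority tie-breaking rule, I will use symmetry: whichever arm has priority, it loses whenever another is activated, because its payoff lies at the deadline and is displaced. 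The arms can also be perturbed slightly so that the LP ties are strict, in the same way the paper handles distinct prior means after Proposition~\ref{prop:gm13-tight}.

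I expect the main obstacle to be reconciling two requirements. The LP solution must place the three arms at conflicting time indices, which is what makes the rounding lose the factor $(2/3)^2$. Yet it must still be optimal, with value close to $\OPT$. If the LP could separate the arms in time, Ma's rounding would not lose. So I will first try to choose chain lengths for which every optimal LP solution has all three root plays at time one. I will verify this through complementary slackness with time-indexed play prices, and then check that Ma's blocking rule indeed discards the lower-priority arms rather than delaying them.
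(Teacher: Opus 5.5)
Your symmetric plan cannot reach $4/27$, and the tension you flag in your last paragraph cannot be resolved with symmetric arms.

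\emph{Capacity.} Ma's unit-play LP has capacity one at every time label, so three root plays of mass one cannot share an index. If the roots share label one, their masses sum to at most one. The competitors of arm $i$ are then activated with total probability at most $1/3$. Even if every such activation blocked arm $i$, the rounding would keep at least $\frac13\cdot\frac23=\frac29$ of the LP.

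\emph{Blocking.} This problem is more fundamental. An activated competitor in your instance makes its root play and then, with probability $1-\epsilon$, goes to its zero state. It therefore occupies the time reserved for arm $i$ only with probability $O(\epsilon)$. For $\OPT$ to match $\mathrm{LP}_{\rm Ma}\approx3\epsilon R$, the horizon must allow testing all three arms and then completing the entered chain, roughly $T\ge L+3$. But then the two competitor roots fit beside arm $i$'s $L+1$ plays. Arm $i$ completes whenever it is activated and enters its chain, so the ratio tends to $1/3$. If you instead tighten $T$ so that one extra play blocks, the chains force all roots onto the earliest labels. Both $\OPT$ and the LP then collapse to one arm's worth, and you are back in the capacity case.

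\emph{Missing idea.} The construction must be asymmetric. The interference has to come from arms that carry almost no value, yet an optimal LP solution is forced to use them with root mass near one, and they consume a long stretch of plays deterministically.

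The paper takes $T=2m$ and one valuable arm. Its first play succeeds with probability $p=1/m$ and then needs $m$ more plays to earn one, with LP labels $1$ and $m+1,\ldots,2m$. It adds two blockers, which are deterministic chains of $m+1$ plays earning only $\delta=m^{-3}$.

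The valuable arm uses expected work $1+mp=2$ out of $2m$. Because $\delta>0$, optimality fills the spare capacity with blockers of root mass $q=(m-1)/(m+1)\to1$, so each is activated with probability tending to $1/3$. Meanwhile $\mathrm{LP}_{\rm Ma}=p+2q\delta$ and $\OPT=p+(1-p)\delta$ agree asymptotically. Optimality is proved directly from $v\le pa\le p$ and $a+mv+(m+1)s\le T$; no time-indexed dual is needed.

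An activated blocker always completes, since every depth carries the same LP flow. Its first $m$ labels are strictly below $T$, so they precede the valuable arm's final play at label $T$ under every tie-breaking rule and every flow decomposition. Since $m+(m+1)>T$, any activated blocker kills the valuable payoff. This gives $\ALG_{\rm Ma}=\frac p3(1-\frac q3)^2+O(\delta)$. Robustness to ties thus comes from these strict label inequalities, not from symmetry or perturbation.
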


\begin{proof}
We use the unit-play rounding of \citet[Sections~4.1--4.2]{Ma18}.
It independently activates each arm with its LP root-play mass
divided by $C=3$, and attaches an LP time label to every subsequent
play. It starts an arm with smallest label and continues until the
next label $t$ satisfies $t\ge2\,\mathrm{depth}(u)$, then chooses
again. We give an instance and an optimal LP solution for which
the loss holds for every priority tie-breaking rule and every
valid local flow decomposition.

\paragraph{Instance and feasible optimum.}
For an integer $m\ge3$, set
\[
 T=2m,\qquad p=1/m,\qquad \delta=m^{-3},\qquad
 q=\frac{m-1}{m+1}.
\]
The valuable arm's first play goes to a dead state with probability
$1-p$, and otherwise begins a deterministic chain of $m$ further
plays whose last play earns one. Each of two blocking arms is a
deterministic chain of $m+1$ plays earning $\delta$ only on its last
play. Arms terminate after earning their reward. Thus any positive
payoff requires $m+1$ plays on its arm, and $T<2(m+1)$ permits
at most one such payoff. Equivalently, give an unfinished arm
selectable payoff zero and a completed arm its earned payoff;
maximizing the terminal choice has the same value.

Testing the valuable arm first, completing it on success, and
otherwise completing one blocker attains
\begin{equation}\label{eq:ma-opt}
 \OPT=p+(1-p)\delta.
\end{equation}
This is optimal: on the valuable arm's good outcome no policy can
earn more than one, and on its bad outcome no policy can earn more
than $\delta$.

\paragraph{A feasible time-indexed solution.}
Play the valuable root at time label one with mass one. On success,
assign its $m$ continuation plays the labels
$m+1,m+2,\ldots,2m$. Its expected work is $1+mp=2$ and
its objective contribution is $p$.

Each blocker has root-play mass $q$ and completes whenever played.
Conditional on activation, choose $m+1$ distinct time labels from
$\{2,\ldots,2m\}$, in increasing order. The marginal probability
of using each of the first $m-1$ labels is $1/(2q)$ and of using
each of the last $m$ labels is $(1-p)/(2q)$. Such a distribution
exists explicitly: choose an integer $J$ supported on the floor
and ceiling of $(m+1)/2$ with that mean; choose $J$ labels uniformly
from the first group and $m+1-J$ uniformly from the second group.
For $m\ge3$, $1\le J\le m-1$ and $1\le m+1-J\le m$.
The required marginals follow from
\[
 q\,\frac{\E J}{m-1}=\frac12,
 \qquad
 q\,\frac{m+1-\E J}{m}=\frac{1-p}{2}.
\]

These increasing label sequences induce feasible single-arm flows
in Ma's time-indexed relaxation. At label one the valuable root
uses all capacity. At each of the next $m-1$ labels the blockers
use $1/2+1/2=1$; at each of the final $m$ labels they use
$(1-p)/2+(1-p)/2$, and the valuable continuation uses $p$.
Every time constraint is therefore met with equality. The value is
\begin{equation}\label{eq:ma-lp}
 \mathrm{LP}_{\rm Ma}=p+2q\delta.
\end{equation}

To prove optimality, let $a$ be the valuable arm's total root-play
mass, $v$ its completion mass, and $s$ the sum of blocker completion
masses in any feasible LP solution. Flow conservation and total
time capacity imply
\[
 v\le pa\le p,\qquad a+mv+(m+1)s\le T.
\]
Its reward is consequently at most
\[
 v+\delta s
 \le \frac{\delta T}{m+1}
       +v\left(1-\frac{\delta m}{m+1}\right)
       -\frac{\delta a}{m+1}
 \le \frac{\delta T}{m+1}
       +v\left(1-\frac{\delta(m+1/p)}{m+1}\right)
 \le p+\frac{\delta(T-1-mp)}{m+1}
  =p+2q\delta.
\]
The coefficient of $v$ is positive, so the last step uses $v\le p$.
Together with the feasible solution, this proves \eqref{eq:ma-lp}.

\paragraph{The rounding loss, independent of ties.}
The valuable arm is activated with probability $1/3$, and each
blocker independently with probability $q/3$. All the flow mass
of an activated blocker reaches its last play: each depth has the
same total LP flow $q$, so any valid flow decomposition abandons
it with probability zero. Its labels increase strictly and are
at most $T$. Consequently, its first $m$ plays have labels
strictly below $T$, under every such decomposition.

Conditional on activation and a good first outcome, the valuable
arm has its next play at depth $j$ and label $m+j$, for
$j=1,\ldots,m$. Since $m+j\ge2j$, the execution returns to the
priority comparison after each play. In particular, its final
play is compared at depth $m$ with label $T=2m$.
Before this final play can be executed, every active blocker
must have made its first $m$ plays: otherwise its next label
is strictly below $T$ and precedes the valuable arm's final play.
Even one active blocker would force at least
\[
 m+(m+1)=2m+1>T
\]
plays before the valuable reward is earned. Therefore the
valuable arm completes if and only if it is activated, its first
outcome is good, and neither blocker is activated. Its contribution
is exactly $p(1-q/3)^2/3$. There is at most one blocker payoff, giving
\[
 \ALG_{\rm Ma}=
 \frac p3\left(1-\frac q3\right)^2+O(\delta).
\]
Since $q\to1$ and $\delta/p\to0$, equations
\eqref{eq:ma-opt}--\eqref{eq:ma-lp} prove both limits in
Proposition~\ref{prop:ma-tight}. More generally, activating each
arm with its root-play mass times a fixed factor $x\in[0,1]$ gives
limiting ratio $x(1-x)^2$, whose maximum is $4/27$ at $x=1/3$.
\end{proof}

Short policies avoid interruption
and therefore also permit late payoffs
(Theorem~\ref{thm:short}).

\section{Short policies}
\label{sec:short}

The construction follows the stochastic-packing execution framework of
\citet{GM07}, also developed in \citet[Sections~3.2--3.3]{GM08}.
Bounded policy durations let us treat the selection constraint and
the play budget separately.

The stronger, standalone selection constraint in $\LP$ can be analyzed once the
deadline is unlikely to interfere with the local policies.
Throughout this section, fix independent local policies with nonnegative
selected rewards satisfying $\sum_iq_i\le1$ and $\sum_iw_i\le T$, and write
$\R=\sum_iR_i$. Let $\rh=1-e^{-1}$.
Work means the number of plays: $\tau_i$ is the realized play count
and $w_i=\E[\tau_i]$ its expectation. Lemma~\ref{lem:selection-order}
first ignores the deadline and bounds the loss from stopping at the
first selection. Theorem~\ref{thm:short} then restores the hard budget
for policies of duration at most $k$, using random thinning to create
slack and reserving $k$ plays before starting a policy.
Figure~\ref{fig:short-execution} separates these two arguments.

\begin{figure}[htbp]
\centering
\begin{tikzpicture}[x=1cm,y=1cm,font=\small,>=Latex,
 stage/.style={draw=black!65,rounded corners=2pt,fill=black!2,
  align=center,text width=4.15cm,minimum height=2.15cm,inner sep=5pt}]
 \node[stage] (policies) at (2.2,0)
  {Independent local policies\\$\tau_i\le k$\\
   $\sum_iq_i\le1,\quad\sum_iw_i\le T$};
 \node[stage] (thin) at (7.3,0)
  {Random thinning\\retain: $1-\gamma$; null: $\gamma$\\
   expected total plays\\$\le(1-\gamma)T$};
 \node[stage] (order) at (12.4,0)
  {Order by $R_i/q_i$\\stop at first selection\\
   Lemma~\ref{lem:selection-order}: factor\\$\rh=1-e^{-1}$};
 \draw[->,line width=.7pt] (policies.east)--(thin.west);
 \draw[->,line width=.7pt] (thin.east)--(order.west);
 \node at (7.3,-1.42)
  {Theorem~\ref{thm:short}: start only with at least $k$ plays remaining};
 \fill[black!12] (0,-2.85) rectangle (9.4,-2.4);
 \fill[orange!18] (11.6,-2.85) rectangle (14.6,-2.4);
 \draw[black!65] (0,-2.85) rectangle (14.6,-2.4);
 \node at (4.7,-2.625) {plays already used};
 \draw[draw=blue!65!black,fill=blue!10]
  (9.4,-2.2) rectangle (12,-1.8);
 \node at (10.7,-2) {$\tau_i\le k$};
 \draw[black!55,dashed] (9.4,-2.2)--(9.4,-2.85);
 \draw[black!55,dashed] (11.6,-1.8)--(11.6,-2.85);
 \node[below] at (0,-2.85) {$0$};
 \node[below] at (11.6,-2.85) {$T-k$};
 \node[below] at (14.6,-2.85) {$T$};
 \draw[->] (9.4,-3.15)--(9.4,-2.88);
 \node[below] at (9.4,-3.15) {start};
 \draw[<->] (11.6,-3.4)--node[below] {$k$} (14.6,-3.4);
 \node at (7.3,-4.08)
  {$\ALG\ge(\rh-\delta)(1-\gamma)\R$};
\end{tikzpicture}
\caption{A bounded-duration refinement of the stochastic-packing execution
framework of \citet{GM07}; see also \citet[Sections~3.2--3.3]{GM08}.
Thinning scales rewards and selection probabilities equally, preserving
the order $R_i/q_i$. Lemma~\ref{lem:selection-order} bounds the loss
from stopping at the first selection, ignoring the deadline.
Theorem~\ref{thm:short} adds capacity
slack and a $k$-play reserve: every started policy finishes, while the
probability of exclusion by the budget is at most $\delta$.}
\label{fig:short-execution}
\end{figure}

\begin{lemma}[Selection-only order]\label{lem:selection-order}
Order by nonincreasing $R_i/q_i$, omitting $q_i=0$.
With $s_i=\prod_{j<i}(1-q_j)$ and $Q=\sum_iq_i$, we have
 $\sum_iR_i s_i\ge\ph(Q)\R\ge\rh\R.$
\end{lemma}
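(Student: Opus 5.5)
The approach mirrors the density-ordering step at the end of the proof of Theorem~\ref{thm:231}, with $q_i$ now playing the role of the load $b_i$. Write $d_i=R_i/q_i$ for the arms with $q_i>0$, in nonincreasing order, and set $d_{n+1}=0$. Arms with $q_i=0$ are omitted; I will assume these carry $R_i=0$, since $0\le R_i\le q_i\max_u r_u$. Define
\[
 H_k=\sum_{i\le k}s_iq_i,\qquad X_k=\sum_{i\le k}q_i,
\]
with $H_0=X_0=0$. Writing $R_i=d_iq_i$ and applying Abel summation gives
\[
 \sum_iR_is_i=\sum_k(d_k-d_{k+1})H_k,\qquad \R=\sum_k(d_k-d_{k+1})X_k .
\]
The weights $d_k-d_{k+1}$ are nonnegative by the ordering. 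It therefore suffices to show $H_k\ge\ph(Q)X_k$ for every $k$.

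The key identity is telescoping. Since $s_{i+1}=s_i(1-q_i)$, we have $s_iq_i=s_i-s_{i+1}$, and hence
\[
 H_k=1-\prod_{i\le k}(1-q_i).
\]
Each $q_i\in[0,1]$, so every factor lies in $[0,1]$ and no positive part is needed, unlike in \eqref{eq:survival}. The inequality $1-x\le e^{-x}$ then gives $H_k\ge1-e^{-X_k}$.

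It remains to compare $1-e^{-X_k}$ with $\ph(Q)X_k$. The function $x\mapsto1-e^{-x}$ is concave and vanishes at $0$. Consequently $\ph(x)=(1-e^{-x})/x$ is nonincreasing on $(0,\infty)$, with $\ph(0)=1$. Because $X_k\le Q$, this yields
\[
 1-e^{-X_k}=\ph(X_k)X_k\ge\ph(Q)X_k .
\]
Summing against the nonnegative weights gives $\sum_iR_is_i\ge\ph(Q)\R$.

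The final inequality $\ph(Q)\ge\ph(1)=1-e^{-1}=\rh$ follows from monotonicity of $\ph$ together with the standing assumption $Q\le1$.

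I do not expect a genuine obstacle here. The only points needing care are the treatment of arms with $q_i=0$, the conventions $d_{n+1}=0$ and $\ph(0)=1$ (covering $Q=0$), and checking that the ordering makes the Abel weights nonnegative.
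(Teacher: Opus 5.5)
Your proposal is correct and matches the paper's proof: both telescope $s_kq_k=s_k-s_{k+1}$ to get $1-\prod_{i\le k}(1-q_i)\ge1-e^{-Q_k}\ge\ph(Q)Q_k$, then conclude by summation by parts against the nonincreasing densities $R_i/q_i$, exactly as in \eqref{eq:survival}. Your explicit handling of the $q_i=0$ arms, of $\ph(0)=1$, and of $Q\le1$ for the final $\rh$ bound fills in details the paper leaves implicit.
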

\begin{proof}
Let $P_k=1-\prod_{i\le k}(1-q_i)$ and
$Q_k=\sum_{i\le k}q_i$. Then $P_k-P_{k-1}=q_ks_k$ and
$P_k\ge1-e^{-Q_k}\ge\ph(Q)Q_k$. Summation by parts with the
nonincreasing densities $R_i/q_i$ proves the result, exactly
as in \eqref{eq:survival}.
\end{proof}

\begin{corollary}[One-play revelation]\label{cor:one-play}
If one play resolves each arm's selectable payoff and $n\le T$,
selection-only ordering obtains at least $\rh\LP$.
The gap $1/\rh$ against $\LP$ is tight.
\end{corollary}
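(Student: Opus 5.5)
The plan has two parts: a lower bound via Lemma~\ref{lem:selection-order}, and tightness via the needle family of the introduction.

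\emph{Lower bound.} Take an optimal solution of \eqref{eq:two}. When one play resolves an arm's selectable payoff, every nontrivial local policy is essentially ``play once, then select with some probability depending on the revealed value''. Selecting at the root without playing is the other option.

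First I would argue that the plays can be dropped from consideration. Since $n\le T$, executing every arm's policy for its single play never exceeds the budget, so the hard budget never binds and only the selection constraint matters. I would then execute in the order of nonincreasing $R_i/q_i$, stopping at the first local selection. Each policy takes at most one play, so the policies are short, with $k=1$, and no thinning or reserve is needed: the total number of plays is at most $n\le T$ deterministically.

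Policy $i$ is reached exactly when no earlier policy selected. By independence across arms, this has probability $s_i=\prod_{j<i}(1-q_j)$, and the reached policy contributes $R_i$ in expectation, independently of the past. Hence $\ALG\ge\sum_iR_is_i$. Lemma~\ref{lem:selection-order} with $Q=\sum_iq_i\le1$ then gives $\ALG\ge\ph(Q)\R\ge\ph(1)\LP=\rh\LP$, because $\ph$ is decreasing.

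Two technical points need care. One is root selections, which have $w_i=0$; these are harmless. The other is policies with $q_i=0$ but positive reward, which cannot occur because the payoffs are nonnegative. If the execution exhausts the list without a selection, choosing any arm only adds nonnegative payoff.

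\emph{Tightness.} I would use the needle family with $T=n$. Each arm has $\theta_i=1$ with probability $1/n$ and $0$ otherwise, and one play reveals it.

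For the LP, play every arm once and select on success. This gives $q_i=1/n$, $w_i=1$ and $R_i=1/n$, so $\LP\ge1$. Every selected payoff is at most one and total selection mass is at most one, so $\LP\le1$, and therefore $\LP=1$.

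For the optimum, observing all arms is feasible and attains the perfect-information value $1-(1-1/n)^n\to1-e^{-1}$. This is the best possible, since the perfect-information value bounds $\OPT$.

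Hence $\LP/\OPT\to1/\rh$, so the gap $1/\rh$ against $\LP$ cannot be improved.

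\emph{Main obstacle.} I expect the main difficulty to be justifying that $\OPT$ cannot exceed the perfect-information value here, and that the ordering loss in the lower bound is the only loss. Both are immediate in this one-play model: the first because complete observation is feasible, the second because the budget never binds, so the delicate scheduling argument of Theorem~\ref{thm:short} is not needed.
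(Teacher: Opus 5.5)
Your proof is correct and follows the paper's argument: reduce each optimal local policy to at most one play, observe that $n\le T$ keeps the budget from ever binding so that policy $i$ is admitted with probability exactly $s_i$, apply Lemma~\ref{lem:selection-order}, and use the needle instance with $T=n$ for tightness. The one step you only assert, that optimal local policies are ``essentially'' one-play, is what guarantees at most $n\le T$ plays, and the paper justifies it as follows: after the first play the selectable payoff is fixed, so all remaining decisions can be replaced by a selection coin with the conditional selection probability, which preserves $R_i$ and $q_i$ and does not increase $w_i$.
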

\begin{proof}
An optimal local policy can be reduced to at most one play,
because further observations do not change the selectable value;
its remaining selection decision can be replaced by a coin.
Every policy can finish within the global budget if necessary,
so its admission probability is exactly $s_i$.
Apply Lemma~\ref{lem:selection-order}.

For tightness, recall the needle instance of \citet[Section~3.5]{GM08}:
$n$ independent arms with
$\Prb(\theta_i=1)=1/n$. With $T=n$, $\LP=1$ and
$\OPT=1-(1-1/n)^n\to\rh$. 
With $T=n^2$ the same instance has $u=\log n/n^2\to0$, so the gap persists in
the non-vanishing regime (Corollary~\ref{cor:asymptotic}).
\end{proof}

When $n\le T$, inspecting every arm is optimal, so the content of
Corollary~\ref{cor:one-play} is the integrality gap. When $n>T$,
one-play revelation is closely related to the adaptive ProbeMax problem,
which keeps the best of a bounded number of probed values;
approximation schemes against the optimum are known for ProbeMax
\citep{fu2018ptas,segev2025eptas}. Against $\LP$,
Theorem~\ref{thm:short} with $k=1$ gives
$(\rh-O(\sqrt{\log T/T}))\LP$; see the remark after it.

\begin{theorem}[Short policies]\label{thm:short}
Suppose $\tau_i\le k$ almost surely, where $1\le k\le T$, and let
 $0<\delta<\rh$. Define
\begin{equation}\label{eq:slack}
 \gamma=\frac{k+\sqrt{2kT\log(1/\delta)}+(2k/3)\log(1/\delta)}{T}.
\end{equation}
 If $\gamma<1$, there is a feasible contiguous execution with
 \begin{equation}\label{eq:short-bound}
\ALG\ge(\rh-\delta)(1-\gamma)\R.
\end{equation}
This theorem does not require \eqref{eq:super}.
\end{theorem}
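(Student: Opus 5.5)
The execution follows Figure~\ref{fig:short-execution}. First, independently of everything else, each policy $P_i$ is retained with probability $1-\gamma$ and replaced by the null policy otherwise. The retained (thinned) policies have parameters $R_i'=(1-\gamma)R_i$, $q_i'=(1-\gamma)q_i$ and $w_i'=(1-\gamma)w_i$, so the ratios $R_i/q_i$ are unchanged. Order the arms by nonincreasing $R_i/q_i$ and run them contiguously. Each policy is started only if at least $k$ plays remain, and execution stops at the first local selection. Because $\tau_i\le k$, every started policy finishes within the budget. Its selected payoff is therefore exactly $Y_i$, so \eqref{eq:super} is never needed to cover an interrupted policy.

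Let $B_i$ be the event that no earlier retained policy selected, and $D_i$ the event that the plays used before $i$ are at most $T-k$. Policy $i$ is admitted on $B_i\cap D_i$, and both events depend only on earlier arms, which are independent of $Y_i$. Hence
\[
\ALG\ \ge\ (1-\gamma)\sum_i R_i\,\Prb(B_i\cap D_i)\ \ge\ (1-\gamma)\sum_i R_i\bigl(s_i'-\Prb(D_i^c)\bigr),
\]
where $s_i'=\prod_{j<i}(1-q_j')=\Prb(B_i)$, using $\Prb(B_i\cap D_i)\ge\Prb(B_i)-\Prb(D_i^c)$.

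The first term is handled by Lemma~\ref{lem:selection-order} applied to the thinned policies. Since $Q'\le1$, it gives $\sum_iR_i's_i'\ge\rh\sum_iR_i'$.

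For the second term, bound $D_i^c$ by the event that the total complete local work of the thinned predecessors exceeds $T-k$. This overcounts the plays actually used, since it includes work from policies after a selection, so the bound is valid. The total thinned work $S=\sum_j\tau_j'$ is a sum of independent variables in $[0,k]$ with mean at most $(1-\gamma)T$. Its variance is at most $\sum_j\E[\tau_j'^2]\le k\sum_j\E\tau_j'\le kT$.

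Bernstein's inequality then gives, for a deviation $t$,
\[
\Prb(S\ge \E S+t)\ \le\ \exp\!\left(-\frac{t^2}{2kT+2kt/3}\right).
\]
We need $t=\gamma T-k=\sqrt{2kT\log(1/\delta)}+(2k/3)\log(1/\delta)$. With $L=\log(1/\delta)$, $t$ is exactly the positive root of $t^2=L(2kT+2kt/3)$ up to the slack from $\sqrt{a+b}\le\sqrt a+\sqrt b$, so the bound is at most $\delta$ and $\Prb(D_i^c)\le\delta$ for every $i$. Combining, $\ALG\ge(1-\gamma)(\rh-\delta)\R$. A deterministic feasible execution exists because the randomized thinning is a mixture of such executions. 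The condition $\gamma<1$ guarantees $T-k\ge0$ and that the retention probability is meaningful.

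The main obstacle is making the Bernstein constants match \eqref{eq:slack} exactly. The variance bound must use $\E\tau^2\le k\E\tau$ together with the fact that thinning only reduces second moments. The deficit $T-k-\E S\ge \gamma T-k$ must be solved against the exponent. I would check the quadratic explicitly: with $t=\sqrt{2kTL}+2kL/3$ one has $t^2\ge 2kTL+\tfrac{2kL}{3}t$, since $t^2-\tfrac{2kL}{3}t=t\sqrt{2kTL}\ge 2kTL$. A secondary subtlety is handling the coupling between $B_i$ and $D_i$ through the union-type bound above, rather than assuming the two events are independent.
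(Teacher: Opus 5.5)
Your proposal is correct and follows essentially the same route as the paper: thinning with retention probability $1-\gamma$, density order, admission only while at least $k$ plays remain, the bound $\Prb(A_i)\ge\prod_{j<i}(1-\bar q_j)-\Prb\bigl(\sum_{j<i}\bar\tau_j>T-k\bigr)$, Bernstein with variance at most $kT$ and deviation $\gamma T-k$, and Lemma~\ref{lem:selection-order} applied to the thinned policies. Your explicit check that $t=\sqrt{2kTL}+2kL/3$ satisfies $t^2\ge L(2kT+2kt/3)$ fills in a step the paper states without detail; the derandomization remark is unnecessary, since the randomized execution is itself a feasible policy.
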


\begin{proof}
Omit policies with $q_i=0$, whose selected reward is zero, and fix
the order by nonincreasing $R_i/q_i$. Independently replace each
policy by the null policy with probability $\gamma$. If $B_i$ is
its retention indicator, then the modified local variables are
\begin{equation}\label{eq:thinning}
 (\bar\tau_i,\bar I_i,\bar Y_i)=B_i(\tau_i,I_i,Y_i),
 \qquad \Prb(B_i=1)=1-\gamma.
\end{equation}
The $B_i$ are independent of all local executions and of each other.
Write $\bar R_i=\E\bar Y_i$ and $\bar q_i=\E\bar I_i$.
Expected total play count is now at most $(1-\gamma)T$, and rewards
and selection probabilities are scaled by $1-\gamma$, preserving
the fixed order.
Run each admitted policy to completion and stop at its selection.
Start a new policy only when the number of plays already used is at
most $T-k$. Thus every admitted policy finishes within $T$ plays.
If no policy selects, choose an arm maximizing its currently
selectable payoff $r_{u_i}$, which is nonnegative.

Let $A_i$ be the event that policy $i$ starts and
$\alpha_i=\Prb(A_i)$. In terms of the complete local variables,
$A_i$ occurs exactly when every earlier $\bar I_j$ is zero and
$\sum_{j<i}\bar\tau_j\le T-k$. Therefore
\[
 \alpha_i\ge\prod_{j<i}(1-\bar q_j)
            -\Prb\left(\sum_{j<i}\bar\tau_j>T-k\right).
\]
Since $0\le\bar\tau_j\le k$, the variance of the sum is at most
$k\sum_j\E\bar\tau_j\le kT$. 
Put $h=\log(1/\delta)$ and $a=\sqrt{2kTh}+2kh/3$. The sum has mean at most
$(1-\gamma)T$, and \eqref{eq:slack} gives $\gamma T-k=a$, so exceeding $T-k$
requires a deviation of more than $a$ above the mean. Bernstein's inequality
gives
\[
 \Prb\left(\sum_{j<i}\bar\tau_j>T-k\right)
 \le\exp\left(-\frac{a^2}{2kT+2ka/3}\right)\le\delta.
\]
The event $A_i$ depends only on earlier policies, so it is independent
of $\bar Y_i$. Every admitted policy finishes, and at most one selects.
The terminal rule therefore obtains expected reward at least
$\sum_i\E[\ind_{A_i}\bar Y_i]=\sum_i\alpha_i\bar R_i$.
Apply Lemma~\ref{lem:selection-order} and
$\sum_i\bar R_i=(1-\gamma)\R$.
\end{proof}
Since $\gamma\ge\sqrt{2k\log(1/\delta)/T}$, the scheduling loss $\gamma$
vanishes if and only if $k\log(1/\delta)=o(T)$.
For $k=1$ and $\delta=1/T$, this yields
$(\rh-O(\sqrt{\log T/T}))\LP$ for one-play revelations,
regardless of whether $n\le T$.

\subsection{Several plays or selections}
\label{app:KL}
Several plays per step model parallel evaluations on $M$ machines. Choosing up to $L$ attributes for a classifier at test time \citep{kapoor2005bounded} (assuming independent features for the present analysis), or retaining up to $L$ independent factors in a response-surface model, motivates allowing multiple selections.
At most $M$ distinct arms may be played per step and at most $L$ arms
selected at the end, with the sum of their selectable payoffs as the
objective. If exactly $L$ selections are required, additional arms
can be selected without reducing the value because payoffs are nonnegative.

\paragraph{Interrupted policies.}
This paragraph concerns the execution of Section~\ref{sec:231}, which
may interrupt a policy at the deadline.
At a stopping history, let $V_j$ be the conditional expected remaining
selected payoff of an unfinished policy $j$ and $r_j$ its current payoff. Under
\eqref{eq:super}, $V_j\le r_j$ at every observed history. With $U\le M$
unfinished policies, selecting all of them covers $\sum_{j=1}^UV_j$.
Selecting only $L<U$ of them guarantees merely
\[
 \sum_{\text{largest }L}r_j\ \ge\ \sum_{\text{largest }L}V_j
 \ \ge\ \frac LU\sum_{j=1}^UV_j,
\]
and the factor $L/U$ is attained when all $V_j$ and $r_j$ equal a
common value.
Completed selections can also use up selection capacity. Reducing the
analysis to $L=1$ while several policies are unfinished therefore loses
exactly the property behind \eqref{eq:admitted}: at most one policy
is unfinished or has selected when that execution stops.

\paragraph{Short (and deferred) policies.}
Short policies avoid interruption, so selection can be deferred to the
end. Let $\mathrm{LP}_{M,L}$ be \eqref{eq:two} with $\sum_iq_i\le L$ and
$\sum_iw_i\le MT$. Each arm is played at most once per step, so
marginalizing a feasible global policy again gives feasible local flows,
and $\OPT\le\mathrm{LP}_{M,L}$. Write $\mathrm{Po}_x$ for a Poisson variable
of mean $x$, $\psi_L(x)=\E\min\{L,\mathrm{Po}_x\}$, and
$c_L=\psi_L(L)/L=1-L^L/(L!\,e^L)$, so that $c_1=\rh$.

\begin{proposition}\label{prop:KL}
Let independent local policies with nonnegative selected rewards satisfy
$\tau_i\le k$, $\sum_iq_i\le L$ and $\sum_iw_i\le MT$, and put
$\R=\sum_iR_i$. For $0<\delta<c_L$ let
\[
 \gamma_M=\frac{Mk+\sqrt{2kMT\log(1/\delta)}+(2k/3)\log(1/\delta)}{MT},
\]
which is \eqref{eq:slack} when $M=1$. If $\gamma_M<1$, there is a
feasible execution with $\ALG\ge(c_L-\delta)(1-\gamma_M)\R$.
\end{proposition}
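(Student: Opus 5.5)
The plan is to repeat the proof of Theorem~\ref{thm:short} with two changes. First, the single-machine deadline argument becomes list scheduling on $M$ machines. Second, Lemma~\ref{lem:selection-order} is replaced by an $L$-selection analogue with constant $c_L$.

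\emph{Thinning and scheduling.} Omit policies with $q_i=0$ and fix the order by nonincreasing $R_i/q_i$. Replace each policy independently by the null policy with probability $\gamma_M$, exactly as in \eqref{eq:thinning}. This scales $R_i$ and $q_i$ by $1-\gamma_M$, keeps the order, and leaves expected total work at most $(1-\gamma_M)MT$. Execute by greedy list scheduling. Whenever a machine is free, it starts the next policy in the order, but only if the current time is at most $T-k$. Each policy lives on its own arm and runs on one machine, so no arm is played twice in a step, and every started policy finishes by time $T$.

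All selections are deferred to the end. At that point, among the admitted policies whose local decision is to select, take the first $L$ in the order. Selecting the $L$ admitted arms with largest current payoffs can only do better, since payoffs are nonnegative.

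For greedy list scheduling, all $M$ machines are busy until policy $i$ starts. Its start time is therefore at most $\sum_{j<i}\bar\tau_j/M$, so it is admitted whenever $\sum_{j<i}\bar\tau_j\le M(T-k)$. The sum has mean at most $(1-\gamma_M)MT$ and variance at most $k\cdot MT$, because $0\le\bar\tau_j\le k$. The definition of $\gamma_M$ makes $\gamma_M MT-Mk=\sqrt{2kMTh}+2kh/3$ with $h=\log(1/\delta)$. Bernstein's inequality then bounds the probability that the budget excludes policy $i$ by $\delta$, exactly as in Theorem~\ref{thm:short}.

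\emph{Reward accounting.} Let $N_{<i}=\sum_{j<i}\bar I_j$, computed from the complete local variables. Policy $i$ contributes $\bar Y_i$ to the first-$L$ rule whenever two things hold: it is admitted, and $N_{<i}<L$. This event depends only on earlier policies, so it is independent of $\bar Y_i$. Hence
\[
\ALG\ge\sum_i\bigl(\Prb(N_{<i}<L)-\delta\bigr)\bar R_i .
\]
It remains to show $\sum_i\Prb(N_{<i}<L)\bar R_i\ge c_L\sum_i\bar R_i$. This step then absorbs the $-\delta$ term, giving $(c_L-\delta)(1-\gamma_M)\R$.

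For this, set $H_k=\E\min\{L,N_{\le k}\}$ and $Q_k=\sum_{i\le k}\bar q_i$. Then
\[
H_k-H_{k-1}=\bar q_k\Prb(N_{<k}<L),
\]
so summation by parts with the nonincreasing densities (as in \eqref{eq:survival}) reduces the claim to $H_k\ge c_L Q_k$ for all $k$, given $Q\le L$.

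\emph{Main obstacle.} The key step is proving $H_k\ge c_LQ_k$. I would use convex ordering. A Bernoulli$(q)$ variable is dominated by $\mathrm{Po}_q$ in convex order, and this order is closed under independent sums. So $N_{\le k}\le_{cx}\mathrm{Po}_{Q_k}$, and since $x\mapsto\min\{L,x\}$ is concave, $H_k\ge\psi_L(Q_k)$.

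Next, $\psi_L$ is concave with $\psi_L(0)=0$, because its derivative is $\Prb(\mathrm{Po}_x<L)$, which is decreasing in $x$. Hence $\psi_L(x)/x$ is nonincreasing, and for $Q_k\le Q\le L$,
\[
\psi_L(Q_k)\ge Q_k\,\psi_L(L)/L=c_LQ_k .
\]

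The remaining check is that deferring selections loses nothing. Every admitted policy runs to completion regardless of selections, so neither the admission event nor the budget argument depends on $L$. The first $L$ selecting policies are a feasible choice, and this justifies the accounting above.
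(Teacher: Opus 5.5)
Your proof is correct. Its scheduling half coincides with the paper's: thinning, greedy starts on $M$ slots only while at least $k$ steps remain, the admission condition $\sum_{j<i}\bar\tau_j\le M(T-k)$, and Bernstein with deviation $\gamma_M MT-Mk$. The reward accounting, however, follows a genuinely different route.

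\textbf{The paper's route.} It fixes an \emph{arbitrary} order and keeps the $L$ largest nominated payoffs. It writes their sum as $\int_0^\infty\min\{L,C_v\}\,dv$ with $C_v=\#\{i:\bar Y_i>v\}$, and applies $\E\min\{L,C_v\}\ge\psi_L(\mu_v)\ge c_L\mu_v$ at every threshold $v$ to the \emph{complete} local values. Only afterwards does it subtract the values of unstarted policies, at expected cost $\sum_i\Prb(A_i^c)\E\bar Y_i\le\delta(1-\gamma_M)\R$. Its Poisson comparison comes from repeatedly splitting indicators.

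\textbf{Your route.} You keep the density order and analyze the weaker ``first $L$ selectors'' rule. You credit policy $i$ with probability at least $\Prb(N_{<i}<L)-\delta$, and use $H_i-H_{i-1}=\bar q_i\Prb(N_{<i}<L)$ with summation by parts, a direct $L$-selection extension of Lemma~\ref{lem:selection-order}. Your Poisson comparison is the convex-order fact $\mathrm{Bernoulli}(q)\le_{\mathrm{cx}}\mathrm{Po}_q$, together with closure of convex order under independent sums. This is standard and checkable in one line: the means agree, and for $t\in[0,1]$ the inequality $\E(X-t)^+\le\E(Y-t)^+$ reads $q(1-t)\le q-t+te^{-q}$, i.e.\ $e^{-q}\ge1-q$.

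\textbf{What each buys.} The paper's argument is order-free and needs no densities. That is what lets the same argument run with the current arm forced first in Proposition~\ref{prop:setup-short}. It also yields, for $M=L=1$, a genuinely different execution from Theorem~\ref{thm:short}. Your argument specializes exactly to the proof of Theorem~\ref{thm:short} when $M=L=1$. It shows that the first $L$ selections in density order already achieve $c_L$, so the terminal rule need not compare payoffs. It also avoids the splitting-and-limit step.

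One minor point: you reuse $k$ both as the depth bound and as the prefix index in $H_k,Q_k$.
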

\begin{proof}
Fix any order of the policies and replace each independently by the
null policy with probability $\gamma_M$. Define
$(\bar\tau_i,\bar I_i,\bar Y_i)$ as in \eqref{eq:thinning}, now with
retention probability $1-\gamma_M$.
At each step, every started and unfinished policy makes one play of
its own arm. While fewer than $M$ policies are running, start the
next listed policy, but only at steps $t\le T-k+1$.
A policy that selects or quits without a play frees its slot immediately.
Every started policy therefore finishes by step $T$, and at most
$M$ arms are played per step. Selection is deferred: a completed
local selection nominates its arm for the final decision.
After step $T$, select the up to $L$ nominated arms with the largest
payoffs, selecting all nominees if there are fewer than $L$ and
none if there are no nominees.

\paragraph{Admission probability.} 
If policy $i$ has not started by an eligible step $t\le T-k+1$,
then at every step up to $t$, $M$ policies listed before $i$ were running.
Hence $i$ starts by step $T-k+1$ whenever
$\sum_{j<i}\bar\tau_j\le M(T-k)$. The event $A_i$ that $i$ starts
depends only on the outcomes and decisions of earlier policies.
The sum has mean at most $(1-\gamma_M)MT$ and variance at most
$kMT$, and $\gamma_M MT-Mk=\sqrt{2kMTh}+2kh/3$ with $h=\log(1/\delta)$.
Bernstein's inequality, as in the proof of Theorem~\ref{thm:short},
bounds the probability that the event fails by $\delta$.

\paragraph{Poissonization.} 
The variables $\bar Y_i$ describe the complete local executions,
whether or not the schedule starts those policies. They are independent
and nonnegative, with $\sum_i\Prb(\bar Y_i>0)\le L$.
Pad the list with zeros when necessary. The sum of its $L$ largest
values is
$\int_0^\infty\min\{L,C_v\}\,dv$, where $C_v=\#\{i:\bar Y_i>v\}$ is a sum
of independent indicators with mean $\mu_v\le L$.
\begin{itemize}
\item Splitting one indicator into two independent indicators with the
same total mean can only lower $\E\min\{L,C_v\}$: only the event that
the other indicators sum to $L-1$ is affected, and there the
probability of at least one success drops from $p+p'$ to
$p+p'-pp'$.
\item Splitting repeatedly gives
$\E\min\{L,C_v\}\ge\psi_L(\mu_v)$
\citep[proof of Lemma~3.2]{chakraborty2010}.
\item $\psi_L$ is concave with $\psi_L(0)=0$, since
$\psi_L'(x)=\Prb(\mathrm{Po}_x\le L-1)$ is nonincreasing. Hence
$\psi_L(\mu_v)\ge c_L\mu_v$ for $\mu_v\le L$.
\end{itemize}
Integrating over $v$, the $L$ largest values of $\bar Y_i$ have expected sum at least
$c_L\sum_i\E\bar Y_i=c_L(1-\gamma_M)\R$. For $L=1$ the first two steps
are not needed: $\Prb(C_v\ge1)\ge1-e^{-\mu_v}$ directly, as in
Lemma~\ref{lem:selection-order}.

\paragraph{Unstarted policies.}
The schedule obtains the sum of the $L$ largest values of
$\ind_{A_i}\bar Y_i$, padded with zeros. Removing the values for
unstarted policies lowers this sum by at most
$\sum_i(1-\ind_{A_i})\bar Y_i$.
Since $A_i$ is independent of $\bar Y_i$ and
$\Prb(A_i^c)\le\delta$, the expected loss is at most
\[
 \sum_i\Prb(A_i^c)\E\bar Y_i
 \le\delta\sum_i\E\bar Y_i=\delta(1-\gamma_M)\R.
\]
Combining the two bounds proves the proposition.
\end{proof}

For $M=L=1$ this recovers Theorem~\ref{thm:short} with a different
execution: no density order, but no stop at the first selection either.
Section~\ref{sec:schedulable} gives the corresponding truncation for
the Bayesian model.

\section{Selection-preserving truncation}
\label{sec:truncation}

We return to $M=L=1$: one play per step and one terminal selection.
For Bayesian arms with exchangeable observations in $[0,1]$,
this section constructs the short policies that
Section~\ref{sec:short} can schedule. We retain each experiment's
prefix and replace its continuation by a coin matching its eventual
selection probability. This preserves the LP's selection constraint
exactly and reduces its play count. Concentration bounds the selected
payoff lost by the replacement; Theorem~\ref{thm:short} then gives the
feasible execution. Figure~\ref{fig:selection-cutoff} shows the local
transformation.

\subsection{Replace continuation by a selection coin}

For the analysis of one arm, define $X_1,\ldots,X_T$ to be iid
observations conditional on its latent distribution, even if the
policy does not take all of them. Let $\F_t$ contain the first $t$
observations and the policy's local randomization through that depth.
The actual policy sees $X_t$ only if it makes its $t$th play.
For a local policy terminating at $\tau$, the tower property gives
$ \E[r_{u_\tau}\ind_{\rm sel}]=\E[\theta_i\ind_{\rm sel}]$.
Follow a policy $P$ until termination or $k$ observations.
If still active at depth $k$, replace all remaining actions
by an independent selection coin of probability

\begin{equation}\label{eq:coin}
 \chi_k=\Prb(P\text{ eventually selects}\mid\F_k).
\end{equation}
Select the current arm on a positive coin and otherwise quit;
call this policy $P^{[k]}$. The coin is independent of future observations
and future randomization of $P$, conditional on $\F_k$.

\begin{figure}[htbp]
\centering
\begin{tikzpicture}[x=1cm,y=1cm,font=\small,>=Latex,
 state/.style={circle,draw=black!60,fill=white,minimum size=2.5mm,inner sep=0pt},
 terminal/.style={draw=black!55,rounded corners=1pt,fill=white,
  inner sep=3pt,font=\scriptsize}]
 \fill[black!4] (1.85,-3.08) rectangle (6.35,-1.36);
 \node at (3.15,.52) {Original policy $P$};
 \node at (10.65,.52) {Truncated policy $P^{[k]}$};
 \draw[black!45,dashed] (.1,-1.1)--(6.5,-1.1);
 \draw[black!45,dashed] (7.9,-1.1)--(14,-1.1);
 \node[left] at (.1,-1.1) {depth $k$};
 \node[state] (lr) at (2.8,0) {};
 \node[terminal] (le) at (1.1,-.57) {early select/quit};
 \node[state,minimum size=5mm] (lu) at (4.1,-1.1) {$u$};
 \draw[->] (lr)--(le);
 \draw[->] (lr)--node[midway,sloped,above,font=\scriptsize] {prefix} (lu);
 \node[state] (lv0) at (3,-1.97) {};
 \node[state] (lv1) at (5.2,-1.97) {};
 \node[terminal] (ls0) at (2.3,-2.78) {select};
 \node[terminal] (lq0) at (3.45,-2.78) {quit};
 \node[terminal] (ls1) at (4.65,-2.78) {select};
 \node[terminal] (lq1) at (5.8,-2.78) {quit};
 \draw[->] (lu)--(lv0);
 \draw[->] (lu)--(lv1);
 \draw[->] (lv0)--(ls0);
 \draw[->] (lv0)--(lq0);
 \draw[->] (lv1)--(ls1);
 \draw[->] (lv1)--(lq1);
 \node at (4.1,-3.4) {further observations};
 \node[state] (rr) at (10.3,0) {};
 \node[terminal] (re) at (8.6,-.57) {early select/quit};
 \node[state,minimum size=5mm] (ru) at (11.6,-1.1) {$u$};
 \draw[->] (rr)--(re);
 \draw[->] (rr)--node[midway,sloped,above,font=\scriptsize] {prefix} (ru);
 \node[terminal] (rs) at (10.4,-2.55) {select at $u$};
 \node[terminal] (rq) at (12.8,-2.55) {quit};
 \draw[->] (ru)--node[pos=.52,left] {$\chi_k$} (rs);
 \draw[->] (ru)--node[pos=.52,right] {$1-\chi_k$} (rq);
 \node at (11.6,-3.4) {no further observations};
 \draw[->,line width=.8pt] (6.65,-1.72)--
  node[above,align=center,font=\scriptsize] {replace\\continuation} (8,-1.72);
 \node at (6.7,-4.05)
  {$I(P^{[k]})=I(P),\qquad W(P^{[k]})\le W(P),\qquad
    \tau(P^{[k]})\le k$};
\end{tikzpicture}
\caption{Selection-preserving truncation at an active cutoff state
$u$. The coin uses the conditional selection probability $\chi_k$
from \eqref{eq:coin}; all earlier decisions remain the same.
Selection probability is preserved exactly and play count does not increase,
while the following lemmas control selected-payoff loss.}
\label{fig:selection-cutoff}
\end{figure}

Let $C$ be the event that $P$ is still active when the cutoff is reached,
let $S=C\cap\{P\text{ eventually selects}\}$ be the part of the selection
event that the cutoff affects, and set $Z=\chi_k\ind_C=\E[\ind_S\mid\F_k]$.
Before the cutoff, $P$ and $P^{[k]}$ make the same selections.

Let $\widehat\theta_k$ be the empirical mean of the first $k$ observations;
it is $\F_k$-measurable. Then
\begin{equation}\label{eq:match}
 I(P^{[k]})=I(P),\qquad W(P^{[k]})\le W(P),\qquad
 \tau(P^{[k]})\le k,\qquad
 \E[\widehat\theta_k Z]=\E[\widehat\theta_k\ind_S].
\end{equation}
For $k\ge1$, the event of making a first play is also unchanged.
The coin is used only in the analysis; the algorithm solves the
depth-$k$ LP directly.

\subsection{Charge reward loss to selection mass}

\begin{lemma}\label{lem:estimate}
For $0<\epsilon<1/4$, $a_0>0$, and any fixed latent
distribution of mean $\theta$,
\[
 \Prb\bigl(|\widehat\theta_k-\theta|>
       \epsilon\max\{\theta,a_0\}\bigr)
 \le\zeta:=2e^{-k\epsilon^2a_0/3}.
\]
\end{lemma}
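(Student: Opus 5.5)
This is a multiplicative Chernoff bound for $k$ iid observations in $[0,1]$ with mean $\theta$. The plan has three steps: bound the two tails of $\widehat\theta_k$ separately, then handle the threshold $\max\{\theta,a_0\}$ by a case split on whether $\theta\ge a_0$.

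\emph{Step 1: tail bounds.} Bounded variables in $[0,1]$ are dominated in moment generating function by Bernoulli variables of the same mean, since $e^{\lambda x}\le 1-x+xe^{\lambda}$ by convexity. So the standard Chernoff bounds for $S=k\widehat\theta_k$ with $\mu=k\theta$ apply:
\[
\Prb(S\ge\mu+t)\le\exp\!\left(-\frac{t^2}{2\mu+2t/3}\right),\qquad
\Prb(S\le\mu-t)\le\exp\!\left(-\frac{t^2}{2\mu}\right).
\]
The upper-tail bound is the Bernstein form, valid since the variance is at most $\mu$ and the summands are bounded by $1$. For $\mu=0$ the claim is trivial because $S=0$ almost surely.

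\emph{Step 2: case $\theta\ge a_0$.} Take $t=\epsilon k\theta$. The lower tail gives $\exp(-\epsilon^2k\theta/2)$. The upper tail gives
\[
\exp\!\left(-\frac{\epsilon^2k\theta}{2+2\epsilon/3}\right)\le\exp\!\left(-\frac{\epsilon^2k\theta}{3}\right),
\]
using $2+2\epsilon/3\le3$ for $\epsilon<1/4$. Both are at most $e^{-k\epsilon^2a_0/3}$ since $\theta\ge a_0$.

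\emph{Step 3: case $\theta<a_0$.} Take $t=\epsilon k a_0$ and use $\mu=k\theta<ka_0$. The lower tail is at most $\exp(-\epsilon^2k a_0^2/(2\theta))\le\exp(-\epsilon^2ka_0/2)$. It can also be vacuous if $\epsilon a_0>\theta$, since $\widehat\theta_k\ge0$. The upper tail is at most
\[
\exp\!\left(-\frac{\epsilon^2k^2a_0^2}{2ka_0+2\epsilon ka_0/3}\right)=\exp\!\left(-\frac{\epsilon^2ka_0}{2+2\epsilon/3}\right)\le e^{-k\epsilon^2a_0/3}.
\]
A union bound over the two tails then gives $\zeta=2e^{-k\epsilon^2a_0/3}$ in both cases.

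The only delicate point is Step 1's reduction from general $[0,1]$-valued observations to the Bernoulli MGF. I would do it cleanly via the Bernstein inequality already used in Theorem~\ref{thm:short}, with variance bounded by $k\theta$ since $x^2\le x$ on $[0,1]$. The lower tail follows from Bernstein applied to $1-X_j$, or from the standard lower-tail Chernoff bound. The constant $3$ absorbs $2+2\epsilon/3$ with room to spare.
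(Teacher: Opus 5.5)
Your proof is correct and follows the paper's argument: Bernstein's inequality with per-observation variance at most $\theta$, applied at $t=\epsilon\max\{\theta,a_0\}$, a case split on $\theta\ge a_0$ versus $\theta<a_0$, and $\epsilon<1/4$ to absorb $2+2\epsilon/3$ into the constant $3$. Using the sharper Chernoff lower tail $\exp(-t^2/(2\mu))$ instead of the paper's two-sided Bernstein bound is harmless but not needed.
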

\begin{proof}
Conditional variance of an observation is at most $\theta$.
Apply the two-sided Bernstein bound at
$t=\epsilon\max\{\theta,a_0\}$. If $\theta\ge a_0$, its exponent
$kt^2/(2\theta+2t/3)$ is at least $k\epsilon^2\theta/3$.
If $\theta<a_0$, it is at least $k\epsilon^2a_0/3$.
Both statements use $\epsilon<1/4$, which gives
$2\theta+2\epsilon\max\{\theta,a_0\}/3\le3\max\{\theta,a_0\}$.
\end{proof}

 \begin{lemma}[Reward preservation]\label{lem:trunc}
Under the conditions of Lemma~\ref{lem:estimate}, every local policy $P$ and
 its truncation $P^{[k]}$ satisfy
\begin{equation}\label{eq:trunc-bound}
 \widetilde R(P^{[k]})\ge
 \frac{(1-\epsilon)\widetilde R(P)
       -2\epsilon a_0 I(P)-2\zeta}{1+\epsilon}.
\end{equation}
\end{lemma}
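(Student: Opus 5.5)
The plan is to split the selected payoff of $P$ into the part that occurs before the cutoff and the part $S$ affected by it. I would then compare the two policies only on $S$, using $\widehat\theta_k$ as a bridge between $\theta$ and the coin. The early part is common to both policies. Write $E$ for the event that $P$ selects before reaching the cutoff, so that $\ind_{\rm sel}(P)=\ind_E+\ind_S$.

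\emph{Rewriting the truncated reward.} By the tower identity $\E[r_{u_\tau}\ind_{\rm sel}]=\E[\theta\ind_{\rm sel}]$,
\[
 \widetilde R(P)=\E[\theta\ind_E]+\E[\theta\ind_S].
\]
For $P^{[k]}$, the coin selects at the cutoff state $u$ with probability $\chi_k$, independently given $\F_k$. Hence
\[
 \widetilde R(P^{[k]})=\E[\theta\ind_E]+\E[r_u Z]=\E[\theta\ind_E]+\E[\theta Z],
\]
where the last equality uses that $Z$ is $\F_k$-measurable and $r_u=\E[\theta\mid\F_k]$. Since $\E[\theta\ind_E]\ge\frac{1-\epsilon}{1+\epsilon}\E[\theta\ind_E]$, it suffices to prove
\[
 (1+\epsilon)\E[\theta Z]\ \ge\ (1-\epsilon)\E[\theta\ind_S]-2\epsilon a_0 I(P)-2\zeta .
\]

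\emph{Good event and the two comparisons.} Let $G=\{|\widehat\theta_k-\theta|\le\epsilon\max\{\theta,a_0\}\}$. Conditioning on the latent distribution and applying Lemma~\ref{lem:estimate} gives $\Prb(G^c)\le\zeta$. On $G$ we have
\[
 \widehat\theta_k\le(1+\epsilon)\theta+\epsilon a_0
 \qquad\text{and}\qquad
 \widehat\theta_k\ge(1-\epsilon)\theta-\epsilon a_0 .
\]

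The first comparison uses the upper bound. Together with $\theta\ge0$, it gives $(1+\epsilon)\theta Z\ind_G\ge(\widehat\theta_k-\epsilon a_0)Z\ind_G$; this holds even when the right side is negative. Since $\widehat\theta_k\le1$ and $Z\le1$, removing $\ind_G$ costs at most $\Prb(G^c)\le\zeta$. Therefore
\[
 (1+\epsilon)\E[\theta Z]\ \ge\ \E[\widehat\theta_k Z]-\epsilon a_0\E Z-\zeta .
\]
Now apply the matching identity in \eqref{eq:match}: $\E[\widehat\theta_k Z]=\E[\widehat\theta_k\ind_S]$.

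The second comparison uses the lower bound on $G$. Since $\theta\le1$, it gives
\[
 \E[\widehat\theta_k\ind_S]\ \ge\ \E[((1-\epsilon)\theta-\epsilon a_0)\ind_S\ind_G]\ \ge\ (1-\epsilon)\E[\theta\ind_S]-\epsilon a_0\Prb(S)-\zeta .
\]

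\emph{Combining.} Finally, $\E Z=\Prb(S)\le I(P)$. Both $a_0$ terms are therefore at most $\epsilon a_0 I(P)$, and together they give the $2\epsilon a_0 I(P)$ term. The two $\zeta$ terms give $2\zeta$. Dividing by $1+\epsilon$ and adding back the early part yields \eqref{eq:trunc-bound}.

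The main obstacle is the bridge through $\widehat\theta_k$. The identity \eqref{eq:match} is the only link between $Z$ and $\ind_S$. It holds for $\F_k$-measurable weights such as $\widehat\theta_k$, but not for $\theta$ itself. Each conversion between $\theta$ and $\widehat\theta_k$ must therefore be done pointwise on $G$, with care that the upper-bound inequality is used in the direction that tolerates negative values of $\widehat\theta_k-\epsilon a_0$.
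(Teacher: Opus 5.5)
Your proposal is correct and follows the paper's proof. Both use the concentration event of Lemma~\ref{lem:estimate}, the same two pointwise comparisons between $\theta$ and $\widehat\theta_k$, the bridge $\E[\widehat\theta_k Z]=\E[\widehat\theta_k\ind_S]$ from \eqref{eq:match}, the tower property, and $\Prb(S)\le I(P)$; you just write out the pre-cutoff split $\ind_E+\ind_S$ and the sign care that the paper compresses into ``the policies agree before the cutoff.''
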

\begin{proof}
On the concentration event,
$(1-\epsilon)\theta-\epsilon a_0\le\widehat\theta_k
\le(1+\epsilon)\theta+\epsilon a_0$.
All means and weights lie in $[0,1]$, so these pointwise
bounds give the unconditional inequalities
\begin{align*}
 \E[\widehat\theta_k Z]
   &\le(1+\epsilon)\E[\theta Z]
        +\epsilon a_0\Prb(S)+\zeta,\\
 \E[\widehat\theta_k\ind_S]
   &\ge(1-\epsilon)\E[\theta\ind_S]
        -\epsilon a_0\Prb(S)-\zeta.
\end{align*}
Equate the left sides using \eqref{eq:match}, then use
tower property on the latent mean. The policies agree before the cutoff,
and $\Prb(S)\le I(P)$, giving \eqref{eq:trunc-bound}.
\end{proof}
\begin{corollary}[Two LP error bounds]\label{lem:depth-k}
\label{prop:additive-trunc}
Let $\LP^{(k)}$ restrict \eqref{eq:two} to depth-$k$ policies. If $\LP>0$ and
 $0<\epsilon<1/4$, then
 \begin{equation}\label{eq:relative-trunc}
  \LP^{(k)}\ge(1-3\epsilon)\LP-4n e^{-k\epsilon^2\LP/6}.
 \end{equation}
 For every $z>0$, including when $\LP=0$,
  \begin{equation}\label{eq:additive-trunc}
     0\le\LP-\LP^{(k)}\le2z+2n e^{-2kz^2}.
   \end{equation}
      The two LPs coincide when $k\ge T$.
\end{corollary}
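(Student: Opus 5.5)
Take an optimal solution $\{P_i\}$ of \eqref{eq:two}, with values $R_i,q_i,w_i$, truncate every policy to $P_i^{[k]}$, and use \eqref{eq:match}. Selection probabilities are preserved exactly and play counts do not increase, so $\{P_i^{[k]}\}$ is feasible for the depth-$k$ relaxation. This gives $\LP^{(k)}\ge\sum_i\widetilde R(P_i^{[k]})$, and both bounds follow by summing Lemma~\ref{lem:trunc} over arms with a suitable choice of $a_0$ and $\epsilon$. The inequality $\LP^{(k)}\le\LP$ is trivial, since depth-$k$ policies form a subset. When $k\ge T$ nothing is truncated, because local depth is at most $T$, so the two LPs coincide.

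For \eqref{eq:relative-trunc}, apply Lemma~\ref{lem:trunc} with $a_0=\LP/2$. Lemma~\ref{lem:estimate} then gives $\zeta=2e^{-k\epsilon^2\LP/6}$. Summing and using $\sum_iq_i\le1$ yields
\[
\LP^{(k)}\ge\frac{(1-\epsilon)\LP-\epsilon\LP-4ne^{-k\epsilon^2\LP/6}}{1+\epsilon}.
\]
The remaining work is elementary algebra. The inequality $(1-2\epsilon)/(1+\epsilon)\ge1-3\epsilon$ holds since $(1-3\epsilon)(1+\epsilon)=1-2\epsilon-3\epsilon^2$. Dividing the error term by $1+\epsilon$ only shrinks it, so the stated bound follows. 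Zero-reward arms need no special care, since the bound holds per arm.

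For \eqref{eq:additive-trunc}, I would not use the relative lemma, because it degenerates when $\LP=0$. Instead I would redo the argument of Lemma~\ref{lem:trunc} in additive form. Hoeffding for $[0,1]$ observations gives $\Prb(|\widehat\theta_k-\theta|>z)\le2e^{-2kz^2}$. On the good event, $\widehat\theta_k$ is within $z$ of $\theta$. Using the identity $\E[\widehat\theta_kZ]=\E[\widehat\theta_k\ind_S]$ from \eqref{eq:match}, this gives
\[
\E[\theta Z]\ge\E[\theta\ind_S]-2z\Prb(S)-2\cdot2e^{-2kz^2}.
\]
The errors are bounded by $1$, and I would need to account for the bad event carefully to land on $2e^{-2kz^2}$ per arm rather than $4e^{-2kz^2}$. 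One way is to bound $|(\widehat\theta_k-\theta)(Z-\ind_S)|\le1$ only once. Because both $Z$ and $\ind_S$ lie in $[0,1]$, this should give a total deficit of at most $2z\Prb(S)+2e^{-2kz^2}$.

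Since the policies agree before the cutoff, $\widetilde R(P)-\widetilde R(P^{[k]})=\E[\theta(\ind_S-Z)]$, by the tower property $\E[r_{u_\tau}\ind_{\rm sel}]=\E[\theta\ind_{\rm sel}]$. Summing over arms with $\sum_i\Prb(S_i)\le1$ gives $\LP-\LP^{(k)}\le2z+2ne^{-2kz^2}$.

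The main obstacle I expect is this constant bookkeeping in the additive bound: getting $2z$ and $2n$ rather than $4z$ or $4n$. I would first try writing
\[
\E[\theta(\ind_S-Z)]=\E[(\theta-\widehat\theta_k)(\ind_S-Z)],
\]
and bounding the integrand by $z|\ind_S-Z|$ on the good event and by $1$ off it. The bound $\E|\ind_S-Z|\le2\Prb(S)$ then gives exactly the claimed constants.
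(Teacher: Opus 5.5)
Your proposal is correct and follows the paper's proof. The relative bound comes from summing Lemma~\ref{lem:trunc} with $a_0=\LP/2$ over the truncated optimal policies, which remain feasible by \eqref{eq:match}. The additive bound uses the identity $\widetilde R(P)-\widetilde R(P^{[k]})=\E[(\theta-\widehat\theta_k)(\ind_S-Z)]$, bounding the integrand by $z(\ind_S+Z)$ on the Hoeffding event and by one off it. Your first additive display, with $4e^{-2kz^2}$, is valid but loose; the product form you settle on is exactly what gives the paper's constants $2z$ and $2n$.
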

\begin{proof}
The truncated optimal policies remain feasible by
\eqref{eq:match}. Sum \eqref{eq:trunc-bound} with
$a_0=\LP/2$ and $\sum_i I(P_i)\le1$ to obtain
\eqref{eq:relative-trunc}.

For one local policy use the notation of \eqref{eq:match}:
$S$ is its selection event after reaching the cutoff,
$Z=\E[\ind_S\mid\F_k]$, and $\widehat\theta_k$ is the empirical
mean of the first $k$ observations. The policies agree before the
cutoff. The tower property applied to the latent mean and
$\E[\widehat\theta_k(\ind_S-Z)]=0$ give
\[
 \widetilde R(P)-\widetilde R(P^{[k]})
 =\E[(\theta-\widehat\theta_k)(\ind_S-Z)].
\]
Conditional on the latent reward distribution, Hoeffding's
inequality gives
$\Prb(|\theta-\widehat\theta_k|>z)\le2e^{-2kz^2}$.
On its complement, the integrand is at most
$z(\ind_S+Z)$. On the exceptional event its absolute value is at
most one, since $\theta,\widehat\theta_k,Z\in[0,1]$.
As $\E Z=\Prb(S)\le I(P)$,
\[
 \widetilde R(P^{[k]})
 \ge\widetilde R(P)-2zI(P)-2e^{-2kz^2}.
\]
Sum over optimal local policies of $\LP$. Their truncated versions
remain feasible by \eqref{eq:match}, and $\sum_i I(P_i)\le1$,
proving \eqref{eq:additive-trunc}. No local policy is shortened
when $k\ge T$.
\end{proof}

The relative bound \eqref{eq:relative-trunc} is used for
approximation below, and the additive bound \eqref{eq:additive-trunc}
for the certificates of Section~\ref{sec:extensions}.

\subsection{Short and schedulable}
\label{sec:schedulable}
Recall $\rh=1-e^{-1}$.
Fix $0<\epsilon<1/4$, $\LP>0$, and set
\begin{equation}\label{eq:parameters}
 k=\left\lceil\frac{6\log(4n/(\epsilon\LP))}
                         {\epsilon^2\LP}\right\rceil,
 \qquad\delta=\epsilon\rh.
\end{equation}
Then Corollary~\ref{lem:depth-k} gives
$\LP^{(k)}\ge(1-4\epsilon)\LP$.
Solve this truncated LP once. Independently thin its local
policies by the slack $\gamma$ in \eqref{eq:slack}, then use
the contiguous execution of Theorem~\ref{thm:short}.
Every admitted policy finishes within the horizon.
Use Section~\ref{sec:231} when $\gamma\ge1$.

\begin{theorem}\label{thm:asymptotic}
  Let $k$ and $\delta$ be as in \eqref{eq:parameters} and $\gamma$ as in
  \eqref{eq:slack}. If $\gamma<1$, the algorithm above satisfies
  \begin{equation}\label{eq:main}
   \ALG\ge\rh(1-5\epsilon-\gamma)\LP.
  \end{equation}
  \end{theorem}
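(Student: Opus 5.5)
The plan is to chain Corollary~\ref{lem:depth-k}, Theorem~\ref{thm:short}, and elementary algebra, in that order. The first step is to check the claim made right after \eqref{eq:parameters}: that the chosen $k$ yields $\LP^{(k)}\ge(1-4\epsilon)\LP$. Corollary~\ref{lem:depth-k}, via \eqref{eq:relative-trunc}, gives $\LP^{(k)}\ge(1-3\epsilon)\LP-4ne^{-k\epsilon^2\LP/6}$. With $k\ge 6\log(4n/(\epsilon\LP))/(\epsilon^2\LP)$ we get $e^{-k\epsilon^2\LP/6}\le \epsilon\LP/(4n)$. The additive term is therefore at most $\epsilon\LP$, which gives $\LP^{(k)}\ge(1-4\epsilon)\LP$.

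The second step applies Theorem~\ref{thm:short} to an optimal solution of the truncated LP. Its local policies have $\tau_i\le k$ almost surely. They satisfy $\sum_i q_i\le1$ and $\sum_i w_i\le T$, and their rewards are nonnegative. Their total reward is $\R=\LP^{(k)}$. Since $\delta=\epsilon\rh<\rh$ and $\gamma<1$ by hypothesis, Theorem~\ref{thm:short} yields
\[
\ALG\ge(\rh-\delta)(1-\gamma)\LP^{(k)}=\rh(1-\epsilon)(1-\gamma)\LP^{(k)}\ge\rh(1-\epsilon)(1-\gamma)(1-4\epsilon)\LP .
\]
Theorem~\ref{thm:short} does not need \eqref{eq:super}. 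The truncated policies also satisfy its hypotheses directly, because they are honest depth-$k$ flows for \eqref{eq:flows}.

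The last step is the algebra, which is the only place where the constant $5$ must be confirmed. All three factors lie in $[0,1]$, and $\epsilon<1/4$ makes them nonnegative. Hence $(1-\epsilon)(1-4\epsilon)(1-\gamma)\ge 1-\epsilon-4\epsilon-\gamma$, using $\prod(1-x_j)\ge1-\sum x_j$ for $x_j\in[0,1]$. This gives \eqref{eq:main}. If $1-5\epsilon-\gamma<0$ the bound is trivial anyway, since $\ALG\ge0$.

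I do not expect a real obstacle. The only points needing care are technical. The first is the ceiling and logarithm bookkeeping in the first step; this requires $4n/(\epsilon\LP)>1$, which holds since $\LP\le1$ and $\epsilon<1/4$. The second is confirming that Theorem~\ref{thm:short} is invoked on the truncated LP's optimal policies rather than on truncations of the original ones. Either choice works, since both are feasible for \eqref{eq:two} and have total reward at least $(1-4\epsilon)\LP$.
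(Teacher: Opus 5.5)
Your proposal is correct and follows the paper's proof: you obtain $\LP^{(k)}\ge(1-4\epsilon)\LP$ from Corollary~\ref{lem:depth-k}, combine it with Theorem~\ref{thm:short} to get $\ALG\ge\rh(1-\epsilon)(1-\gamma)(1-4\epsilon)\LP$, and then expand the nonnegative factors. You also spell out the cutoff arithmetic and the product inequality that the paper leaves implicit; the one hypothesis you did not state, $k\le T$ in Theorem~\ref{thm:short}, follows from $\gamma<1$ because $\gamma\ge k/T$.
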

  \begin{proof}
Theorem~\ref{thm:short} and Corollary~\ref{lem:depth-k} give
$\ALG\ge\rh(1-\epsilon)(1-\gamma)(1-4\epsilon)\LP$.
Expanding the nonnegative factors proves the bound.
\end{proof}

\begin{corollary}\label{cor:asymptotic}
Let $n\ge2$ and $u=\log(n/\LP)/(T\LP)$.
As $u\to0$, choosing $\epsilon=u^{1/4}$ gives
\[
 \ALG\ge\bigl(\rh-O(u^{1/4}\log(1/u))\bigr)\LP.
\]
The limiting LP gap $1/\rh$ is tight within this regime.
\end{corollary}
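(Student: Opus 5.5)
The plan is to apply Theorem~\ref{thm:asymptotic} with $\epsilon=u^{1/4}$ and show that $\gamma$ from \eqref{eq:slack} is $O(u^{1/4}\log(1/u))$, so that $\ALG\ge\rh(1-5\epsilon-\gamma)\LP$ gives the displayed bound. Tightness will then follow from the needle instance of Corollary~\ref{cor:one-play} with $T=n^2$.

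\emph{Step 1: bound $k$.} By \eqref{eq:parameters}, $k\le 1+6\log(4n/(\epsilon\LP))/(\epsilon^2\LP)$. Since $\LP\le1$ and $n\ge2$, we have $\log(n/\LP)\ge\log 2$. Hence
\[
\log(4n/(\epsilon\LP))\le\log(n/\LP)+\log 4+\log(1/\epsilon)=O\bigl(\log(n/\LP)(1+\log(1/u))\bigr),
\]
using $\log(1/\epsilon)=\tfrac14\log(1/u)$. Dividing by $T$ gives
\[
k/T\le 1/T+O\bigl(u\,\epsilon^{-2}\log(1/u)\bigr)=1/T+O\bigl(u^{1/2}\log(1/u)\bigr).
\]
The term $1/T$ is also $O(u)$, because $u\ge\log 2/T$ when $\LP\le1$.

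\emph{Step 2: bound $\gamma$.} Put $h=\log(1/\delta)=\log(1/(\epsilon\rh))=O(\log(1/u))$. The three terms of \eqref{eq:slack} are $k/T$, $\sqrt{2(k/T)h}$ and $(2/3)(k/T)h$. The dominant one is the square root:
\[
\sqrt{(k/T)h}=O\bigl(\sqrt{u^{1/2}\log(1/u)\cdot\log(1/u)}\bigr)=O\bigl(u^{1/4}\log(1/u)\bigr).
\]
The other two terms are $O(u^{1/2}\log^2(1/u))$, which is smaller. So $\gamma=O(u^{1/4}\log(1/u))$, and $\gamma<1$ once $u$ is small enough. The conditions $\epsilon<1/4$ and $\delta<\rh$ also hold for small $u$. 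Substituting into \eqref{eq:main}, the $5\epsilon$ term is $O(u^{1/4})$ and is absorbed, which gives the claim.

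\emph{Step 3: tightness.} Take the needle family: $n$ arms with $\Prb(\theta_i=1)=1/n$ and $T=n^2$. Here $\LP=1$, so $u=\log n/n^2\to0$ and the instance lies in the regime. Meanwhile $\OPT=1-(1-1/n)^n\to\rh$, as in Corollary~\ref{cor:one-play}. Thus $\LP/\OPT\to1/\rh$, while the bound just proved shows $\LP/\OPT\le\LP/\ALG\le1/\rh+o(1)$ throughout the regime. The only delicate point is the bookkeeping in Step~1: one must justify that $\log(4/\epsilon)$ is dominated by $\log(n/\LP)\log(1/u)$ up to constants, which is where $n\ge2$ is used.
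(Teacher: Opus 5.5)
Your proof is correct and follows the paper's argument essentially step for step. You bound $k/T=O(u^{1/2}\log(1/u))$ using $\LP\le1$ and $n\ge2$, including $1/T=O(u)$ for the ceiling. You then obtain $\gamma=O(u^{1/4}\log(1/u))$ from the square-root term of \eqref{eq:slack}, apply Theorem~\ref{thm:asymptotic}, and invoke the needle family with $T=n^2$ for tightness.
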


\begin{proof}
For small $u$, $\epsilon<1/4$. Since $\LP\le1$ and $n\ge2$,
$1/T\le1/(T\LP)\le u/\log2$, so with the ceiling
$k/T\le6(\log(n/\LP)+\log(4/\epsilon))/(\epsilon^2T\LP)+1/T
=O(u^{1/2}\log(1/u))$. As $\log(1/\delta)=O(\log(1/u))$,
\eqref{eq:slack} gives $\gamma=O(u^{1/4}\log(1/u))<1$; apply
Theorem~\ref{thm:asymptotic}. Corollary~\ref{cor:one-play}'s needles
with $T=n^2$ have $\LP=1$ and $u=\log n/n^2\to0$.
\end{proof}

Planning uses only depth-$k$ states, with no explicit horizon
dependence beyond the LP value. For Beta--Bernoulli arms this
is $O(nk^2)$ states; more generally, a compact state space or
local optimization oracle remains necessary.

\paragraph{Several plays or selections.}
For the model of Section~\ref{app:KL}, sum \eqref{eq:trunc-bound} with
$a_0=\mathrm{LP}_{M,L}/(2L)$ and $\sum_iI(P_i)\le L$. As in
Corollary~\ref{lem:depth-k}, depth
$k=\lceil6L\log(4n/(\epsilon\mathrm{LP}_{M,L}))/(\epsilon^2\mathrm{LP}_{M,L})\rceil$,
which does not depend on $T$, keeps $(1-4\epsilon)\mathrm{LP}_{M,L}$. With
$\delta=\epsilon c_L$, Proposition~\ref{prop:KL} then gives
\[
 \ALG\ \ge\ c_L(1-5\epsilon-\gamma_M)\mathrm{LP}_{M,L}.
\]
Here $\gamma_M=k/T+O\big(\sqrt{k\log(1/\delta)/(MT)}\big)$: several plays
per step divide the work, but not the $k$ rounds of adaptivity within a
local policy.

\subsection{Choosing the cutoff without knowing the full LP}
The full optimal policies are used only to prove that the
depth-$k$ LP has nearly their value. They need not be constructed.
If $\LP$ is unknown, double $k$ until
\[
 k\ge\frac{6\log(4n/(\epsilon\LP^{(k)}))}
                   {\epsilon^2\LP^{(k)}}.
\]
The right side is decreasing in $\LP^{(k)}>0$,
and $\LP^{(k)}\le\LP$. So this test implies
\eqref{eq:parameters}'s sufficient cutoff condition.
Stop also when $k=T$, where the full LP is reached and no
truncation error remains, including when $\LP=0$.
Let $k_0$ be the cutoff in \eqref{eq:parameters} and
$\epsilon\le1/8$. Since $\LP^{(k)}$ is nondecreasing in $k$, every
$k\ge k_0$ has $\LP^{(k)}\ge(1-4\epsilon)\LP$, and then the right
side above is at most $4k_0$. The test therefore succeeds for every
$k\ge4k_0$, and doubling stops with $k<8k_0$ after $O(\log k_0)$
LP solutions.

Alternatively, any positive lower bound $\underline{\LP}\le\LP$ can
replace $\LP$ in the cutoff formula. Use $a_0=\underline{\LP}/2$ in
\eqref{eq:trunc-bound}.
The term $2\epsilon a_0I(P)$, summed over arms, is at most
$\epsilon\underline{\LP}\le\epsilon\LP$ and the exponential term is
at most $\epsilon\underline{\LP}$. Thus the same $(1-4\epsilon)\LP$
retention bound follows. The largest prior mean supplies
such a lower bound whenever it is positive.

\section{Certificates for sequential evaluation}
\label{sec:extensions}

The preceding sections use the LP to design policies and control
their execution loss. The LP answers a decision
question: how much can further evaluation improve the current
choice? This certificate can accompany any sampling rule. Local
truncation makes it computable without exploring the full remaining
horizon, while the dual bounds every feasible adaptive continuation.

Throughout this section, observations and latent means lie in $[0,1]$,
as in Section~\ref{sec:truncation}, and arms are independent. In
particular, evaluations that share data across arms, such as scoring
several models on the same test examples, are outside this model.
At history $H$, let $B$ be the remaining budget,
$b(H)=\max_i\E[\theta_i\mid H]\in[0,1]$ the value of stopping, and
$M(H)=\E[\max_i\theta_i\mid H]$ the perfect-information value.
The expected simple regret of stopping decomposes as
\[
 M(H)-b(H)=\bigl[M(H)-\OPT_B(H)\bigr] +\bigl[\OPT_B(H)-b(H)\bigr].
\]
The first term is the least regret achievable under the remaining
budget; the second is the improvement still available through
evaluation. A large posterior regret can therefore coexist with
little attainable improvement. The perfect-information gap $M(H)-b(H)$,
used among other bounds by \citet{hay2012computations}, bounds the
second term but ignores the remaining budget; the LP bounds it directly.

For fixed $\eta,\lambda\ge0$ let
$D_{B,k}(H)=\eta+\lambda B+\sum_iG^{(k)}_{i,H}(\eta,\lambda)$: the objective
 of \eqref{eq:two-dual} with $T$ replaced by $B$, each arm re-rooted at its
 state under $H$, and at most $\min\{k,B\}$ further plays per arm.

For $0<\epsilon<1$, the explicit additive depth is
\begin{equation}\label{eq:additive-depth}
 k_\epsilon=
 \left\lceil\frac{8}{\epsilon^2}
                    \log\frac{4n}{\epsilon}\right\rceil.
\end{equation}
Setting $z=\epsilon/4$ in \eqref{eq:additive-trunc} makes each error
term at most $\epsilon/2$ at this depth, so
$\LP\le\LP^{(k_\epsilon)}+\epsilon$. When $k\ge B$, no continuation is
shortened.

\begin{proposition}[Certificate for stopping]\label{cor:regret-stop}
Consider independent Bayesian arms with exchangeable observations in
 $[0,1]$. Let $0<\epsilon<1$ and $k\ge k_\epsilon$, with $k_\epsilon$
as in \eqref{eq:additive-depth}. At every
history $H$,
\begin{equation}\label{eq:continuation-certificate}
\OPT_B(H)\le U_{B,k}(H):=\min\{1,D_{B,k}(H)+\epsilon\}.
 \end{equation}
 Let $\pi$ be any feasible policy and $r>0$. Let $\pi_{\rm stop}$ follow $\pi$
until the first history $H$ with $
 U_{B,k}(H)-b(H)\le r$,
and then stop and select an arm of largest posterior mean.
If no test succeeds before $\pi$ terminates, make the same terminal
selection as $\pi$.
Let $\operatorname{SR}(\pi)=\E[\max_i\theta_i-\theta_{J_\pi}]$ and
$J_\pi$ be the arm selected by $\pi$, then
 \begin{equation}\label{eq:regret-stop-bound}
\operatorname{SR}(\pi_{\rm stop})\le\operatorname{SR}(\pi)+r,
 \end{equation}
\end{proposition}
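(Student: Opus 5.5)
The proof has two parts: the certificate bound \eqref{eq:continuation-certificate}, and then the regret comparison \eqref{eq:regret-stop-bound}.

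\emph{The certificate.} Fix a history $H$. Conditional on $H$ the arms remain independent. Each arm's posterior is again a Bayesian arm with exchangeable observations in $[0,1]$, re-rooted at its current state, and the remaining budget is $B$. So the whole framework of Section~\ref{sec:model} applies with $T$ replaced by $B$. Marginalizing an optimal continuation gives $\OPT_B(H)\le\LP_B(H)$, the relaxation \eqref{eq:two} at $H$.

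Next I apply the additive bound \eqref{eq:additive-trunc} with $z=\epsilon/4$ at depth $k\ge k_\epsilon$. This gives $\LP_B(H)\le\LP^{(k)}_B(H)+\epsilon$. Two details need checking. The bound is monotone in $k$, since $e^{-2kz^2}$ decreases. When $k\ge B$, no continuation is shortened and the error is zero.

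Weak duality for the depth-$\min\{k,B\}$ LP, as in \eqref{eq:two-dual} with the Bellman recursion \eqref{eq:bellman} truncated, gives $\LP^{(k)}_B(H)\le D_{B,k}(H)$ for every $\eta,\lambda\ge0$. Finally, $\OPT_B(H)\le1$ because latent means lie in $[0,1]$. Combining these yields \eqref{eq:continuation-certificate}.

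\emph{The regret comparison.} Let $\sigma$ be the first history at which the test succeeds; set $\sigma=\infty$ if the test never succeeds before $\pi$ terminates. Because $\pi_{\rm stop}$ and $\pi$ agree off $\{\sigma<\infty\}$,
\[
\operatorname{SR}(\pi_{\rm stop})-\operatorname{SR}(\pi)=\E\bigl[(\theta_{J_\pi}-\theta_{J_{\rm stop}})\ind_{\{\sigma<\infty\}}\bigr].
\]
The event $\{\sigma<\infty\}$ is determined by the history, so we may condition on the stopping history $H_\sigma$. Given $H_\sigma$, the arm chosen by $\pi_{\rm stop}$ has conditional expected mean $b(H_\sigma)$. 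The continuation of $\pi$ from $H_\sigma$ is a feasible policy with the remaining budget $B$ at that history. By the tower property, its conditional expected selected mean is at most $\OPT_B(H_\sigma)$. Applying the certificate and the stopping test gives
\[
\E\bigl[\theta_{J_\pi}-\theta_{J_{\rm stop}}\mid H_\sigma\bigr]\le U_{B,k}(H_\sigma)-b(H_\sigma)\le r.
\]
Taking expectations gives $\operatorname{SR}(\pi_{\rm stop})-\operatorname{SR}(\pi)\le r\,\Prb(\sigma<\infty)\le r$. The term $\E\max_i\theta_i$ cancels between the two regrets.

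\emph{Main obstacle.} The delicate step is justifying that $\OPT_B(H)$ is the right comparator at a random, history-dependent stopping time. Two points must hold. First, the continuation of $\pi$, which may use its own randomization, must be a feasible policy for the re-rooted instance. Second, the posterior arms at $H$ must still satisfy the independence and exchangeability hypotheses of Corollary~\ref{lem:depth-k}. The second holds because each arm's posterior depends only on its own observations. The first holds because conditioning on past internal randomization leaves a valid randomized policy; I would state this explicitly as a strong-Markov property of the product MDP.
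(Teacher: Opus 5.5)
Your proposal is correct and follows essentially the same route as the paper. You re-root at $H$, where the product posterior survives adaptive conditioning. You then apply the additive bound \eqref{eq:additive-trunc} with $z=\epsilon/4$ and weak duality for the depth-$\min\{k,B\}$ LP to get \eqref{eq:continuation-certificate}. Finally you couple $\pi$ with $\pi_{\rm stop}$ at the first successful test, where the continuation of $\pi$ is feasible for the remaining budget, so $\E[\theta_{J_\pi}\mid H]\le\OPT_B(H)\le b(H)+r$.

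Your extra checks are sound and only make explicit what the paper states briefly: monotonicity of the error in $k$, zero error when $k\ge B$, and the handling of $\pi$'s internal randomization. The paper's stated reason for re-rooting is the precise one: action probabilities do not depend on the latent distributions once the history is fixed. Your phrase ``each arm's posterior depends only on its own observations'' is the consequence of that fact rather than its justification.
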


\begin{proof}
The bound applies to every collection of independent posteriors,
with the same $k_\epsilon$. Conditioning on an adaptively
collected history preserves the product posterior in this model:
the action probabilities do not depend on the latent distributions
once the observed history is fixed, and the observation likelihood
factors by arm. Thus one can re-root the model at any history and
apply Corollary~\ref{prop:additive-trunc}. For $k=k_\epsilon$ from
\eqref{eq:additive-depth}, the resulting depth-$k$ dual gives
\[
 D_{B,k}(H)=\eta+\lambda B+
             \sum_iG^{(k)}_{i,H}(\eta,\lambda),
 \qquad \eta,\lambda\ge0,
\]
where the local recursions allow at most $\min\{k,B\}$ further
plays. Weak duality and Corollary~\ref{prop:additive-trunc} imply
\eqref{eq:continuation-certificate}. Any pair of nonnegative prices
is valid; optimizing the dual tightens the bound. The error is zero
when $B\le k$, or whenever the truncated and full LPs coincide.
For Bernoulli observations with a
predictive-probability oracle there are $O(nk^2)$ additional count
states.

At a fixed history $H$, stopping has conditional expected simple
regret $M(H)-b(H)$; the best continuation has regret
$M(H)-\OPT_B(H)$. Their difference is
$\OPT_B(H)-b(H)\le U_{B,k}(H)-b(H)\le r$.

For the policy statement, let $\tau$ be the first time $t<T$ at
which the test succeeds while $\pi$ is still running, and set
$\tau=T$ if there is no such time. Couple $\pi$ and $\pi_{\rm stop}$
until this test succeeds or $\pi$ terminates. On $\{\tau<T\}$,
conditional on $H_\tau$, the continuation of $\pi$ is feasible for budget
$T-\tau$, so
$\E[\theta_{J_\pi}\mid H_\tau]\le\OPT_{T-\tau}(H_\tau)\le b(H_\tau)+r$,
whatever selection rule $\pi$ uses. The stopped policy has
$\E[\theta_{J_{\rm stop}}\mid H_\tau]=b(H_\tau)$. If no early stop
occurs, the two policies select the same arm. Taking expectations gives
$\E[\theta_{J_\pi}]-\E[\theta_{J_{\rm stop}}]\le r\Prb(\tau<T)\le r$,
and subtracting both from $\E\max_i\theta_i$ gives
\eqref{eq:regret-stop-bound}. The certificate holds at every
history, so repeated checking introduces no additional error.
\end{proof}

\paragraph{What the certificate establishes.}
For any continuation with certified conditional reward lower bound
$\underline V(H)$, its deficit from the best budget-feasible continuation is
at most $U_{B,k}(H)-\underline V(H)$. The same certificate proves
$\inf_\pi\operatorname{SR}(\pi\mid H)\ge[M(H)-U_{B,k}(H)]^+$:
it can establish that the available budget cannot achieve a
specified regret target. Thus evaluation can report both an
achievable value and a ceiling on further improvement. A loose
relaxation can only delay stopping.

\subsection{Switch-aware continuation certificates}

Fix an observed history $H$, an initially configured arm $j$,
and nonnegative integer budgets $B$ and $S$ for future plays
and changes of evaluation arm. Final recommendation itself
does not require another evaluation. The joint relaxation is
\[
 \LP_{B,S}(H)=\max\left\{\sum_iR_i:
   \sum_iq_i\le1,\quad\sum_iw_i\le B,
   \quad\sum_{i\ne j}a_i\le S\right\},
 \qquad a_i=\Prb(\tau_i\ge1).
\]
Use local depth at most $B$ in the full LP and
$\min\{k,B\}$ in $\LP^{(k)}_{B,S}(H)$.
Every distinct noncurrent arm played by a global continuation
requires at least one switch, so the pathwise count of such
arms is at most $S$. Marginalization gives the expected
activation constraint above, in addition to the usual play
and selection constraints. Thus
$\OPT_{B,S}(H)\le\LP_{B,S}(H)$.
Repeated returns consume switches that this relaxation ignores.

In the dual, let $G_i(u;\eta,\lambda)$ be the ordinary
depth-restricted recursion after the first future play.
At the re-rooted state $u_i$, use
\[
 G^{(k)}_{i,H}(\eta,\lambda,\kappa)=\max\left\{
   0,\ r_{u_i}-\eta,
   -\lambda-\kappa\ind_{i\ne j}
       +\sum_vp_{u_iv}G_i(v;\eta,\lambda)\right\}.
\]
The play option is omitted when no plays remain.
The price $\kappa$ is charged only on the first future play.
Weak duality gives
\begin{equation}\label{eq:switch-dual}
 D_{B,S,k}(H)=\eta+\lambda B+\kappa S+\sum_iG^{(k)}_{i,H}(\eta,\lambda,\kappa).
\end{equation}
At $k\ge1$, truncation agrees with the original policy
until its first play or its earlier terminal decision, so
it preserves $a_i$ as well as $q_i$, and does not increase $w_i$.
The proof of \eqref{eq:additive-trunc} therefore applies
unchanged to this extra-constrained LP. Consequently
\[
 \OPT_{B,S}(H)\le
 \min\{1,D_{B,S,k}(H)+\epsilon\}
 \quad\text{for }k=k_\epsilon.
\]
The product posterior is preserved on conditioning on $H$,
as above. Coupling any switch-feasible policy to its early-stopped
version now proves \eqref{eq:regret-stop-bound} under both
budgets.

\paragraph{A resource-sensitive illustration.}
A known, currently configured arm has mean $b$. Each of the other
$n-1$ arms is uninspected and independently has mean one with
probability $p\le b$ and zero otherwise, revealed by one play. Let
$\bar n=\min\{B,S,n-1\}$ and assume $\bar np\le1$. Every arm is
resolved by at most one play, so depth one incurs no truncation loss,
and the certificate holds with $\epsilon=0$:
\[
 \OPT_{B,S}\le\LP^{(1)}_{B,S}=b+(1-b)\bar np.
\]

For the dual, $a_i=w_i$ for uncertain arms, so they share a total
inspection-mass constraint $\sum_i a_i\le\bar n$. Use $\eta=b$ and
assign price $(1-b)p$ to whichever of $B$ and $S$ equals $\bar n$,
with zero price on the other. Every local surplus is zero: playing
an uncertain arm yields expected surplus $(1-b)p$ before
this price, and selecting it unobserved yields $p-b\le0$.
If instead $\bar n=n-1<\min\{B,S\}$, take $\eta=b$ and
zero resource prices; each uncertain local surplus is
$(1-b)p$, yielding the same bound.

To attain this LP value, choose $\bar n$ uncertain arms. Each plays
once and selects exactly when its revealed mean is one; all other
uncertain arms quit at the root. The known arm selects at its root
with probability $1-\bar np$ and otherwise quits. These local policies
have total selection probability $\bar np+(1-\bar np)=1$, expected
work and noncurrent activation mass both equal to $\bar n$, and reward
$\bar np+b(1-\bar np)=b+(1-b)\bar np$. They are feasible because
$\bar n\le\min\{B,S,n-1\}$ and $\bar np\le1$, proving equality for the LP.
The actual optimum is
$b+(1-b)[1-(1-p)^{\bar n}]$, because each inspected arm has
the same independent chance of success and further inspection
helps only before the first success.

For $b=0.6$, $p=0.001$, $n-1=10{,}000$, $B=1{,}000$, and $S=10$,
the certified gain is at most $0.004$, so $r=0.005$ permits
stopping. Yet $M-b=(1-b)[1-(1-p)^{n-1}]\approx0.4$. With ten
switches, most of this uncertainty cannot be resolved, however large
$B$ is.

\subsection{Scheduling short policies with setup costs}
Recall $\rh=1-e^{-1}$.
Let $c_i\ge0$ be the cost of configuring arm $i$ upon
entry from another arm. The initially configured arm $j$
can be evaluated without a setup cost. Observations cost one,
and a hard combined budget $T$ includes observations and setups.
Setup costs consume this budget; the terminal payoff remains $r_u$.
For local policies write $a_i=\Prb(\tau_i\ge1)$, $c_j=0$
in the relaxation, and impose
\[
 \sum_iq_i\le1,\qquad
 \sum_i(w_i+c_i a_i)\le T.
\]
This is a relaxation of all policies in the setup-cost model:
every other visited arm incurs at least one setup, and repeated
entries only increase actual cost. Root selection or quitting
requires no evaluation setup.

\begin{proposition}\label{prop:setup-short}
For independent local policies of depth at most $k$, let
$c_{\max}=\max_i c_i$, $k_c=k+c_{\max}$, and
$\R_0=\sum_iR_i$. For $0<\delta<\rh$, define
\[
 \gamma_c=\frac{k_c+\sqrt{2k_c T\log(1/\delta)}
                         +(2k_c/3)\log(1/\delta)}{T}.
\]
If $\gamma_c<1$, a feasible schedule with no returns obtains
at least $(\rh-\delta)(1-\gamma_c)\R_0$.
\end{proposition}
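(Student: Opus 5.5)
The plan is to rerun the proof of Theorem~\ref{thm:short}, replacing each policy's play count $\tau_i$ by its \emph{charged cost} $\sigma_i=\tau_i+c_i\ind\{\tau_i\ge1\}$. Order the policies with $q_i>0$ by nonincreasing $R_i/q_i$. Thin each independently to the null policy with probability $\gamma_c$, and write $(\bar\sigma_i,\bar I_i,\bar Y_i)=B_i(\sigma_i,I_i,Y_i)$. Then $\E\sum_i\bar\sigma_i\le(1-\gamma_c)\sum_i(w_i+c_ia_i)\le(1-\gamma_c)T$. Rewards and selection probabilities scale by $1-\gamma_c$, so the density order is preserved.

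For the execution, run the policies contiguously and stop at the first local selection. Start the next policy only if the budget already consumed, plays plus setups, is at most $T-k_c$. A started policy costs at most $k+c_{\max}=k_c$: one setup on entry, if it makes any play, and at most $k$ plays. Hence every admitted policy finishes within $T$. Schedule the initially configured arm $j$ first if convenient, though this is unnecessary, since charging $c_j=0$ in the relaxation only helps when $j$ is first. If instead $j$ appears later, entering it costs $c_j^{\rm actual}$, which the relaxation charges as $0$. I would handle this by placing $j$ first in the order, or else by noting that the bound uses $c_i a_i$ with the true setup cost for every arm except the first. This is the one point I expect to need care. The cleanest choice is to define the schedule so that arm $j$'s policy, if nonnull, is executed first. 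Its cost is then exactly $\tau_j$, and every later entered arm pays exactly one setup. Moving $j$ to the front could break the density order. To repair that, run $j$ first only when it is retained and plays, or more simply observe that $\alpha_j$ can only increase while the other admission bounds are unaffected. I would treat this as the main obstacle and fall back on the bound above if reordering complicates the summation by parts. Because each arm is entered at most once, there are no returns and the schedule is switch-free after each policy finishes.

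For admission, $A_i$ occurs whenever all earlier $\bar I_m=0$ and $\sum_{m<i}\bar\sigma_m\le T-k_c$. This event depends only on earlier policies, so it is independent of $\bar Y_i$, and
\[
\alpha_i\ge\prod_{m<i}(1-\bar q_m)-\Prb\Bigl(\sum_{m<i}\bar\sigma_m>T-k_c\Bigr).
\]
Since $0\le\bar\sigma_m\le k_c$, the variance is at most $k_c\sum_m\E\bar\sigma_m\le k_cT$. With $h=\log(1/\delta)$, the definition of $\gamma_c$ gives $\gamma_cT-k_c=\sqrt{2k_cTh}+2k_ch/3$, so Bernstein's inequality bounds the failure probability by $\delta$ exactly as before. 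The terminal rule obtains at least $\sum_i\alpha_i\bar R_i$, because all admitted policies finish and at most one selects. No supermartingale condition is needed. Finally, Lemma~\ref{lem:selection-order}, applied with $\sum\bar q_i\le1$ and with the additive $\delta$ loss absorbed as in Theorem~\ref{thm:short}, gives $(\rh-\delta)(1-\gamma_c)\R_0$.
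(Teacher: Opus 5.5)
\textbf{There is a genuine gap: the two requirements you flag as the main obstacle cannot both be met in your framework.} Your cost accounting and Bernstein step are correct once arm $j$ runs first. Each block costs at most $k_c$, the thinned costs have total mean at most $(1-\gamma_c)T$ and variance at most $k_cT$, and the deviation $\gamma_cT-k_c$ gives failure probability at most $\delta$. The problem is that Lemma~\ref{lem:selection-order} with stop-at-first-selection needs the density order, while the relaxation's $c_j=0$ needs $j$ first. Neither of your repairs resolves this.

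\emph{Moving $j$ to the front.} The claim that this leaves the other admission bounds unaffected is false. Every policy that $j$ jumps over acquires the factor $(1-\bar q_j)$ and the cost $\bar\sigma_j$ in its prefix. Ignoring the budget, write $s_i=\prod_{m<i}(1-\bar q_m)$ and $d_i=\bar R_i/\bar q_i$. Then the value $\sum_i\bar R_is_i$ changes by $\bar q_j\sum_{i<j}(d_j-d_i)\bar q_is_i\le0$, and nothing in your argument bounds this loss.

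For a concrete failure, let $j$ have known mean $\epsilon$ and a policy that plays once and then selects with probability $1/2$. Add $N/2$ arms with $\Prb(\theta_i=1)=1/N$, each played once and selected on a success. Take all setup costs zero and $T\to\infty$, so $\gamma_c\to0$. Then $\sum_iq_i=1$ and $\R_0\to1/2$. Running $j$ first and stopping at its selection obtains about $\tfrac12(1-e^{-1/2})\approx0.20$ as $\epsilon\to0$ and $N\to\infty$, which is below $\rh/2\approx0.32$.

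\emph{Leaving $j$ at its density position.} This fails for the opposite reason. Entering $j$ later incurs its real configuration cost, which the relaxation prices at zero. So neither the mean bound $(1-\gamma_c)T$ nor the block bound $k_c$ is guaranteed.

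\textbf{The missing idea is to make the order irrelevant by not stopping at the first selection.} Put $j$ first and the others in any order. Let a completed local selection merely nominate its arm, keep evaluating new arms with no returns, and at the end select the largest currently selectable payoff.

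Admission then fails only through the budget, so your Bernstein argument gives $\Prb(A_i^c)\le\delta$. The schedule obtains at least $\max_i\ind_{A_i}\bar Y_i\ge\max_i\bar Y_i-\sum_i(1-\ind_{A_i})\bar Y_i$. The $\bar Y_i$ are independent with $\mu_v=\sum_i\Prb(\bar Y_i>v)\le\sum_i\bar q_i\le1$. Hence $\Prb(\max_i\bar Y_i>v)\ge1-e^{-\mu_v}\ge\rh\mu_v$, and integrating over $v$ gives $\E\max_i\bar Y_i\ge\rh(1-\gamma_c)\R_0$ for every order. Independence of $A_i$ and $\bar Y_i$ bounds the admission loss by $\delta(1-\gamma_c)\R_0$, which yields $(\rh-\delta)(1-\gamma_c)\R_0$.

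This is how the paper proves the proposition, as in the $L=1$ case of Proposition~\ref{prop:KL}.
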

\begin{proof}
Fix an order with the current arm first and the others in any order.
Independently replace each policy by the null policy with probability
$\gamma_c$, and define $(\bar\tau_i,\bar I_i,\bar Y_i)$ as in
\eqref{eq:thinning}, with retention probability $1-\gamma_c$.
Run each admitted policy to completion in one contiguous block,
admitting only while cost already used is at most $T-k_c$.
A completed local selection nominates its arm; continue evaluation
without returning to any completed arm. Finally select an arm with
the largest currently selectable payoff $r_{u_i}$, including when
no arm was nominated. This payoff is nonnegative and is at least
the payoff of every nominee.
Each arm is evaluated in at most one block. Its complete local
block cost is $C_i=\tau_i+c_i\ind_{\tau_i\ge1}\le k_c$, and
its cost after thinning is $\bar C_i=B_iC_i$.
Initial execution of arm $j$ has no setup fee because $c_j=0$.

The variables $\bar C_i$ are independent, have total mean at most
$(1-\gamma_c)T$ and total variance at most $k_c T$.
Bernstein's inequality bounds the probability that any given policy
is not admitted by $\delta$, exactly as in Theorem~\ref{thm:short}.
Let $A_i$ be the event that policy $i$ is admitted. It depends only
on earlier policies, so it is independent of $\bar Y_i$.

The variables $\bar Y_i$ are independent nonnegative selected payoffs
from the complete local executions. For every $v\ge0$, put
$\mu_v=\sum_i\Prb(\bar Y_i>v)\le\sum_i\bar q_i\le1$,
where $\bar q_i=\E\bar I_i$. Hence
\[
 \Prb(\max_i\bar Y_i>v)
  \ge1-e^{-\mu_v}\ge\rh\mu_v.
\]
Integrating gives
$\E\max_i\bar Y_i\ge\rh\sum_i\E\bar Y_i
=\rh(1-\gamma_c)\R_0$.
The schedule's selected payoff is at least
$\max_i\ind_{A_i}\bar Y_i$. Removing values for unadmitted policies
lowers the maximum by at most $\sum_i(1-\ind_{A_i})\bar Y_i$,
whose expectation is
\[
 \sum_i\Prb(A_i^c)\E\bar Y_i
 \le\delta\sum_i\E\bar Y_i=\delta(1-\gamma_c)\R_0.
\]
Combining these bounds proves the proposition.
\end{proof}

For $k\ge1$, the selection coin preserves activation and
reduces work, so Corollary~\ref{lem:depth-k} also holds
for the setup-cost LP. Solve its truncated version and
apply Proposition~\ref{prop:setup-short}. In particular,
the relative cutoff of \eqref{eq:parameters} gives
$\ALG\ge\rh(1-5\epsilon-\gamma_c)\LP_{\rm setup}$
with $\delta=\epsilon\rh$. Schedulability now requires
the entire block, including setup, to be short relative
to the resource budget.

\subsection{Metric movement and routing certificates}
Let evaluation arms have a fixed metric $d$, current location $j$,
and a remaining hard movement budget $\Delta$, separate from the
remaining play budget $B$. Let $\mathcal Q_\Delta\subseteq[0,1]^n$
be a compact convex set that contains, for every walk from $j$ of length
at most $\Delta$, the incidence vector of the set of arms the walk visits.
Define its support function by
\[
 \sigma_{\mathcal Q_\Delta}(\boldsymbol\kappa)
   =\max_{z\in\mathcal Q_\Delta}\sum_i\kappa_i z_i.
\]
For an exact visitation relaxation this is the rooted
orienteering objective, as in \citet{GM09metric}; an outer
relaxation can instead give an efficiently computable upper bound.
The movement budget couples arms through routing, while their
latent distributions remain independent.

\begin{proposition}\label{prop:metric-certificate}
In the bounded exchangeable Bayesian model, take nonnegative
prices $\eta$, $\lambda$, and $\boldsymbol\kappa$. Let
$G^{(k)}_{i,H}(\eta,\lambda,\kappa_i)$ be the local surplus
over at most $\min\{k,B\}$ future observations, charging
$\kappa_i$ on the first future play. For $k=k_\epsilon$,
\[
 \OPT_{B,\Delta}(H)\le\min\{1,D_{B,\Delta,k}(H)+\epsilon\},
\]
where
\begin{equation}\label{eq:metric-dual}
 D_{B,\Delta,k}(H)=\eta+\lambda B
    +\sigma_{\mathcal Q_\Delta}(\boldsymbol\kappa)+\sum_iG^{(k)}_{i,H}(\eta,\lambda,\kappa_i).
\end{equation}
Consequently the early-stopping guarantee of
Proposition~\ref{cor:regret-stop} holds under both budgets.
\end{proposition}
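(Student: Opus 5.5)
The plan mirrors the switch-aware certificate. First I introduce an intermediate LP relaxation $\LP_{B,\Delta}(H)$, then bound it by the stated dual through Lagrangian weak duality. Finally I transfer the truncation argument of Corollary~\ref{prop:additive-trunc} unchanged.

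\textbf{Step 1: the relaxation.} For local policies re-rooted at $H$, let $a_i=\Prb(\tau_i\ge1)$ and define
\[
 \LP_{B,\Delta}(H)=\max\Bigl\{\sum_iR_i:\ \sum_iq_i\le1,\ \sum_iw_i\le B,\ (a_i)_i\in\mathcal Q_\Delta\Bigr\}.
\]
Fix any global policy feasible for both budgets. On each sample path, the set of arms that receive a future play is visited by the movement walk from $j$, and that walk has length at most $\Delta$. Hence the path's incidence vector lies in $\mathcal Q_\Delta$, and by convexity and compactness so does its expectation. That expectation is exactly the vector of marginal activation probabilities $(a_i)_i$.

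The current arm is visited trivially. Since $\mathcal Q_\Delta$ contains incidence vectors of visited sets, which include $j$, this is consistent: $a_j\le1$ is fine. Marginalizing over the other arms, exactly as for \eqref{eq:two}, then gives feasible local flows. The usual selection and play constraints also hold, so $\OPT_{B,\Delta}(H)\le\LP_{B,\Delta}(H)$. Here I re-root at $H$ using the product-posterior argument from the proof of Proposition~\ref{cor:regret-stop}.

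\textbf{Step 2: weak duality.} For any feasible $\{P_i\}$ and prices $\eta,\lambda,\boldsymbol\kappa\ge0$, I use the three constraints in turn:
\[
\sum_iR_i\le\eta\Bigl(1-\sum_iq_i\Bigr)+\lambda\Bigl(B-\sum_iw_i\Bigr)+\Bigl(\sigma_{\mathcal Q_\Delta}(\boldsymbol\kappa)-\sum_i\kappa_ia_i\Bigr)+\sum_iR_i .
\]
Each bracket is nonnegative; for the third, $(a_i)\in\mathcal Q_\Delta$ gives $\sum_i\kappa_ia_i\le\sigma_{\mathcal Q_\Delta}(\boldsymbol\kappa)$. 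Regrouping gives
\[
\eta+\lambda B+\sigma_{\mathcal Q_\Delta}(\boldsymbol\kappa)+\sum_i\bigl(R_i-\eta q_i-\lambda w_i-\kappa_ia_i\bigr).
\]
Each summand is the local surplus of $P_i$ under these prices. Because the charge $\kappa_i$ is incurred exactly on the event of a first future play, the summand is at most the Bellman value $G_{i,H}(\eta,\lambda,\kappa_i)$ from the recursion with the first-play charge.

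Applying the same argument to the depth-$\min\{k,B\}$ restricted LP gives $\LP^{(k)}_{B,\Delta}(H)\le D_{B,\Delta,k}(H)$.

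\textbf{Step 3: truncation error.} For $k\ge1$, the selection coin of \eqref{eq:coin} agrees with $P$ through its first play. It therefore preserves $a_i$ and $q_i$ exactly and does not increase $w_i$, by \eqref{eq:match}. Consequently the truncated optimal policies remain feasible, including the constraint $(a_i)\in\mathcal Q_\Delta$. The per-arm reward loss bound in the proof of \eqref{eq:additive-trunc} uses only $\sum_iI(P_i)\le1$. With $z=\epsilon/4$ and $k=k_\epsilon$, it gives $\LP_{B,\Delta}\le\LP^{(k)}_{B,\Delta}+\epsilon$. Since values lie in $[0,1]$, capping at $1$ is valid, and this proves the displayed bound.

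\textbf{Step 4: the stopping consequence.} I repeat the coupling in Proposition~\ref{cor:regret-stop}. The continuation from $H_\tau$ of a policy feasible for both budgets is feasible for the remaining budgets $(B,\Delta)$, so $\E[\theta_{J_\pi}\mid H_\tau]\le\OPT_{B,\Delta}(H_\tau)\le b(H_\tau)+r$, and the rest of that proof applies verbatim.

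The main obstacle is Step 1. I must check that the visited-set incidence vector, which is a random path-dependent set, has expectation equal to the local activation marginals. I must also check that repeated visits and the current arm are handled correctly. Everything else is formal reuse of earlier arguments.
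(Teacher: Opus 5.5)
Your Steps 2--4 follow the paper's proof, but Step 1 contains a claim that fails as stated. $\mathcal Q_\Delta$ is only assumed to contain incidence vectors of \emph{visited} sets of walks from $j$; it is not assumed to be down-closed. The set of arms receiving a future play is a subset of the visited set, and often a proper subset. Every walk starts at $j$, so $j$ is visited whether or not it is played, and a walk may pass through arms it never plays. Hence the expected visited-set incidence vector is some $z\in\mathcal Q_\Delta$ with $a\le z$ coordinatewise. It is not the activation vector $a$, and $a$ itself need not lie in $\mathcal Q_\Delta$.

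For a concrete failure, let $\mathcal Q_\Delta$ be the exact visitation hull, so every point has $z_j=1$, and take $B=0$. Every feasible continuation then has $a=0\notin\mathcal Q_\Delta$. Your $\LP_{B,\Delta}(H)$ is therefore infeasible while $\OPT_{B,\Delta}(H)=b(H)$, and $\OPT_{B,\Delta}(H)\le\LP_{B,\Delta}(H)$ breaks. The same happens for any continuation that never plays $j$. Your remark that ``$a_j\le1$ is fine'' does not address this. It is exactly the check you flagged as the main obstacle, and the identification ``that expectation is exactly $(a_i)_i$'' is where it goes wrong.

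The repair is the paper's. Keep the visitation vector $z$ and the activation vector $a$ separate, and impose $a\le z$ for some $z\in\mathcal Q_\Delta$, since a future play of arm $i$ requires visiting $i$. In Step 2 you then bound $\sum_i\kappa_ia_i\le\sum_i\kappa_iz_i\le\sigma_{\mathcal Q_\Delta}(\boldsymbol\kappa)$; this is where the nonnegativity of $\boldsymbol\kappa$ is genuinely needed. In Step 3, truncation preserves $a$ (and $q$) and does not increase $w$, so the same $z$ still witnesses feasibility of the truncated policies. With that change the rest of your argument goes through unchanged.
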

\begin{proof}
For each feasible adaptive continuation, let $z_i$ be the
probability that its movement walk visits arm $i$, and
$a_i$ the probability of making at least one future play
of that arm. Then $a_i\le z_i$ and
$z\in\mathcal Q_\Delta$, by convexity and the hard movement
constraint. Marginalizing gives feasible local policies
for the relaxation with selection mass at most one,
expected work at most $B$, and $a\le z\in\mathcal Q_\Delta$.
For any such feasible collection,
\[
 \sum_iR_i\le\eta+\lambda B
     +\sum_i\kappa_i z_i
     +\sum_i(R_i-\eta q_i-\lambda w_i-\kappa_i a_i).
\]
Maximizing each local term and the routing term gives
\eqref{eq:metric-dual} as a valid dual upper bound for
the depth-$k$ relaxation. Truncation preserves $a_i$ and
$q_i$, reduces $w_i$, and can keep the same routing vector
$z$. Hence the additive calculation of
Corollary~\ref{prop:additive-trunc} bounds the full
relaxation by its truncated value plus $\epsilon$.
This proves the continuation bound. Conditioning at the
first successful stopping test, as in
Proposition~\ref{cor:regret-stop}, proves the regret claim.
\end{proof}

Any certified upper bound on
$\sigma_{\mathcal Q_\Delta}(\boldsymbol\kappa)$ may replace
it in \eqref{eq:metric-dual}. When $\mathcal Q_\Delta$ is the exact
visitation convex hull, an $\alpha$-approximate orienteering
route of value $V\ge\sigma_{\mathcal Q_\Delta}(\boldsymbol\kappa)/\alpha$
gives the upper bound $\alpha V$. For an outer relaxation,
one must instead upper-bound that relaxation's support value.

\citet[Lemma~1 and Corollary~1]{GM09metric} obtain the no-return
transformation and the orienteering formulation for their Lagrangian
objective, with hard distance control. What is added here is
truncation that preserves both selection and activation probabilities,
so that the remaining-budget certificate can be evaluated on short
local state spaces.

\section{Beyond posterior means: Terminal utilities and prediction}
\label{sec:terminal-scores}

Sequential experiments can support many terminal decisions besides
choosing the largest posterior mean. Expected utility, threshold
satisfaction, and prediction under a specified loss are established
decision-theoretic criteria. The value of information for such decisions
is classical \citep{blackwell1953,degroot1962}. This section explains
which of the preceding approximation arguments apply to these criteria
and bounds the loss from shortening the local experiments they induce.

Keep one play per step and one terminal selection. At a final history
$H$, arm $i$ has a known nonnegative score $\psi_i(H)$, determined by
its local observations. The objective is
$\E[\max_i\psi_i(H)]$. In the LPs and algorithms, replace $r_u$ by
the terminal score at state $u$ and compare those scores at the final
choice. Observations collected during exploration do not themselves
contribute to the objective.

For one Bayesian arm, let $t$ count its local observations and let
$\F_t$ be its local filtration, as in Section~\ref{sec:truncation}.
Observations lie in $[0,1]$ and are iid given a latent distribution
$D$ with mean $\theta$. Write
$\mu_t=\E[\theta\mid\F_t]$ and $v_t=\Var(\theta\mid\F_t)$.

\subsection{Established terminal decision criteria}

\paragraph{Expected utility and satisficing.}
For a bounded nonnegative utility $g$, use
$\psi_t=\E[g(\theta)\mid\F_t]$; examples include
$g(\theta)=\sqrt\theta$ and $g(\theta)=\theta^2$.
A function $f(\mu_t)$ instead evaluates the reported posterior mean
and generally differs from $\E[f(\theta)\mid\F_t]$.

Satisficing uses an aspiration level rather than insisting on exact
maximization \citep{simon1955}. In bandits, \citet{reverdy2017}
study satisfaction thresholds for both mean and realized rewards,
and \citet{russo2022satisficing} study time-sensitive learning of
satisfactory actions. Related PAC selection and stopping guarantees
seek an approximately optimal arm with prescribed confidence
\citep{evendar2006}. Here we use a fixed-budget Bayesian terminal
criterion. For a quality threshold $\theta_0\in[0,1]$, set
\[
 g(\theta)=\ind\{\theta\ge\theta_0\},\qquad
 \psi_t=\Prb(\theta\ge\theta_0\mid\F_t).
\]
For the arm $J$ selected at the final history $H$, the tower property
gives $\E[\psi_J(H)]=\Prb(\theta_J\ge\theta_0)$. Thus the objective
maximizes the probability that the final choice meets the threshold.
A prescribed posterior-confidence requirement would additionally
need a constraint.

\paragraph{Selection accompanied by a prediction.}
Under squared-error loss, the posterior mean is the classical Bayes
prediction. \citet{savage1971} characterizes losses whose expected
value is minimized by reporting the mean;
\citet[Section~3.1]{gneiting2011} gives a modern treatment of this
connection between the prediction and its loss.

As a concrete terminal decision, suppose that selecting an arm also
requires a prediction $x\in[0,1]$ of $\sqrt\theta$, with payoff
$\theta-a(\sqrt\theta-x)^2$, where $0\le a\le1$.
Put $\kappa_t=\E[\sqrt\theta\mid\F_t]$ and
$\sigma_t^2=\Var(\sqrt\theta\mid\F_t)=\mu_t-\kappa_t^2$.
Then
\[
 \E[\theta-a(\sqrt\theta-x)^2\mid\F_t]
   =\mu_t-a\bigl[\sigma_t^2+(\kappa_t-x)^2\bigr].
\]
The optimal prediction is $x=\kappa_t$, and its optimized value is
\begin{equation}\label{eq:risk-score}
 h_t=\mu_t-a\Var(\sqrt\theta\mid\F_t)
    =(1-a)\mu_t+a\kappa_t^2\in[0,1].
\end{equation}
This optimized conditional expectation is the selectable payoff.
The penalty is the Bayes prediction risk for the latent quantity
$\sqrt\theta$. Predicting $\theta$ itself under squared error gives
the analogous value $\mu_t-av_t$. These variances describe posterior
uncertainty about the latent mean, rather than the variability of
individual future observations.

At $a=1$, $h_t=\kappa_t^2$ is also the certainty equivalent for
square-root utility: its square root is the posterior expected utility
$\kappa_t$ \citep{pratt1964}. Using this certainty equivalent differs
from using $\kappa_t$ itself. For example, an arm known to have
$\theta=1/2$ and an arm equally likely to have $\theta=0$ or $1$
have the same posterior mean, but their values in
\eqref{eq:risk-score} are $1/2$ and $1/2-a/4$, respectively.

\subsection{Value of information and policy execution}

The relevant martingale properties are consequences of classical
value-of-information arguments \citep{blackwell1953,degroot1962}.
For either squared-error prediction problem above, let
$\mathcal B_t$ denote its minimum conditional expected loss. A
prediction chosen at time $t$ remains available after the next
observation; reoptimizing can only improve on that prediction. Hence
\[
 \E[\mathcal B_{t+1}\mid\F_t]\le\mathcal B_t.
\]
Thus posterior Bayes risk is a supermartingale. A posterior expected
utility for a fixed terminal action is a martingale; optimizing over
the available terminal actions gives a submartingale. We record the
following particular cases to identify which execution argument is
applicable.

Conditional total variance gives
\begin{equation}\label{eq:variance-drift}
 \E[v_{t+1}\mid\F_t]
   =v_t-\E[(\mu_{t+1}-\mu_t)^2\mid\F_t].
\end{equation}
Consequently:
\begin{itemize}
\item $\E[g(\theta)\mid\F_t]$ is a martingale for every bounded
nonnegative $g$. This includes posterior threshold probabilities,
$\kappa_t$, and $\E[\theta^2\mid\F_t]=\mu_t^2+v_t$.
Conditional Jensen makes a nonnegative concave function of $\mu_t$
a supermartingale, and a nonnegative convex function a submartingale.
\item For $a\ge0$, $\mu_t+av_t$ is a nonnegative supermartingale,
as is $\mu_t+a\sqrt{v_t}$, since
$\E[\sqrt{v_{t+1}}\mid\F_t]
\le\sqrt{\E[v_{t+1}\mid\F_t]}\le\sqrt{v_t}$.
\item $\mu_t-av_t$ is a submartingale. For $0\le a\le1$,
$v_t\le\mu_t(1-\mu_t)$ gives
$\mu_t-av_t\ge(1-a)\mu_t+a\mu_t^2\ge0$.
\item For $h_t$ in \eqref{eq:risk-score}, the martingale property of
$\kappa_t$ gives
\[
 \E[h_{t+1}\mid\F_t]-h_t
   =a\,\E[(\kappa_{t+1}-\kappa_t)^2\mid\F_t]\ge0.
\]
This is the value of improving the terminal prediction, with the arm
held fixed.
\end{itemize}

\paragraph{When interruption is allowed.}
If the terminal score is nonnegative and a supermartingale, its
current value bounds the expected selected score of an interrupted
local policy. Theorem~\ref{thm:231} therefore applies with $r_u$
replaced by that score. This covers fixed-utility posteriors, including
the satisficing probabilities above, nonnegative concave functions of
$\mu_t$, and the two positive variance adjustments just listed.
Supermartingale selectable utilities already occur in the budgeted
learning framework of \citet[Section~6]{GM08}.

\paragraph{When the local policy must finish.}
For submartingale values such as $h_t$ or $\mu_t-av_t$, the current
value need not bound what a continuation would obtain. The
interruption step of Theorem~\ref{thm:231} is then unavailable.
Theorem~\ref{thm:short} instead completes every admitted local policy
and requires only nonnegative terminal payoffs. To use it, we need
short local policies whose selected scores approximate those of the
full policies. The next subsection supplies this comparison for
expected utilities $\E[g(\theta)\mid\F_t]$ with uniformly continuous
$g$, Lipschitz functions of $\mu_t$, $\mu_t\pm av_t$, and the
prediction score $h_t$.

\subsection{Loss from stopping a local experiment after \texorpdfstring{$k$}{k} observations}

Use exactly the operation from Section~\ref{sec:truncation}:
follow a local policy $P$ until it terminates or reaches depth $k$;
if still active, select with its conditional eventual-selection
probability $\chi_k$ from \eqref{eq:coin}, and otherwise quit.
The resulting policy $P^{[k]}$ preserves selection probability and
uses no more plays; only its terminal score can change. As in
Corollary~\ref{lem:depth-k}, we charge that change to the selection
probability, using the empirical mean $\widehat\theta_k$ of the first
$k$ observations as an $\F_k$-measurable comparator. Since a posterior
mean minimizes conditional mean squared error, for every stopping time
$\tau\ge k$,
\begin{equation}\label{eq:bayes-risk-compare}
 v_\tau\le\E[(\theta-\widehat\theta_k)^2\mid\F_\tau],\qquad
 \sigma_k^2\le\E[(\sqrt\theta-\sqrt{\widehat\theta_k})^2\mid\F_k].
\end{equation}

Let $C$ be the event that $P$ is still active at the cutoff, and
$S=C\cap\{P\text{ eventually selects}\}$. Set
$\tau=\max\{\tau(P),k\}$, so that $k\le\tau\le T$ and it agrees
with the original stopping time on $S$. For a local score process
$\psi_t$, write
$\widetilde R_\psi(P)=\E[\psi_{\tau(P)}\ind\{P\text{ selects}\}]$.
For $g:[0,1]\to[0,1]$, let
$\omega_g(z)=\sup\{|g(x)-g(y)|:x,y\in[0,1],\ |x-y|\le z\}$ be its
modulus of continuity.

\begin{lemma}\label{lem:score-trunc}
Let $(\psi_t)_{t=0}^T$ be an integrable process adapted to $(\F_t)$,
and let $1\le k\le T$. Then
\[
 \widetilde R_\psi(P)-\widetilde R_\psi(P^{[k]})
   =\E[(\psi_\tau-\psi_k)\ind_S],
\]
while $I(P^{[k]})=I(P)$ and $W(P^{[k]})\le W(P)$.
For the bounded conditionally iid observations considered here and
every $0<z\le1$,
\begin{align}
 \widetilde R_{\E[g(\theta)\mid\cdot]}(P^{[k]})
   &\ge\widetilde R_{\E[g(\theta)\mid\cdot]}(P)
       -2\omega_g(z)I(P)-2e^{-2kz^2},
   \label{eq:utility-trunc}\\
 \widetilde R_{f(\mu)}(P^{[k]})
   &\ge\widetilde R_{f(\mu)}(P)-2KzI(P)-4Ke^{-2kz^2},
   \label{eq:lipschitz-trunc}\\
 \widetilde R_{\mu\pm av}(P^{[k]})
   &\ge\widetilde R_{\mu\pm av}(P)-(2z+az^2)I(P)-2(1+a)e^{-2kz^2},
   \label{eq:variance-trunc}\\
 \widetilde R_h(P^{[k]})
   &\ge\widetilde R_h(P)-(2+a)zI(P)-2(1+a)e^{-2kz^2},
   \label{eq:risk-score-trunc}
\end{align}
where $g$ maps $[0,1]$ to $[0,1]$, $f$ is $K$-Lipschitz, $a\ge0$ in
\eqref{eq:variance-trunc}, and $h$ is \eqref{eq:risk-score} with
$0\le a\le1$.
\end{lemma}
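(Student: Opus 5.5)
The plan is to prove the identity first and then obtain each of the four bounds from it with a single comparator, the empirical mean $\widehat\theta_k$, exactly as in the additive part of Corollary~\ref{lem:depth-k}.

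\textbf{Step 1: the identity.} Off $C$ the two policies coincide, including their terminal scores. On $C$, $P$ obtains $\psi_{\tau}\ind_S$, where $\tau=\tau(P)$ on $S$. The truncation $P^{[k]}$ instead selects at depth $k$ with the independent coin $\chi_k$ and obtains $\psi_k$. Since $\psi_k$ is $\F_k$-measurable and $Z=\chi_k\ind_C=\E[\ind_S\mid\F_k]$, the tower property gives $\E[\psi_k Z]=\E[\psi_k\ind_S]$. Subtracting yields $\E[(\psi_\tau-\psi_k)\ind_S]$. The facts $I(P^{[k]})=I(P)$ and $W(P^{[k]})\le W(P)$ are already in \eqref{eq:match}.

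\textbf{Step 2: a general comparison device.} The next point is that $S\in\F_\tau$: the selection decision is made at $\tau$ from the observations and randomization so far. Also $\widehat\theta_k$ is $\F_\tau$-measurable because $\tau\ge k$. Hence for any bounded $\phi$,
\[
\E\bigl[(\E[\phi(\theta)\mid\F_\tau]-\phi(\widehat\theta_k))\ind_S\bigr]
=\E\bigl[(\phi(\theta)-\phi(\widehat\theta_k))\ind_S\bigr].
\]
The analogous identity at time $k$ holds with $Z$ replacing $\ind_S$. This is the step I expect to need the most care, namely checking measurability of $S$ with respect to $\F_\tau$ for the coupled, possibly randomized policy. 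Once it is in place, every estimate reduces to controlling $\theta-\widehat\theta_k$.

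Let $E$ be the Hoeffding event $|\theta-\widehat\theta_k|\le z$, taken conditionally on the latent distribution, so that $\Prb(E^c)\le2e^{-2kz^2}$. Throughout, use $\E Z=\Prb(S)\le I(P)$.

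\textbf{Step 3: the four bounds.}
\begin{itemize}
\item \textbf{Utility bound \eqref{eq:utility-trunc}.} The difference equals $\E[(g(\theta)-g(\widehat\theta_k))(\ind_S-Z)]$. On $E$ it is at most $\omega_g(z)(\ind_S+Z)$, and on $E^c$ it is at most one in absolute value. This gives $2\omega_g(z)I(P)+2e^{-2kz^2}$.
\item \textbf{Lipschitz bound \eqref{eq:lipschitz-trunc}.} Use $|f(\mu_t)-f(\widehat\theta_k)|\le K\E[|\theta-\widehat\theta_k|\mid\F_t]$ for $t=\tau$ and for $t=k$. Bound the two terms separately, weighted by $\ind_S$ and by $Z$ respectively. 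Each is at most $K(z\Prb(S)+2e^{-2kz^2})$, which accounts for the factor $4K$.
\item \textbf{Variance bound \eqref{eq:variance-trunc}.} The $\mu$ part is the case $g(\theta)=\theta$ of the utility bound, giving $2zI(P)+2e^{-2kz^2}$. For the $v$ part, both $v_\tau$ and $v_k$ are nonnegative, so $|v_\tau-v_k|\le\max\{v_\tau,v_k\}$. Apply \eqref{eq:bayes-risk-compare}, which gives $v_t\le\E[(\theta-\widehat\theta_k)^2\mid\F_t]$ for $t\in\{k,\tau\}$. On $E$ the square is at most $z^2$; on $E^c$ it is at most one. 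This contributes $a(z^2I(P)+2e^{-2kz^2})$. The bound is crude, but it covers both signs at once.
\item \textbf{Prediction-score bound \eqref{eq:risk-score-trunc}.} Write $h_t=(1-a)\mu_t+a\kappa_t^2$ and $\kappa_t^2=\mu_t-\sigma_t^2$, so that $h_t=\mu_t-a\sigma_t^2$. Then
\[
h_\tau-h_k=(\mu_\tau-\mu_k)+a(\sigma_k^2-\sigma_\tau^2)\le(\mu_\tau-\mu_k)+a\sigma_k^2 .
\]
The first term gives $2zI(P)+2e^{-2kz^2}$ as before. For the second, $\E[\sigma_k^2\ind_S]=\E[\sigma_k^2Z]$ by the tower property. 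By \eqref{eq:bayes-risk-compare} and $(\sqrt x-\sqrt y)^2\le|x-y|$, this is at most $\E[|\theta-\widehat\theta_k|Z]\le zI(P)+2e^{-2kz^2}$. Summing gives $(2+a)zI(P)+2(1+a)e^{-2kz^2}$.
\end{itemize}
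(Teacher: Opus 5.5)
Your identity, the comparator $\widehat\theta_k$ with the Hoeffding event, and your arguments for \eqref{eq:utility-trunc}, \eqref{eq:lipschitz-trunc} and \eqref{eq:risk-score-trunc} follow the paper's proof essentially line by line. The $\F_\tau$-measurability of $S$ that you flag as delicate is routine; the paper states it in one line. The step that fails as written is the variance adjustment.

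After bounding $|v_\tau-v_k|\le\max\{v_\tau,v_k\}$, you need $\E[\max\{v_\tau,v_k\}\ind_S]\le z^2\Prb(S)+2e^{-2kz^2}$. Your tower-property device and \eqref{eq:bayes-risk-compare} only control each term with its own weight. They give $\E[v_\tau\ind_S]\le\E[(\theta-\widehat\theta_k)^2\ind_S]$ and $\E[v_k\ind_S]=\E[v_kZ]\le\E[(\theta-\widehat\theta_k)^2Z]$.

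The tower property does not pass through a maximum taken at two different times. The mean of that maximum can be nearly the sum of the two terms. For example, a martingale equal to $p$ at time $k$ that jumps to $0$ or $1$ by time $\tau$ has $\E\max=2p-p^2$, while its terminal mean is $p$.

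Write $(\theta-\widehat\theta_k)^2\le z^2+\ind_{E^c}$. The best your route then yields is $z^2\Prb(S)+\E[\ind_{E^c}(\ind_S+Z)]\le z^2I(P)+4e^{-2kz^2}$. So you would obtain \eqref{eq:variance-trunc} only with tail term $2(1+2a)e^{-2kz^2}$, not the stated $2(1+a)e^{-2kz^2}$. Your claimed contribution $a(z^2I(P)+2e^{-2kz^2})$ implicitly exchanges expectation and maximum in the wrong direction.

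The repair is the one-sided trick you already use for $h$, and it is what the paper does: treat the two signs separately.
\begin{itemize}
\item For $\mu+av$, the increment is at most $(\mu_\tau-\mu_k)+av_\tau$, since $-av_k\le0$. Only $\E[v_\tau\ind_S]\le z^2\Prb(S)+\Prb(E^c)$ is needed.
\item For $\mu-av$, the increment is at most $(\mu_\tau-\mu_k)+av_k$, since $-av_\tau\le0$. Only $\E[v_k\ind_S]=\E[v_kZ]\le z^2\E Z+\Prb(E^c)$ is needed.
\end{itemize}
Each sign then pays a single tail term. Adding the mean part, $2zI(P)+2e^{-2kz^2}$, gives exactly $(2z+az^2)I(P)+2(1+a)e^{-2kz^2}$.
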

\begin{proof}
The policies agree off $C$. On $C$, the new selection coin is
independent of the future conditional on $\F_k$, and
$Z=\chi_k\ind_C=\E[\ind_S\mid\F_k]$. The replacement obtains
$\E[\psi_k Z]=\E[\psi_k\ind_S]$, whereas $P$ obtains
$\E[\psi_\tau\ind_S]$ on $S$. This proves the identity;
selection probability and work satisfy \eqref{eq:match}.

For the bounds, let $G=\{|\widehat\theta_k-\theta|\le z\}$.
Conditional on $D$, Hoeffding's inequality gives
$\Prb(G^c)\le2e^{-2kz^2}$. Every $\F_k$-measurable $X$ satisfies
$\E[X\ind_S]=\E[XZ]$, and $\E Z=\Prb(S)\le I(P)$. The event $S$ is
determined by time $\tau$.

\emph{Expected utility.} For $\psi_t=\E[g(\theta)\mid\F_t]$, the tower
property gives $\E[\psi_\tau\ind_S]=\E[g(\theta)\ind_S]$ and
$\E[\psi_k\ind_S]=\E[g(\theta)Z]$. Since
$\E[g(\widehat\theta_k)(\ind_S-Z)]=0$,
\[
 \E[(\psi_\tau-\psi_k)\ind_S]
   =\E\bigl[(g(\theta)-g(\widehat\theta_k))(\ind_S-Z)\bigr].
\]
On $G$ the integrand is at most $\omega_g(z)(\ind_S+Z)$ in absolute
value, and off $G$ at most one. This gives \eqref{eq:utility-trunc}.
With $g(\theta)=\theta$, $\psi_t=\mu_t$ and $\omega_g(z)=z$; this is
the bound in the proof of Corollary~\ref{prop:additive-trunc}.

\emph{Lipschitz functions of the mean.} Since
$|\mu_\tau-\widehat\theta_k|\le\E[|\theta-\widehat\theta_k|\mid\F_\tau]$
and $|\mu_k-\widehat\theta_k|\le\E[|\theta-\widehat\theta_k|\mid\F_k]$,
\[
 \E[|\mu_\tau-\mu_k|\ind_S]
 \le\E[|\theta-\widehat\theta_k|\ind_S]
   +\E[|\theta-\widehat\theta_k|Z]
 \le2z\Prb(S)+2\Prb(G^c),
\]
which gives \eqref{eq:lipschitz-trunc}.

\emph{Variance adjustments.} The increment of $\mu+av$ is at most
$(\mu_\tau-\mu_k)+av_\tau$, and that of $\mu-av$ is at most
$(\mu_\tau-\mu_k)+av_k$. The mean part is \eqref{eq:utility-trunc}
with $g(\theta)=\theta$. By \eqref{eq:bayes-risk-compare},
$\E[v_\tau\ind_S]\le\E[(\theta-\widehat\theta_k)^2\ind_S]
\le z^2\Prb(S)+\Prb(G^c)$, and likewise
$\E[v_k\ind_S]=\E[v_kZ]\le\E[(\theta-\widehat\theta_k)^2Z]
\le z^2\Prb(S)+\Prb(G^c)$.

\emph{Prediction score.} The increment of $h$ is
$(\mu_\tau-\mu_k)-a\sigma_\tau^2+a\sigma_k^2
\le(\mu_\tau-\mu_k)+a\sigma_k^2$. By \eqref{eq:bayes-risk-compare}
and $(\sqrt x-\sqrt y)^2\le|x-y|$,
\[
 \E[\sigma_k^2\ind_S]=\E[\sigma_k^2Z]
 \le\E\bigl[(\sqrt\theta-\sqrt{\widehat\theta_k})^2Z\bigr]
 \le z\Prb(S)+\Prb(G^c).
\]
Adding the mean part gives \eqref{eq:risk-score-trunc}.
\end{proof}

Summed over arms with $\sum_iI(P_i)\le1$, the expected-utility bound
\eqref{eq:utility-trunc} gives aggregate loss at most
$2\omega_g(z)+2ne^{-2kz^2}$. Choosing $z$ with $\omega_g(z)\le\epsilon/4$
and then $k\ge(2z^2)^{-1}\log(4n/\epsilon)$ makes it at most
$\epsilon$. The required depth depends on the modulus of $g$: for
$g(x)=\sqrt x$, $\omega_g(z)=\sqrt z$, so $z=\epsilon^2/16$ and $k$ is
of order $\epsilon^{-4}\log(n/\epsilon)$. In
\eqref{eq:lipschitz-trunc}--\eqref{eq:risk-score-trunc} the loss is a
constant times $z$ plus $O(ne^{-2kz^2})$, so $z$ of order $\epsilon$
and $k$ of order $\epsilon^{-2}\log(n/\epsilon)$ suffice. In every case
the cutoff depends on $n$ only through $\log n$, as in
Section~\ref{sec:truncation}. For the satisficing utility
$\ind\{\theta\ge\theta_0\}$ with $0<\theta_0\le1$, $\omega_g(z)=1$ for
every $z>0$, and \eqref{eq:utility-trunc} gives no vanishing bound;
that criterion is a martingale and is covered by
Theorem~\ref{thm:231}.

\subsection{A corollary for joint selection and prediction}

We now combine Lemma~\ref{lem:score-trunc} with
Theorem~\ref{thm:short} for the prediction score. Allocate at most $T$
observations, then select an arm $J$ and report a prediction $x$ to
maximize
\[
 \E\bigl[\theta_J-a(\sqrt{\theta_J}-x)^2\bigr].
\]
At the final history $H$, the best prediction for a chosen arm is
$x=\E[\sqrt{\theta_J}\mid H]$. Optimizing also over $J$ makes this
objective exactly $\E[\max_i h_i(H)]$.

Let $\mathrm{LP}_{2,h}$ be the two-constraint relaxation
\eqref{eq:two} with terminal coefficient $h_u$ in place of $r_u$.
It upper-bounds the optimal value of this joint decision problem.

\begin{corollary}[Joint selection and prediction]
\label{cor:prediction-score}
Let $0\le a\le1$, $0<\epsilon<1$, $0<\delta<\rh$, and
\[
 k=\min\bigl\{T,\ \lceil18\epsilon^{-2}\log(8n/\epsilon)\rceil\bigr\}.
\]
If $\gamma<1$ in \eqref{eq:slack}, solving the depth-$k$ relaxation
with terminal values $h_u$, then applying
Theorem~\ref{thm:short} and using the Bayes prediction for the
selected arm, gives
\begin{equation}\label{eq:risk-score-guarantee}
 \ALG_h\ge(\rh-\delta)(1-\gamma)
       \big[\mathrm{LP}_{2,h}-\epsilon\big]^+.
\end{equation}
If $\mathrm{LP}_{2,h}$ is bounded below by a positive constant and
$\log n=o(T)$, choosing $\epsilon$ and $\delta$ that vanish
sufficiently slowly gives a limiting approximation factor
$\rh=1-e^{-1}$.
\end{corollary}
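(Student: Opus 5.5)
The plan has two parts: show that the depth-$k$ relaxation with terminal values $h_u$ loses at most $\epsilon$, and then schedule it with Theorem~\ref{thm:short}.

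\textbf{Truncation loss.} Take optimal local policies $P_i$ for $\mathrm{LP}_{2,h}$ and truncate each to $P_i^{[k]}$ as in Section~\ref{sec:truncation}. By Lemma~\ref{lem:score-trunc}, $I(P_i^{[k]})=I(P_i)$ and $W(P_i^{[k]})\le W(P_i)$, so the truncated family is feasible for the depth-$k$ relaxation. Summing \eqref{eq:risk-score-trunc} over arms and using $\sum_iI(P_i)\le1$ and $a\le1$, the aggregate loss is at most $3z+4ne^{-2kz^2}$. Choose $z=\epsilon/6$, so that $3z=\epsilon/2$. Then $2kz^2=k\epsilon^2/18\ge\log(8n/\epsilon)$ by the choice of $k$, which makes $4ne^{-2kz^2}\le\epsilon/2$. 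If instead the minimum in $k$ is $T$, no policy is shortened and the loss is zero. In both cases the depth-$k$ optimum is at least $\mathrm{LP}_{2,h}-\epsilon$, and also at least $0$.

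\textbf{Scheduling.} The optimal depth-$k$ local policies have $\tau_i\le k$, $\sum_iq_i\le1$ and $\sum_iw_i\le T$, with nonnegative selected scores because $h_u\in[0,1]$. Theorem~\ref{thm:short} does not require \eqref{eq:super}, so it applies with $r_u$ replaced by $h_u$. It gives a feasible execution whose expected selected score is at least $(\rh-\delta)(1-\gamma)[\mathrm{LP}_{2,h}-\epsilon]^+$.

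One point needs checking here: the terminal rule of Theorem~\ref{thm:short} takes a maximal currently selectable payoff when no policy selects, and here that payoff is $h$. By the identity preceding the corollary, this $h$ is exactly the conditional payoff obtained by selecting the arm and reporting the Bayes prediction $x=\E[\sqrt{\theta_J}\mid H]$. That gives \eqref{eq:risk-score-guarantee}.

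\textbf{Asymptotics.} This is the step I expect to need the most care. Let $\epsilon,\delta\to0$ slowly, and check that $\gamma\to0$, which by the remark after Theorem~\ref{thm:short} holds exactly when $k\log(1/\delta)=o(T)$. Since $k=O(\epsilon^{-2}\log(n/\epsilon))$ and $\log n=o(T)$, choosing $\epsilon^{-2}\log(1/\epsilon)\log(1/\delta)$ to grow slower than $T/\log n$ keeps $k\log(1/\delta)=o(T)$. Finally, because $\mathrm{LP}_{2,h}$ is bounded below by a positive constant, $[\mathrm{LP}_{2,h}-\epsilon]^+\ge(1-o(1))\mathrm{LP}_{2,h}$. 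The ratio $\ALG_h/\mathrm{LP}_{2,h}$ is therefore at least $\rh-o(1)$.
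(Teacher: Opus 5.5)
Your proposal is correct and follows the paper's proof. You truncate the optimal local policies of $\mathrm{LP}_{2,h}$ and sum \eqref{eq:risk-score-trunc} with $z$ of order $\epsilon$, so the depth-$k$ relaxation retains $[\mathrm{LP}_{2,h}-\epsilon]^+$; you then apply Theorem~\ref{thm:short}, which needs no supermartingale property. The differences are cosmetic: you fix $z=\epsilon/6$ by using $a\le1$ up front, where the paper takes $z=\epsilon/(2(2+a))$, and you let $\epsilon$ and $\delta$ vanish under a joint growth condition, where the paper simply sets $\delta=\epsilon$.
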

\begin{proof}
Apply the selection-preserving stopping operation to each optimal
local policy of $\mathrm{LP}_{2,h}$. If $k=T$, no policy is shortened.
Otherwise set $z=\epsilon/(2(2+a))$ in \eqref{eq:risk-score-trunc}
and sum with $\sum_iI(P_i)\le1$. The total loss is at most
$(2+a)z+2(1+a)ne^{-2kz^2}=\epsilon/2+2(1+a)ne^{-2kz^2}$.
Since $2(2+a)^2\le18$, we have $2kz^2\ge k\epsilon^2/18\ge\log(8n/\epsilon)$,
so the second term is at most $4n\cdot\epsilon/(8n)=\epsilon/2$.
By Lemma~\ref{lem:score-trunc} the shortened policies satisfy both
resource constraints, and their total score is nonnegative because
$h_u\ge0$. The depth-$k$ LP therefore has value at least
$[\mathrm{LP}_{2,h}-\epsilon]^+$, and Theorem~\ref{thm:short},
which completes every admitted policy, gives
\eqref{eq:risk-score-guarantee}.
For the limit, take $\delta=\epsilon$. Then
$k\log(1/\delta)=O\bigl(\epsilon^{-2}(\log n+\log(1/\epsilon))\log(1/\epsilon)\bigr)$,
which is $o(T)$ when $\log n=o(T)$ and $\epsilon\to0$ slowly enough.
Hence $\gamma\to0$, and dividing \eqref{eq:risk-score-guarantee} by
$\mathrm{LP}_{2,h}$ proves the limit.
\end{proof}

The same composition applies to the other criteria of
Lemma~\ref{lem:score-trunc} with nonnegative scores. For nonnegative
Lipschitz functions of $\mu_t$ and for $\mu_t\pm av_t$, with
$0\le a\le1$ for the minus sign, only the constants in the choice of
$z$ change. For an expected utility with uniformly continuous $g$,
choose $z$ with $\omega_g(z)\le\epsilon/4$ and
$k\ge(2z^2)^{-1}\log(4n/\epsilon)$, as after
Lemma~\ref{lem:score-trunc}. The depth then grows faster than
$\epsilon^{-2}$ when $\omega_g(z)$ is not $O(z)$, but it still depends
on $n$ only through $\log n$. For a fixed such $g$, the limiting factor
$\rh$ therefore holds when the corresponding LP value is bounded below
by a positive constant and $\log n=o(T)$. The bound is additive in
$\epsilon$; we do not pursue a relative version, as in
Lemma~\ref{lem:trunc}, for these criteria.

\subsubsection*{AI Use Statement}
Multiple generative AI tools were used in this work for improving the writing and generating tight examples. We have reviewed all AI-assisted work.

\setcitestyle{numbers,square}
\renewcommand{\natexlab}[1]{#1}
\bibliographystyle{plainnat}
\bibliography{refs}

\end{document}